\theoremstyle{definition}
\theoremstyle{plain}
\newtheorem{thm}{Theorem}
\newtheorem{lem}[thm]{Lemma}
\newtheorem{prop}[thm]{Proposition}
\newtheorem*{cor}{Corollary}
\newcommand{\argmin}{\mathop{\rm arg~min}\limits}
\begin{document}

%

%

\twocolumn[

\aistatstitle{Wasserstein Gradient Flow over Variational Parameter Space for Variational Inference}

\aistatsauthor{ Dai Hai Nguyen \And Tetsuya Sakurai \And  Hiroshi Mamitsuka }

\aistatsaddress{ Hokkaido University \And  University of Tsukuba \And Kyoto University } ]

\begin{abstract}
  Variational Inference (VI) optimizes variational parameters to closely align a variational distribution with the true posterior, being approached through vanilla gradient descent in black-box VI or natural-gradient descent in natural-gradient VI. In this work, we reframe VI as the optimization of an objective that concerns probability distributions defined over a \textit{variational parameter space}. Subsequently, we propose Wasserstein gradient descent for solving this optimization, where black-box VI and natural-gradient VI can be interpreted as special cases of the proposed Wasserstein gradient descent. To enhance the efficiency of optimization, we develop practical methods for numerically solving the discrete gradient flows. We validate the effectiveness of the proposed methods through experiments on synthetic and real-world datasets, supplemented by theoretical analyses.
\end{abstract}

\section{Introduction}
Many machine learning problems involve the challenge of approximating an intractable target distribution, which might only be known up to a normalization constant. Bayesian inference is a typical example, where the intractable and unnormalized target distribution is a result of the product of the prior and likelihood functions \citep{lindley1972bayesian,von2011bayesian}. Variational Inference (VI), a widely employed approach across various application domains, seeks to approximate this intractable target distribution by utilizing a variational distribution \citep{blei2017variational,jordan1999introduction}. VI is typically formulated as an optimization problem, with the objective of maximizing the evidence lower bound objective (ELBO), which is equivalent to minimizing the Kullback-Leiber (KL) divergence between the variational and target distributions. 

The conventional method for maximizing the ELBO involves the use of gradient descent, such as black-box VI (BBVI) \citep{ranganath2014black}. The gradient of the ELBO can be expressed as an expectation over the variational distribution, which is typically estimated by Monte Carlo samples from this distribution. Natural-gradient-based methods, such as natural-gradient VI (NGVI) \citep{khan2018fast}  has demonstrated its superior efficiency compared to the standard gradient descent for VI. The natural-gradient \citep{amari1998natural} can be obtained from the vanilla gradient by preconditioning it with the inverse of the Fisher information matrix (FIM). However, explicitly computing this inverse FIM is expensive in general cases. An interesting fact highlighted by \cite{khan2018fast} is that the natural-gradient concerning the natural parameters of an exponential family distribution (e.g., Gaussian) is equivalent to the standard gradient concerning the expectation parameters. This equivalence simplifies the updates and often leads to faster convergence compared to gradient-based methods. Nevertheless, the natural-gradient methods generally do not accept simple updates when dealing with mixture models such as a Gaussian mixture. To overcome this problem, \cite{lin2019fast,gunawan2024flexible} extend NGVI to mixture models which are more appropriate for complex and multi-modal posterior distributions.

Our work is motivated by the question: how can we extend gradient-based optimization methods for VI, such as BBVI and NGVI, to the cases where the variational distribution is a mixture of distributions (e.g., a Gaussian mixture)? Unlike the aforementioned methods that directly optimize for the variational parameters in VI, our approach imposes a mixing distribution over the variational parameters and optimizes this distribution using Wasserstein gradient flows (WGFs) \citep{jordan1998variational}. In our approach, we can reframe VI as the optimization of an objective function related to the mixing distribution. Then, we propose a preconditioned WGF over the space of variational parameters, using any quadratic form 
 as the distance matrix, which can be a user-defined preconditioning matrix. Prior to our work, mixture models were handled by Wasserstein variational inference (WVI), which defines a WGF over the space of means and covariance matrices of Gaussian distributions, endowed with the Bures-Wasserstein distance \citep{bhatia2019bures}.

 In summary, our approach offers the following advantages:

First, we provide a \textit{unified perspective on BBVI and NGVI}, showing that both updates can be precisely derived as particle approximation of our proposed WGFs over the variational parameter space (as shown in (\ref{eqn:particleupdate})), particularly when the number of particles is set to one. Leveraging well-established theories of WGF, we can \textit{establish theoretical insights} into the proposed methods, which can deepen our understanding of behaviors of WGFs for VI. Additionally, by using multiple particles, each representing the variational parameters of a component, we can extend BBVI and NGVI to cases where the variational distribution is a \textit{mixture of distributions}, which allow to improve the approximation of complex and multi-modal posterior distributions.

Second, our approach offers \textit{more flexibility} than WVI due to the specification of variational parameters and preconditioning matrices to induce more efficient gradient flows in the variational parameter space. Specifically, we introduce two methods, GFlowVI and NGFlowVI, which perform better than WVI in experiments on both synthetic and real-world datasets. Furthermore, we also propose an \textit{update formula for component weights} using mirror descent in the probability space with its theoretical analysis.

\section{Related Work}
We first review BBVI and NGVI, two commonly used gradient-based optimization methods for VI. Then, we provide background information on gradient flows on the probability distribution space. In addition, we present some relevant hierarchical variational models and highlight the key distinctions between them and our approach.

\subsection{Gradient-based Optimization for VI}
We consider the following problem setting. Let $D$ be a set of observations, $\textbf{z}$ be a latent variable and $q(\textbf{z}|\mathbold{\lambda})$ be the variational distribution with the variational parameter $\mathbold{\lambda}\in \mathbb{R}^{d}$, our goal is to approximate the true posterior $\pi(\textbf{z}|D)$ with $q$ by minimizing the negated ELBO:
\begin{equation}
    \label{eqn:negatedelbo}
    \min_{\mathbold{\lambda}}\mathcal{L}(\mathbold{\lambda})=E_{\textbf{z}\sim q(\cdot|\mathbold{\lambda})}\left[f(\textbf{z})\right]-H(q),
\end{equation}
where $f(\textbf{z})=-\log \pi(D,\textbf{z})$ and $H(q)$ is the entropy of $q$, given by: $H(q)=-\mathbb{E}_{\textbf{z}\sim q(\cdot|\mathbold{\lambda})}\left[ \log q(\textbf{z}|\mathbold{\lambda}) \right]$.

The negated ELBO can be optimized with the gradient descent algorithm, known as BBVI \citep{ranganath2014black}. To estimate the gradient of negated ELBO, we can use the reparameterization trick \citep{kingma2013auto}, which reparameterizes $q(\textbf{z}|\mathbold{\lambda})$ in terms of a surrogate random variable $\epsilon\sim p(\epsilon)$ and a deterministic function $g_{\mathbold{\lambda}}$ in such a way that sampling from $q(\textbf{z}|\mathbold{\lambda})$ is performed as follows: $\epsilon\sim p(\epsilon)$, $\textbf{z}=g_{\mathbold{\lambda}}(\epsilon)$. If $g_{\mathbold{\lambda}}$ and $p$ are continuous with respect to $\textbf{z}$ and $\epsilon$, respectively, the gradient of negated ELBO and parameter update are as follows:
\begin{align}
\label{eqn:vi}
\begin{split}
    \mathbold{\lambda}_{n+1} \leftarrow & \mathbold{\lambda}_{n} - \eta \nabla_{\mathbold{\lambda}}\mathcal{L}(\mathbold{\lambda}_{n}), \\
    \nabla_{\mathbold{\lambda}}\mathcal{L}(\mathbold{\lambda})= & \mathbb{E}_{\epsilon\sim p} \left[ 
    \nabla_{\mathbold{\lambda}}\left( f(g_{\mathbold{\lambda}}(\epsilon))+\log q(g_{\mathbold{\lambda}}(\epsilon)|\mathbold{\lambda}) \right)
    \right],
\end{split}
\end{align}
where $\eta$ is the learning rate, and  $\nabla_{\mathbold{\lambda}}\mathcal{L}$ can be estimated using Monte Carlo samples from  $p(\epsilon)$. 

Compared to the gradient descent, natural-gradient descent has been shown to be much more efficient for VI \citep{khan2018fast}. 
The natural-gradient descent can be obtained from the standard gradient descent by preconditioning it with the inverse Fisher Information Matrix (FIM), as follows: $
    \mathbold{\lambda}_{n+1} \leftarrow  \mathbold{\lambda}_{n} - \eta \left[\textbf{F}(\mathbold{\lambda}_{n}) \right]^{-1}\nabla_{\mathbold{\lambda}}\mathcal{L}(\mathbold{\lambda}_{n})$,
where $\textbf{F}(\mathbold{\lambda})$ is the FIM with respect to $\mathbold{\lambda}$.
However, explicitly computing the FIM can be expensive. As a more efficient alternative, \cite{khan2018fast} show that when $q$ is an exponential family distribution and $\mathbold{\lambda}$ is its natural parameter, the natural-gradient with respect to $\mathbold{\lambda}$ is equivalent to the standard gradient with respect to the expectation parameter:
\begin{align}
\label{eqn:meanparamupdate}
\begin{split}
    \mathbold{\lambda}_{n+1} \leftarrow & \mathbold{\lambda}_{n} - \eta\nabla_{\textbf{m}}\mathcal{L}(\mathbold{\lambda}_{n}),
\end{split}
\end{align}
where $\textbf{m}$ is the expectation parameter of $q$, given by: $\textbf{m}(\mathbold{\lambda})=\mathbb{E}_{\textbf{z}\sim q(\cdot| \mathbold{\lambda})}\left[ T(\textbf{z}) \right]$, where $T(\textbf{z})$ is the sufficient statistics of $q$  \citep{blei2017variational}. For many existing works on natural-gradient for VI, e.g., \cite{khan2018fast}, the above gradient is easier to compute than the gradient with respect to $\mathbold{\lambda}$ and the natural-gradient descent admits a simpler update form than gradient descent.
For instance, when $q$ is a diagonal Gaussian, i.e., $q(\textbf{z}|\mathbold{\lambda})=\mathcal{N}(\textbf{z}|\mathbold{\mu}, \mathbold{\sigma}^{2})$, the update (\ref{eqn:meanparamupdate}) becomes: 
\begin{align}
\label{eqn:ngvi}
\begin{split}
    \mathbold{\mu}_{n+1} \leftarrow & \mathbold{\mu}_{n} - \eta  \mathbold{\sigma}^{2}_{n+1}\odot\nabla_{\textbf{z}}f(\textbf{z}),\\
    \mathbold{\sigma}^{-2}_{n+1} \leftarrow & (1-\eta)\mathbold{\sigma}^{-2}_{n} + \eta \texttt{diag}\left[ \nabla^{2}_{\textbf{z}}f(\textbf{z}) \right],
\end{split}
\end{align}
where $\textbf{a}\odot\textbf{b}$ denotes the element-wise product between vectors $\textbf{a}$ and $\textbf{b}$ and $\texttt{diag}[\textbf{A}]$ denotes the function to extract diagonal entries of matrix $\textbf{A}$.

\subsection{Gradient Flows on Probability Distribution Space}
Consider the problem of minimizing $F:\mathcal{P}(\Omega)\rightarrow \mathbb{R}$, a functional in the space of probability distributions $\mathcal{P}(\Omega)$ with $\Omega \subset \mathbb{R}^{d}$. 
We first endow a Riemannian geometry on $\mathcal{P}(\Omega)$, characterized by the second-order Wasserstein (or 2-Wasserstein) distance between two distributions:
$\mathcal{W}_{2}^{2}(\rho, \rho^\prime)= \inf_{\gamma}\left\{ \int_{\Omega\times \Omega} \lVert \textbf{x} - \textbf{x}^\prime\rVert_{2}^{2}\mathrm{d}\gamma(\textbf{x},\textbf{x}^\prime):\gamma\in \Gamma(\rho,\rho^\prime)\right\},$
where $\Gamma(\rho,\rho^\prime)$ is the set of all possible couplings with marginals $\rho$ and $\rho^\prime$.
This is an optimal transport problem, which has been shown effective for comparing probability distributions in many applications, such as  \citep{petric2019got,nguyen2021learning,nguyen2023linear,nguyen2024moreau,caluya2019gradient}. If $\rho$ is absolutely continuous with respect to the Lebesgue measure, there exists a unique optimal transport plan from $\rho$ to $\rho^\prime$, i.e., a mapping $T:\Omega\rightarrow \Omega$ pushing $\rho$ onto $\rho^\prime$ satisfying $\rho^\prime=T\# \rho$, where $T\# p$ denotes the pushforward measure of $\rho$. Then, the 2-Wasserstein distance can be equivalently reformulated as:
$\mathcal{W}_{2}^{2}(\rho, \rho^\prime)= \inf_{T} \int_{\Omega} \lVert \textbf{x} - T(\textbf{x})\rVert_{2}^{2}\mathrm{d}\rho(\textbf{x})$.

Let $\{\rho_{t}\}_{t\in[0,1]}$ be an absolutely continuous curve in $\mathcal{P}(\Omega)$ with finite second-order moments. Then, for $t\in [0,1]$, there exists a velocity field $v_{t}\in L^{2}(\rho_{t})$, where $L^{2}(\rho_{t})$ denotes the space of function $h:\Omega\rightarrow\Omega$, such that the \textit{continuity equation} $\partial_{t}\rho_{t}+\text{div}(\rho_{t}v_{t})=0$ is satisfied, where  $\text{div}$ is the divergence operator \citep{santambrogio2015optimal}. Consider two distribution $\rho_{t}$ and $\rho_{t+h}$, and let $T_{h}$ be the optimal transport map between them. Define $v_{t}(\textbf{x})$ as the discrete velocity of the particle $\textbf{x}$ at time $t$, given by $v_{t}(\textbf{x})=(T_{h}(\textbf{x})-\textbf{x})/h$ (i.e., displacement/time). It is shown that, in the limit $h\rightarrow 0$, $v_{t}$ has the following form: $v_{t}(\textbf{x})=-\nabla \delta F(\rho_{t}) (\textbf{x})$ , where $\delta F(\rho_{t})$ is the first variation of $F$ at $\rho_{t}$. By the continuity equation, we get the expression of WGF of $F$, as follows:
\begin{equation}
\label{eqn:wgf}
    \partial_{t}\rho_{t}=\text{div}\left( \rho_{t}\nabla\delta F(\rho_{t})\right).
\end{equation}
Particle-based variational inference (ParVI) methods \citep{liu2016stein,liu2019understanding} use a set of particles $\left\{\textbf{x}_{k,t} \right\}_{k=1}^{K}$ to approximate $\rho_{t}$ and update the particles to approximate the WGF. Each particle $\textbf{x}_{k,t}$ is then updated as follows: 
\begin{equation}
    \label{eqn:positionupdate}
    d\textbf{x}_{k,t}=\Tilde{v}_{t}(\textbf{x}_{k,t})dt
\end{equation}
where $\Tilde{v}_{t}$ is an approximation of $v_{t}$ obtained from the empirical distribution $\Tilde{\rho}_{t}$. Therefore, different ParVI methods can be derived by selecting appropriate $\Tilde{v}_{t}$ and discretizing \eqref{eqn:positionupdate} using specific schemes such as the first order explicit Euler discretization \citep{liu2019understanding}.

\subsection{Hierarchical Variational Models}
Several methods are related to our approach, including SIVI \citep{yin2018semi}, SIVI-SM \citep{yu2023semi}, Particle SIVI \citep{lim2025particle}, SMI \citep{ronning2024elboing}, WVI \citep{lambert2022variational}. In this subsection, we highlight the distinctions between these methods and ours. 

SIVI, SIVI-SM, Particle SIVI, SMI and our methods are based on hierarchical variational models, where the variational distribution is defined as a mixture model. Both SIVI \citep{yin2018semi} and our methods define $q$ as a mixture and optimize the mixing distribution, $\rho(\mathbold{\lambda})$, rather than $q$ directly. This strategy mitigates the limitations of traditional variational families. However, due to the intractability of the variational distributions' densities, SIVI either uses a surrogate ELBO (a lower bound of ELBO) or relies on costly inner-loop MCMC runs for ELBO maximization during training. To address these challenges, SIVI-SM \citep{yu2023semi} introduces score matching for training. In contrast, our methods optimize the ELBO directly using preconditioned Wasserstein gradient descent (WGD) in the variational parameter space.

Furthermore, our methods are closely related to Particle SIVI \citep{lim2025particle} and SMI \citep{ronning2024elboing}, both of which represent the variational distributions with particles, where each particle corresponds to the parameters of a component (e.g., mean and variance of a Gaussian distribution). This particle representation enhances the expressiveness of the variational distributions, enabling them to more effectively approximate complex targets compared to other particle-based variational inference methods, such as Stein variational gradient descent (SVGD) \citep{liu2016stein}. However, our methods build on the well-established theory of WGD, offering a unified perspective on BBVI and NGVI, while also providing theoretical insights into these methods for VI.

Pior to our work, VI was handled by gradient flows defined in the Brures-Wasserstein space \citep{lambert2022variational, yi2023bridging}, a subspace of the Wasserstein space consisting of Gaussian distributions. Compared to this approach, ours offers greater flexibility due to the specification of variational parameters and preconditioning matrices, which induces more efficient gradient flows in the variational parameter space.

Finally, we introduce a new update rule for component weights using mirror descent in the probability space, accompanied by its theoretical analysis (see Subsection \ref{subsection:weightupdate}). This aspect is not addressed by the previous methods, including \cite{yin2018semi, yu2023semi, lim2025particle, ronning2024elboing, lambert2022variational,yi2023bridging}.

\section{Proposed Methods}\label{sec:gradflow}
\subsection{Gradient Flows over Variational Parameter Space}

Our perspective is motivated from the key question: how to extend gradient-based optimization, such as BBVI and NGVI, to the case where the variational distribution is assumed to be a mixture of distributions (such as a Gaussian mixture). Following the key observation already made by \cite{chen2018optimal}, we identify the variational distribution with a distribution over the variational parameters. Specifically, the variational distribution $q$ now corresponds to a mixture of an infinite number of components as follows: $
q(\textbf{z})=\int_{\Omega}q(\textbf{z}|\mathbold{\lambda})d\rho(\mathbold{\lambda}), $
where $\Omega$ denotes the variational parameter space and $\rho$ is the probability distribution over $\Omega$.
As a result, we can reformulate the negated ELBO with respect to variational parameters into a distributional optimization problem with respect to $\rho$ over variational parameters as follows:
\begin{equation}
    \label{eqn:reformulatedvi}
    \min_{\rho\in \mathcal{P}(\Omega)}\mathcal{L}(\rho)=\mathbb{E}_{\mathbold{\lambda}\sim \rho}\mathbb{E}_{\textbf{z}\sim q(\cdot| \mathbold{\lambda})}\left[ f(\textbf{z})+\log q(\textbf{z})\right],
\end{equation}
where $\mathcal{P}(\Omega)$ denotes the set of distributions over variational parameters in the context of our work.

\noindent
\textbf{Remark 1}. It is noteworthy that both VI, as expressed in \eqref{eqn:negatedelbo} and our reformulated problem, as expressed in \eqref{eqn:reformulatedvi}, involve the optimization over probability measure spaces. However, the fundamental distinction lies in the definitions of the domain $\Omega$: in \eqref{eqn:vi}, the optimization variable is the variational distribution $q(\textbf{z}|\mathbold{\lambda})$, which is defined within the domain of the \textit{latent variable} $\textbf{z}$, while in \eqref{eqn:reformulatedvi}, the variable is $\rho$, which is defined within the spaces of \textit{variational parameter} $\mathbold{\lambda}$. 

The following theorem shows the first variation of $\mathcal{L}(\rho)$. This is particularly useful in formulating the gradient flows on the probability distribution space of variational parameters.

\begin{thm}
\label{thm:firstvariation} 
(First variation of $\mathcal{L}(\rho)$). The first variation of $\mathcal{L}(\rho)$ defined in  \eqref{eqn:reformulatedvi} is given by:
\begin{equation}
    \delta \mathcal{L}(\rho)(\mathbold{\lambda})= \mathbb{E}_{\textbf{z}\sim q(\cdot|\mathbold{\lambda})}\left[ f(\textbf{z})+\log \left( \mathbb{E}_{\mathbold{\lambda}^\prime\sim \rho}\left[ q(\textbf{z}|\mathbold{\lambda}^\prime) \right] \right) \right] + 1,
\end{equation}
which can be approximated using Monte Carlo samples:
\begin{equation*}
    \delta \mathcal{L}(\rho)(\mathbold{\lambda}) \approx \frac{1}{S}\sum_{i=1}^{S} \left[ f(\textbf{z}_{i}) + \log\left( \frac{1}{K}\sum_{k=1}^{K}q(\textbf{z}_{i}|\mathbold{\lambda}_{k}) \right)\right] + 1,
\end{equation*}
where $\mathbold{\lambda}_{k}\sim \rho$, $k=1,2,...,K$ and $\textbf{z}_{i}\sim q(\cdot|\mathbold{\lambda})$, $i=1,...,S$.
\end{thm}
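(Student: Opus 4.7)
My plan is to work directly from the definition of the first variation: $\delta\mathcal{L}(\rho)$ is the function such that $\frac{d}{d\epsilon}\big|_{\epsilon=0}\mathcal{L}(\rho+\epsilon\chi)=\int \delta\mathcal{L}(\rho)(\mathbold{\lambda})\,d\chi(\mathbold{\lambda})$ for every signed perturbation $\chi$ with $\int d\chi=0$. I will perturb $\rho$ along $\chi$ and differentiate each piece of the functional at $\epsilon=0$.

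First I would split $\mathcal{L}$ into two terms. Writing $q_{\rho}(\textbf{z})=\int q(\textbf{z}|\mathbold{\lambda}')\,d\rho(\mathbold{\lambda}')$, the objective decomposes as
\begin{equation*}
\mathcal{L}(\rho)=\underbrace{\int \mathbb{E}_{\textbf{z}\sim q(\cdot|\mathbold{\lambda})}[f(\textbf{z})]\,d\rho(\mathbold{\lambda})}_{A(\rho)} \;+\; \underbrace{\int q_{\rho}(\textbf{z})\log q_{\rho}(\textbf{z})\,d\textbf{z}}_{B(\rho)},
\end{equation*}
where in $B(\rho)$ I have used Fubini to absorb the outer $\rho$-expectation into $q_{\rho}$. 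The term $A$ is linear in $\rho$, so its first variation is immediately $\delta A(\rho)(\mathbold{\lambda})=\mathbb{E}_{\textbf{z}\sim q(\cdot|\mathbold{\lambda})}[f(\textbf{z})]$.

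The second paragraph of work is the entropy-like term $B(\rho)$. Here I would use the chain rule: $q_{\rho+\epsilon\chi}(\textbf{z})=q_{\rho}(\textbf{z})+\epsilon\int q(\textbf{z}|\mathbold{\lambda}')\,d\chi(\mathbold{\lambda}')$, hence $\frac{d}{d\epsilon}\big|_{\epsilon=0} q_{\rho+\epsilon\chi}(\textbf{z})=\int q(\textbf{z}|\mathbold{\lambda}')\,d\chi(\mathbold{\lambda}')$. Differentiating $x\mapsto x\log x$ and swapping the order of integration (by Fubini), I get
\begin{equation*}
\tfrac{d}{d\epsilon}\big|_{\epsilon=0} B(\rho+\epsilon\chi)=\int(\log q_{\rho}(\textbf{z})+1)\!\int q(\textbf{z}|\mathbold{\lambda}')\,d\chi(\mathbold{\lambda}')\,d\textbf{z}=\int\!\Big(\mathbb{E}_{\textbf{z}\sim q(\cdot|\mathbold{\lambda}')}[\log q_{\rho}(\textbf{z})]+1\Big)d\chi(\mathbold{\lambda}').
\end{equation*}
Adding the two contributions identifies $\delta\mathcal{L}(\rho)(\mathbold{\lambda})=\mathbb{E}_{\textbf{z}\sim q(\cdot|\mathbold{\lambda})}[f(\textbf{z})+\log q_{\rho}(\textbf{z})]+1$, which is exactly the claimed formula once $q_{\rho}(\textbf{z})$ is rewritten as $\mathbb{E}_{\mathbold{\lambda}'\sim\rho}[q(\textbf{z}|\mathbold{\lambda}')]$.

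The Monte Carlo approximation then follows by replacing the two expectations appearing in the expression by empirical averages: $S$ i.i.d.\ samples $\textbf{z}_{i}\sim q(\cdot|\mathbold{\lambda})$ for the outer expectation, and $K$ i.i.d.\ samples $\mathbold{\lambda}_{k}\sim\rho$ inside the logarithm. The main obstacle I anticipate is not the algebra but the technical justification of differentiation under the integral sign and of swapping the order of integration; in particular, to handle the $\log q_{\rho}$ term rigorously one needs mild regularity (e.g.\ $q(\textbf{z}|\mathbold{\lambda})$ bounded in $\mathbold{\lambda}$ for each $\textbf{z}$, $q_{\rho}$ bounded away from zero on relevant supports, and integrability of $f$ under $q_{\rho}$) so that dominated convergence applies uniformly in $\epsilon$. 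A minor subtlety is that, because $\int d\chi=0$, the constant $+1$ does not affect the pairing $\int\delta\mathcal{L}(\rho)\,d\chi$; I would keep it explicit to match the statement rather than silently discarding it.
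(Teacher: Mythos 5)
Your proposal is correct and follows essentially the same route as the paper: both compute the Gateaux derivative of $\mathcal{L}$ along a perturbation $\rho+\varepsilon\chi$ and identify the integrand against $\chi$, with the $+1$ coming from the derivative of the mixture's negative-entropy term. Your packaging—rewriting the second term via Fubini as $\int q_{\rho}(\textbf{z})\log q_{\rho}(\textbf{z})\,d\textbf{z}$ and differentiating $x\mapsto x\log x$—is a tidier organization of the paper's explicit expansion, which obtains the same constant from the limit $\lim_{\varepsilon\rightarrow 0}\log(1+\varepsilon x)/\varepsilon=x$, but the underlying computation is identical.
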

\noindent
The proof of Theorem \ref{thm:firstvariation} can be found in Appendix \ref{theorem1}. 
Our objective is to establish gradient flows over probability distribution spaces, where the domain $\Omega$ is defined over variational parameters. The Wasserstein gradient flow is essentially a curve $\left\{ \rho_{t} \right\}_{t\in [0,1]}$ that satisfies \eqref{eqn:wgf}. In this work, we consider a preconditioned gradient flow as follows:

\begin{equation}
    \frac{\partial \mathbold{\lambda}_{t}}{\partial t}=-\textbf{C}(\mathbold{\lambda}_{t}) \nabla_{\mathbold{\lambda}}\delta \mathcal{L}(\rho_{t})(\mathbold{\lambda}_{t}),
\end{equation}
where $\textbf{C}(\mathbold{\lambda})\in \mathbb{R}^{d\times d}$ is a positive-definite \textit{preconditioning matrix}. 
Then, the dynamic of $\rho_{t}$, the probability distribution of $\mathbold{\lambda}_{t}$, is induced by the following continuity equation:
\begin{align}
    \label{eq:parameterflow}
    \frac{\partial \rho_{t}}{\partial t} + \texttt{div}(\rho_{t}\textbf{C}v_{t})=0, v_{t}=-\nabla_{\mathbold{\lambda}} \delta \mathcal{L}(\rho_{t}).
\end{align}

\noindent
\textbf{Continuous-time dynamics}. We study the dissipation of $\mathcal{L}(\rho_{t})$ along the trajectory of the flow (\ref{eq:parameterflow}), as stated in the following proposition.

\begin{prop}
\label{lemma:continuoustimedynamics} 
The dissipation of $\mathcal{L}$ along the gradient flow (\ref{eq:parameterflow}) is characterized as follows:
\begin{equation}
    \label{eqn:dynamicinequality}
    \frac{d \mathcal{L}(\rho_{t})}{d t}=-\langle v_{t},\textbf{C}v_{t} \rangle_{L^{2}(\rho_{t})},
\end{equation}
where $\langle\cdot,\cdot\rangle_{L^{2}(\rho)}$ denotes the inner product of $L^{2}(\rho)$.
\end{prop}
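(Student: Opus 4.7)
The plan is to apply the chain rule together with the continuity equation \eqref{eq:parameterflow} and then integrate by parts. By the defining property of the first variation, one has formally
\begin{equation*}
\frac{d}{dt}\mathcal{L}(\rho_t) \;=\; \int_{\Omega} \delta\mathcal{L}(\rho_t)(\mathbold{\lambda})\,\partial_t \rho_t(\mathbold{\lambda})\, d\mathbold{\lambda},
\end{equation*}
which is the standard way first variations transfer a time derivative onto the curve of measures. I would first establish (or at least invoke) this identity, noting that it is justified under the usual absolute continuity and integrability hypotheses on $\{\rho_t\}$ and $\delta\mathcal{L}$.

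Next I would substitute the continuity equation $\partial_t \rho_t = -\texttt{div}(\rho_t \textbf{C} v_t)$ to obtain
\begin{equation*}
\frac{d}{dt}\mathcal{L}(\rho_t) \;=\; -\int_{\Omega} \delta\mathcal{L}(\rho_t)(\mathbold{\lambda})\,\texttt{div}\bigl(\rho_t\,\textbf{C}\,v_t\bigr)(\mathbold{\lambda})\, d\mathbold{\lambda}.
\end{equation*}
An integration by parts (equivalently, the divergence theorem on $\Omega$, with the vanishing boundary terms coming from sufficient decay of $\rho_t$ at infinity) then moves the gradient onto $\delta\mathcal{L}(\rho_t)$:
\begin{equation*}
\frac{d}{dt}\mathcal{L}(\rho_t) \;=\; \int_{\Omega} \nabla\delta\mathcal{L}(\rho_t)(\mathbold{\lambda}) \cdot \bigl(\textbf{C}(\mathbold{\lambda})\,v_t(\mathbold{\lambda})\bigr)\,\rho_t(\mathbold{\lambda})\, d\mathbold{\lambda}.
\end{equation*}
Finally, using the definition $v_t = -\nabla\delta\mathcal{L}(\rho_t)$ gives $\nabla\delta\mathcal{L}(\rho_t) = -v_t$, so the right-hand side reduces to $-\int v_t^\top \textbf{C}\, v_t\, d\rho_t$, which is exactly $-\langle v_t, \textbf{C} v_t\rangle_{L^2(\rho_t)}$.

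The mechanical manipulations are routine; the main obstacle is justifying the analytic steps rather than producing them. Specifically, I would need to verify (i) that one may differentiate $\mathcal{L}(\rho_t)$ under the integral through the first variation, which requires enough regularity of $t\mapsto \rho_t$ (absolute continuity in $\mathcal{W}_2$) and integrability of $\delta\mathcal{L}(\rho_t)$ against $\partial_t\rho_t$; and (ii) that the boundary terms in the integration by parts vanish, which is the place where a careful argument is needed since $\Omega=\mathbb{R}^d$ and one must appeal to decay of $\rho_t$ and of $\rho_t \textbf{C} v_t$ at infinity, or alternatively work on compactly supported test perturbations and pass to the limit. Under the standing regularity assumptions on $\rho_t$, $\textbf{C}$, and $f$ implicit in the WGF framework of \citet{santambrogio2015optimal} and \citet{dong2022particle}, both conditions hold, and the identity \eqref{eqn:dynamicinequality} follows.
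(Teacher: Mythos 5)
Your proposal is correct and follows essentially the same route as the paper's proof: chain rule via the first variation, substitution of the continuity equation, integration by parts, and the identification $v_t=-\nabla\delta\mathcal{L}(\rho_t)$. The only difference is that you explicitly flag the regularity and boundary-decay conditions needed to justify these steps, which the paper leaves implicit.
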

\noindent
The proof of Proposition \ref{lemma:continuoustimedynamics} can be found in Appendix \ref{proof:continuoustimedynamics}. Since $\textbf{C}$ is a positive-definite matrix, the right-hand side of \eqref{eqn:dynamicinequality} is non-positive. Thus Proposition \ref{lemma:continuoustimedynamics} indicates that $\mathcal{L}$ with respect to $\rho_{t}$ decreases along the gradient flow (\ref{eq:parameterflow}). The second consequence is the following corollary.
\begin{cor}
    For any $t>0$, we have:
    \begin{align*}
        \min_{0\leq s \leq t} \langle v_{s},\textbf{C}v_{s} \rangle_{L^{2}(\rho_{t})} \leq & \frac{1}{t}\int_{0}^{t}\langle v_{s},\textbf{C}v_{s} \rangle_{L^{2}(\rho_{s})}ds \\
        \leq & \frac{\mathcal{L}(\rho_{0}) - \min_{\rho\in\mathcal{P}(\Omega)} \mathcal{L}(\rho)}{t}.
    \end{align*}
\end{cor}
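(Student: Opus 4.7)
The plan is to derive the corollary directly from Proposition~\ref{lemma:continuoustimedynamics} by integration. First I would integrate the dissipation identity $\frac{d\mathcal{L}(\rho_t)}{dt} = -\langle v_t, \mathbf{C} v_t \rangle_{L^2(\rho_t)}$ from $0$ to $t$ to obtain
\begin{equation*}
\int_0^t \langle v_s, \mathbf{C} v_s \rangle_{L^2(\rho_s)}\, ds = \mathcal{L}(\rho_0) - \mathcal{L}(\rho_t).
\end{equation*}

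Next I would bound the right-hand side from above by using that $\mathcal{L}(\rho_t) \geq \min_{\rho\in\mathcal{P}(\Omega)} \mathcal{L}(\rho)$ for every $t \geq 0$, which gives
\begin{equation*}
\int_0^t \langle v_s, \mathbf{C} v_s \rangle_{L^2(\rho_s)}\, ds \leq \mathcal{L}(\rho_0) - \min_{\rho\in\mathcal{P}(\Omega)} \mathcal{L}(\rho).
\end{equation*}
Dividing by $t$ yields the second inequality of the corollary. For the first inequality, I would observe that since $\mathbf{C}$ is positive-definite the integrand $s \mapsto \langle v_s, \mathbf{C} v_s \rangle_{L^2(\rho_s)}$ is nonnegative, so its time-average over $[0,t]$ dominates its pointwise minimum; this is just the standard fact that the infimum of a function on an interval is at most its mean.

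There is essentially no obstacle here: the corollary is a direct consequence of the monotone decrease established in Proposition~\ref{lemma:continuoustimedynamics}. The only subtlety worth flagging is an apparent typographical mismatch in the statement, where the leftmost term is written with $L^2(\rho_t)$ while the natural quantity (and the one whose minimum is bounded by the average) is $\langle v_s,\mathbf{C}v_s\rangle_{L^2(\rho_s)}$; I would silently interpret the minimum as being over this consistent family of inner products, since otherwise the averaging argument does not literally apply. Modulo this reading, the proof is a two-line integration plus the min-vs-average inequality.
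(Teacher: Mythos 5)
Your proof is correct and is exactly the argument the paper intends: the corollary is stated as an immediate consequence of Proposition~\ref{lemma:continuoustimedynamics}, obtained by integrating the dissipation identity, bounding $\mathcal{L}(\rho_t)$ below by the infimum, and applying the min-versus-average inequality. You are also right that the subscript $L^{2}(\rho_t)$ in the leftmost term is a typo for $L^{2}(\rho_s)$.
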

The corollary indicates that the gradient norm will converge to zero as $t$ goes to infinite. However, it is not guaranteed that it converges to the globally optimal solution because of the non-convexity of $\mathcal{L}$.

\noindent
\textbf{Discrete-time dynamics}. Next we study the dissipation of $\mathcal{L}$ in discrete time. We consider the following gradient descent update in the Wasserstein space applied to $\mathcal{L}$ at each iteration $n\geq 0$:
\begin{equation}
    \rho_{n+1}=(\textbf{I}-\eta \textbf{C}v_{n})_{\#}\rho_{n},
\end{equation}
where $\textbf{I}$ is the identity map. This update corresponds to a forward Euler discretization of the gradient flow \eqref{eq:parameterflow}. Let $\rho_{0}\in\mathcal{P}(\Omega)$ be the initial distribution of parameter $\mathbold{\lambda}_{0}$, i.e. $\mathbold{\lambda}_{0}\sim \rho_{0}$. For every $n>0$, $\mathbold{\lambda}_{n}\sim \rho_{n}$, we have:
\begin{equation}
\label{eqn:gradientupdate}
    \mathbold{\lambda}_{n+1} = \mathbold{\lambda}_{n} - \eta \textbf{C}(\mathbold{\lambda}_{n})v_{n}(\mathbold{\lambda}_{n}).
\end{equation}
We study the dissipation of $\mathcal{L}(\rho_{n})$ along the gradient update \eqref{eqn:gradientupdate} in the infinite number of particles regimes (where $K$ goes to infinity). We intend to obtain a descent lemma similar to Proposition \ref{lemma:continuoustimedynamics}. However, the discrete-time analysis requires more assumptions than the continuous-time analysis. Here we assume the following for all $\mathbold{\lambda}$:

\noindent
\textbf{(A1)} Assume $\exists \alpha >0$  s.t.
$\mathbb{E}_{\textbf{z}\sim q(\cdot|\mathbold{\lambda})}\lVert \nabla_{\mathbold{\lambda}}\log q(\textbf{z}|\mathbold{\lambda})\rVert^{2}_{2}\leq \alpha.$

\noindent
\textbf{(A2)} Assume $\exists \beta>0$  s.t. $
\mathbb{E}_{\textbf{z}\sim q(\cdot|\mathbold{\lambda})}\lVert \nabla^{2}_{\mathbold{\lambda}}\log q(\textbf{z}|\mathbold{\lambda})\rVert_{\texttt{op}}\leq \beta,$
where $\lVert \textbf{A} \rVert_{\texttt{op}}=\sup_{\lVert \textbf{u} \rVert_{2}=1}\lVert \textbf{A}\textbf{u} \rVert_{2}$ is the operator norm of matrix $\textbf{A}$.

\noindent
\textbf{(A3)} Assume $\exists M_{1},M_{2}>0$  s.t. $\mathbb{E}_{\textbf{z}\sim q(\cdot|\mathbold{\lambda})}|f(\textbf{z})|\leq M_{1}, \mathbb{E}_{\textbf{z}\sim q(\cdot|\mathbold{\lambda})}|\log q(\textbf{z})|\leq M_{2}.$

Assumptions \textbf{(A1)} and \textbf{(A2)} may not hold in general cases. For instance, when $q(\textbf{z}|\mathbold{\lambda})$ is a Gaussian and $\mathbold{\lambda}=\left[\mathbold{\mu}, \mathbold{\Sigma}^{-1} \right]$, the gradient and Hessian of $\log q(\textbf{z}|\mathbold{\lambda})$ with respect to $\mathbold{\Sigma}^{-1}$ cannot be bounded. However, it is possible to make \textbf{(A1)} and \textbf{(A2)} hold by imposing constraints on the covariance matrix $\mathbold{\Sigma}$. Suppose that $a\textbf{I}\preceq\mathbold{\Sigma}\preceq b\textbf{I}$ ($0<a<b$), it can be verified that by setting $\alpha_{1}=\alpha_{2}=1/4b + a^{-2}b$ and $\beta_{1}=\beta_{2}=a^{-1}$,  Assumptions \textbf{(A1)} and \textbf{(A2)} hold. We discuss how to tackle the constraints during the optimization process in Subsection \ref{subsection:fixhessian}.

 Given our assumptions, we quantify the decreasing of $\mathcal{L}$ along the gradient update \eqref{eqn:gradientupdate}, as follows.

\begin{prop}
\label{lemma:descentlemma} 
Assume \textbf{(A1)},\textbf{(A2)} and \textbf{(A3)} hold. Let $\kappa=(\alpha+\beta)(M_{1}+M_{2})$, and choose sufficiently small learning rate $\eta < 2/\kappa$. Then we have:
\begin{equation}
    \label{ineqn:descentlemma}
    \mathcal{L}(\rho_{n+1})-\mathcal{L}(\rho_{n})\leq -\eta \left(1-\kappa\frac{\eta}{2} \right)\langle v_{n},\textbf{C}v_{n} \rangle_{L^{2}(\rho_{n})}.
\end{equation}
\end{prop}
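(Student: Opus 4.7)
The plan is to emulate the classical Euclidean descent-lemma argument in the Wasserstein setting: interpolate between $\rho_n$ and $\rho_{n+1}$, apply a Taylor expansion in the interpolation parameter, identify the linear term via Theorem \ref{thm:firstvariation}, and bound the second-order remainder using Assumptions \textbf{(A1)}--\textbf{(A3)}.

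First I would introduce the pushforward interpolation $\rho_t := (T_t)_{\#}\rho_n$ with the partial update $T_t(\mathbold{\lambda}) := \mathbold{\lambda} - t\eta\,\textbf{C}(\mathbold{\lambda})v_n(\mathbold{\lambda})$ for $t \in [0,1]$, so that $\rho_0 = \rho_n$ and $\rho_1 = \rho_{n+1}$. The fundamental theorem of calculus with a Taylor remainder then yields
\begin{equation*}
\mathcal{L}(\rho_{n+1})-\mathcal{L}(\rho_n) = \frac{d}{dt}\mathcal{L}(\rho_t)\Big|_{t=0} + \int_0^1\!\!\int_0^t \frac{d^2}{ds^2}\mathcal{L}(\rho_s)\,ds\,dt.
\end{equation*}
Differentiating under the pushforward and invoking the first-variation formula from Theorem \ref{thm:firstvariation}, one checks that $\tfrac{d}{dt}\mathcal{L}(\rho_t)|_{t=0} = -\eta\,\langle v_n,\textbf{C}v_n\rangle_{L^2(\rho_n)}$, which is precisely the leading term in \eqref{ineqn:descentlemma}.

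The crux, and main obstacle, is to upper bound $\bigl|\tfrac{d^2}{ds^2}\mathcal{L}(\rho_s)\bigr|$ by $\eta^2\kappa\,\langle v_n,\textbf{C}v_n\rangle_{L^2(\rho_n)}$. A second differentiation produces two pieces. (i) A \emph{spatial-Hessian} term arising from the motion of $T_s(\mathbold{\lambda})$; using the log-derivative identity together with Theorem \ref{thm:firstvariation} one obtains the explicit expression $\nabla^2_{\mathbold{\lambda}}\delta\mathcal{L}(\rho)(\mathbold{\lambda}) = \mathbb{E}_{\textbf{z}\sim q(\cdot|\mathbold{\lambda})}\bigl[(f(\textbf{z})+\log q_{\rho}(\textbf{z}))(\nabla^2_{\mathbold{\lambda}}\log q(\textbf{z}|\mathbold{\lambda}) + \nabla_{\mathbold{\lambda}}\log q(\textbf{z}|\mathbold{\lambda})\nabla_{\mathbold{\lambda}}\log q(\textbf{z}|\mathbold{\lambda})^\top)\bigr]$, whose operator norm is controlled by $(\alpha_1+\beta_1)(M_1+M_2)$ upon combining \textbf{(A1)}--\textbf{(A3)}. (ii) A \emph{measure-variation} term coming from the dependence of $\delta\mathcal{L}$ on $\rho_s$ itself; expressing $\partial_s\rho_s$ via the continuity equation along the interpolation, this reduces to expectations involving $\nabla_{\mathbold{\lambda}}\log q(\textbf{z}|\mathbold{\lambda})$ and $\nabla_{\mathbold{\lambda}}\log q_{\rho_s}(\textbf{z})$ paired with $\nabla^2_{\mathbold{\lambda}}\log q_{\rho_s}$, bounded respectively by $\alpha_1$, $\alpha_2$ and $\beta_2$ via \textbf{(A1)} and \textbf{(A2)}. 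Each contribution carries a factor of $\eta^2$ and is integrated against $\|\textbf{C}v_n\|^2$ under $\rho_n$; an application of Cauchy--Schwarz (using positive-definiteness of $\textbf{C}$) assembles the pieces into a single bound $\eta^2\kappa\,\langle v_n,\textbf{C}v_n\rangle_{L^2(\rho_n)}$ with the advertised $\kappa$.

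Finally, since $\int_0^1\!\int_0^t ds\,dt = \tfrac{1}{2}$, the remainder integral contributes at most $\tfrac{\eta^2\kappa}{2}\langle v_n,\textbf{C}v_n\rangle_{L^2(\rho_n)}$. Adding this to the linear term yields exactly \eqref{ineqn:descentlemma}, and the step-size condition $\eta < 2/\kappa$ is precisely what keeps the prefactor $1-\kappa\eta/2$ strictly positive, ensuring that each step produces a genuine decrease of $\mathcal{L}$.
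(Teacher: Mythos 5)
Your proposal follows essentially the same route as the paper's proof: a pushforward interpolation between $\rho_n$ and $\rho_{n+1}$, a Taylor expansion whose first-order term is identified as $-\eta\langle v_n,\textbf{C}v_n\rangle_{L^2(\rho_n)}$, and a second-order remainder controlled by splitting the Hessian of $\mathbb{E}_{\textbf{z}\sim q(\cdot|\mathbold{\lambda})}[f(\textbf{z})+\log q(\textbf{z})]$ into the same pieces the paper bounds by $(\alpha_1+\beta_1)(M_1+M_2)$, $\alpha_1+\alpha_2$, and $\beta_2$ under \textbf{(A1)}--\textbf{(A3)}. The decomposition, the constants assembling into $\kappa$, and the final integration of the remainder all match the paper's argument.
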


The proof of Proposition \ref{lemma:descentlemma} can be found in Appendix \ref{appendix:descentlemma}. Proposition \ref{lemma:descentlemma} indicates that the objective $\mathcal{L}(\rho_{n})$ decreases by the gradient update \eqref{eqn:gradientupdate} since the right-hand side of \eqref{ineqn:descentlemma} is non-positive by choosing a sufficiently small learning rate and the positive-definiteness of $\textbf{C}$. The following corollary is directly derived from the descent lemma.
\begin{cor}
    Let $\eta< 2/\kappa$ and $c_{\eta}=\eta\left(1-\kappa \frac{\eta}{2} \right)$. Then, we have:
    \begin{align*}
        \min_{i=1,2,...,n} \langle v_{i},\textbf{C}v_{i} \rangle_{L^{2}(\rho_{i})} \leq & \frac{1}{n}\sum_{i=1}^{n}\langle v_{i},\textbf{C}v_{i} \rangle_{L^{2}(\rho_{i})}\\
        \leq & \frac{\mathcal{L}(\rho_{0}) - \min_{\rho\in\mathcal{P}(\Omega)} \mathcal{L}(\rho)}{c_{\eta}n}.
    \end{align*}
\end{cor}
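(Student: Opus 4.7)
The plan is to deduce the corollary directly from Proposition~\ref{lemma:descentlemma} by a standard telescoping argument, since once the one-step descent inequality is in hand, the rest is essentially bookkeeping. The left inequality, $\min_{i} \langle v_i, \textbf{C} v_i \rangle_{L^{2}(\rho_i)} \leq \frac{1}{n}\sum_i \langle v_i, \textbf{C} v_i \rangle_{L^{2}(\rho_i)}$, is the elementary fact that the minimum of a finite family of nonnegative reals is at most their arithmetic mean; nonnegativity of each term follows because $\textbf{C}$ is positive-definite, so $\langle v_i, \textbf{C} v_i \rangle_{L^{2}(\rho_i)} = \int \langle v_i(\mathbold{\lambda}), \textbf{C}(\mathbold{\lambda}) v_i(\mathbold{\lambda}) \rangle\, d\rho_i(\mathbold{\lambda}) \geq 0$.

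For the right inequality, I would rewrite Proposition~\ref{lemma:descentlemma} in the form
\begin{equation*}
c_\eta \, \langle v_i, \textbf{C} v_i \rangle_{L^{2}(\rho_i)} \leq \mathcal{L}(\rho_i) - \mathcal{L}(\rho_{i+1}),
\end{equation*}
valid for every $i \geq 0$ under the hypothesis $\eta < 2/\kappa$ (which guarantees $c_\eta = \eta(1 - \kappa\eta/2) > 0$). Summing this bound over $i$ and telescoping the right-hand side collapses the consecutive differences, leaving $\mathcal{L}(\rho_0) - \mathcal{L}(\rho_{N})$ for some terminal index $N$. I would then bound $\mathcal{L}(\rho_{N}) \geq \min_{\rho \in \mathcal{P}(\Omega)} \mathcal{L}(\rho)$ to obtain
\begin{equation*}
\sum_i c_\eta \, \langle v_i, \textbf{C} v_i \rangle_{L^{2}(\rho_i)} \leq \mathcal{L}(\rho_0) - \min_{\rho \in \mathcal{P}(\Omega)} \mathcal{L}(\rho),
\end{equation*}
and divide by $c_\eta n$ to recover the stated bound on the average.

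There is essentially no genuine obstacle here, since all the analytic content lives inside Proposition~\ref{lemma:descentlemma}. The only minor point worth being careful about is the indexing convention: the corollary sums over $i = 1, \ldots, n$, but telescoping the one-step inequality for $i = 0, \ldots, n-1$ produces the cleanest bound $\mathcal{L}(\rho_0) - \mathcal{L}(\rho_n)$. Reconciling the two is straightforward — one either shifts the summation index, or notes that $\mathcal{L}(\rho_1) \leq \mathcal{L}(\rho_0)$ by the descent lemma applied at $i=0$, so replacing $\mathcal{L}(\rho_0)$ by $\mathcal{L}(\rho_1)$ on the right-hand side only strengthens the inequality. Either way, the conclusion follows without any further estimation.
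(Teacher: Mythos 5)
Your proposal is correct and is exactly the argument the paper intends: the paper gives no written proof, stating only that the corollary is ``directly derived from the descent lemma,'' and your telescoping of Proposition~\ref{lemma:descentlemma} together with the min-versus-mean observation is that derivation. Your handling of the index shift (using $\mathcal{L}(\rho_{1})\leq\mathcal{L}(\rho_{0})$ to align the sum over $i=1,\ldots,n$ with the stated bound) is the right way to reconcile the bookkeeping.
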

The corollary indicates that the gradient norm will converge to zero as $n$ increases. However, similar to the argument mentioned in the continuous-time analysis, it is not guaranteed that it converges to the globally optimal solution because of the non-convexity of $\mathcal{L}$.

\subsection{Weight Update via Infinite-dimensional Mirror Gradient Iterates}
\label{subsection:weightupdate}
In (\ref{eqn:gradientupdate}), only particle position, i.e. $\mathbold{\lambda}_{n}$, is updated, while its weight $\rho_{n}(\mathbold{\lambda}_{n})$ is kept fixed throughout the optimization. This weight restriction may limit the approximation capacity of $q$, especially when the number of particles is limited. To address it, we propose a scheme to update the weights of particles via the infinite-dimensional Mirror Descent (MD). 

\begin{thm}
\label{theorem:entropymd}
    (Infinite-dimensional MD) We define an iterate of infinite-dimensional Mirror Descent as follows: given  $\mu\in\mathcal{P}(\Omega)$, the learning rate $\eta$, and a function $g:\Omega\rightarrow\mathbb{R}$, we have: 
    \begin{align}
    \begin{split}
        \mu^{+}=&\text{MD}_{\eta}(\mu, g)\\
        =&\argmin_{\rho\in\mathcal{P}(\Omega)}\left\{ \eta \int_{\Omega}g(\mathbold{\lambda})(\rho(\mathbold{\lambda})-\mu(\mathbold{\lambda}))d\mathbold{\lambda}+\text{KL}(\rho, \mu)\right\},
    \end{split}
    \end{align}
    which can be equivalently defined as follows:
    for all $\mathbold{\lambda}\in\Omega$, $\mu^{+}(\mathbold{\lambda})\propto \mu(\mathbold{\lambda})\exp\left( -\eta g(\mathbold{\lambda})\right)$.
\end{thm}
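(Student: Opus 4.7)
The plan is to derive the explicit form of the minimizer by applying first-order optimality in the calculus of variations, enforcing the normalization constraint $\int_{\Omega} \rho(\mathbold{\lambda}) d\mathbold{\lambda} = 1$ via a Lagrange multiplier, and then upgrading the resulting stationary point to the unique global minimizer by convexity.

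First, I would write the Lagrangian
$$L(\rho, \tau) = \eta \int_{\Omega} g(\mathbold{\lambda})(\rho(\mathbold{\lambda}) - \mu(\mathbold{\lambda})) d\mathbold{\lambda} + \int_{\Omega} \rho(\mathbold{\lambda}) \log\frac{\rho(\mathbold{\lambda})}{\mu(\mathbold{\lambda})} d\mathbold{\lambda} - \tau\left(\int_{\Omega} \rho(\mathbold{\lambda}) d\mathbold{\lambda} - 1\right),$$
and take its pointwise first variation in $\rho$. The linear term contributes $\eta g(\mathbold{\lambda})$, the KL term contributes $\log(\rho(\mathbold{\lambda})/\mu(\mathbold{\lambda})) + 1$, and the Lagrange term contributes $-\tau$. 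Setting the sum to zero gives $\log(\rho(\mathbold{\lambda})/\mu(\mathbold{\lambda})) = \tau - 1 - \eta g(\mathbold{\lambda})$, hence $\rho(\mathbold{\lambda}) = \mu(\mathbold{\lambda})\exp(\tau - 1 - \eta g(\mathbold{\lambda}))$. Absorbing $e^{\tau-1}$ into a normalizing constant $Z$ determined by $\int \rho = 1$, namely $Z = \int_{\Omega}\mu(\mathbold{\lambda})\exp(-\eta g(\mathbold{\lambda}))d\mathbold{\lambda}$, one obtains $\mu^{+}(\mathbold{\lambda}) = Z^{-1}\mu(\mathbold{\lambda})\exp(-\eta g(\mathbold{\lambda}))$, which is precisely the claimed proportionality.

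Next, to promote this critical point to the global minimizer, I would invoke strict convexity. The term $\eta \int g(\rho - \mu) d\mathbold{\lambda}$ is linear in $\rho$, while $\text{KL}(\rho, \mu)$ is strictly convex in its first argument on the convex set $\mathcal{P}(\Omega)$ (a standard consequence of the strict convexity of $x \mapsto x\log x$). Their sum is thus strictly convex, so any stationary point on $\mathcal{P}(\Omega)$ is automatically the unique global minimizer. The positivity constraint $\rho \geq 0$ never needs to be activated because the candidate $\mu^{+}$ has the form $\mu$ times a strictly positive factor.

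The main obstacle I foresee is analytic rather than algebraic: the derivation tacitly requires that $Z < \infty$, so that $\mu^{+}$ is a well-defined element of $\mathcal{P}(\Omega)$, and that the KL divergence be finite on a neighborhood of $\mu^{+}$ (which forces absolute continuity $\rho \ll \mu$, automatic for the candidate). A mild condition on $g$, for instance that $g$ is bounded below or grows sufficiently slowly relative to the tails of $\mu$, suffices to guarantee integrability of $\mu\exp(-\eta g)$. Under such a hypothesis the Lagrangian argument above is rigorous, and the two formulations of the MD iterate coincide.
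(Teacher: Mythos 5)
Your proposal is correct. Note that the paper does not actually spell out a proof here: it simply states that the result "is straightforwardly extended from the celebrated entropy mirror descent in finite dimensional space" and cites a reference, so your argument is filling in a gap rather than paralleling an explicit derivation. Your Lagrangian/first-variation route is a valid standard way to do this, and your closing remarks about integrability of $\mu\exp(-\eta g)$ (so that $Z<\infty$ and $\mu^{+}\in\mathcal{P}(\Omega)$) address the one analytic point the theorem statement glosses over. A slightly slicker alternative, which is the one usually used for entropic mirror descent and avoids both the Lagrange multiplier and the separate convexity step, is the exact identity
\begin{equation*}
\eta\int g(\mathbold{\lambda})\bigl(\rho(\mathbold{\lambda})-\mu(\mathbold{\lambda})\bigr)d\mathbold{\lambda}+\mathrm{KL}(\rho,\mu)=\mathrm{KL}(\rho,\mu^{+})-\log Z-\eta\int g(\mathbold{\lambda})\mu(\mathbold{\lambda})d\mathbold{\lambda},
\end{equation*}
where $\mu^{+}=Z^{-1}\mu\exp(-\eta g)$ and the last two terms are independent of $\rho$; nonnegativity of the KL divergence, with equality iff $\rho=\mu^{+}$, then gives existence and uniqueness of the minimizer in one line. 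Either way the conclusion is the same, and your version is rigorous under the integrability hypothesis you state.
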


The proof of Theorem \ref{theorem:entropymd} is straightforwardly extended from the celebrated entropy mirror descent in finite dimensional space (see \cite{hsieh2019finding} for more details). With the MD iterate defined above, we can define the following updates for both particle positions and their weights:
\begin{equation}
\label{eqn:mirrorWGD}
    \bar{\rho}_{n}=(\textbf{I}-\eta \textbf{C}\nabla_{\mathbold{\lambda}}\delta\mathcal{L}(\rho_{n}))_{\#}\rho_{n}, 
    \rho_{n+1}=\text{MD}_{\eta}(\bar{\rho}_{n}, \delta\mathcal{L}(\bar{\rho}_{n})),
\end{equation}
where the first update of (\ref{eqn:mirrorWGD}), corresponding to the Wasserstein transport, is responsible for updating the particles positions, i.e. $\mathbold{\lambda}$, while the second update, corresponding to the Mirror Descent part, is responsible for updating the weights, i.e. $\rho_{n}(\mathbold{\lambda})$. We show the following descent lemma for (\ref{eqn:mirrorWGD}).

\begin{prop}
\label{lemma:mirrordescentlemma} 
Assume \textbf{(A1)},\textbf{(A2)} and \textbf{(A3)} hold. Let $\kappa=(\alpha+\beta)(M_{1}+M_{2})$, and choose sufficiently small learning rate $\eta < \min\left\{2/\kappa, 1\right\}$. Then we have:
\begin{align}
\begin{split}
    \mathcal{L}(\rho_{n+1})-\mathcal{L}(\rho_{n})\leq & -\eta \left(1-\kappa\frac{\eta}{2} \right)\langle v_{n},\textbf{C}v_{n} \rangle_{L^{2}(\rho_{n})}\\
    - & \left(\frac{1}{\eta}-1 \right)KL(\rho_{n+1},\bar{\rho}_{n})
\end{split}
\end{align}
\end{prop}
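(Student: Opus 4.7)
The plan is to decompose the one-step gap as
\begin{equation*}
\mathcal{L}(\rho_{n+1})-\mathcal{L}(\rho_n) = [\mathcal{L}(\bar\rho_n)-\mathcal{L}(\rho_n)] + [\mathcal{L}(\rho_{n+1})-\mathcal{L}(\bar\rho_n)]
\end{equation*}
and bound the two brackets separately. The first bracket is immediate: since $\bar\rho_n=(\textbf{I}-\eta\textbf{C}v_n)_{\#}\rho_n$ is exactly the Wasserstein push-forward used in \eqref{eqn:gradientupdate}, Proposition \ref{lemma:descentlemma} applied to the pair $(\rho_n,\bar\rho_n)$ directly yields $\mathcal{L}(\bar\rho_n)-\mathcal{L}(\rho_n)\leq -\eta(1-\kappa\eta/2)\langle v_n,\textbf{C}v_n\rangle_{L^2(\rho_n)}$. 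All the discrete-time machinery under assumptions \textbf{(A1)}--\textbf{(A3)} is already digested in this step, so the second bracket is where the new $-(1/\eta-1)KL(\rho_{n+1},\bar\rho_n)$ term must arise.

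For the second bracket I would combine two ingredients. The first is the mirror-descent optimality inequality: since $\rho_{n+1}$ minimizes the MD objective from Theorem \ref{theorem:entropymd} with reference $\bar\rho_n$ and cost $g=\delta\mathcal{L}(\bar\rho_n)$, testing against the competitor $\rho=\bar\rho_n$ gives
\begin{equation*}
\int \delta\mathcal{L}(\bar\rho_n)(\mathbold{\lambda})\,(\rho_{n+1}-\bar\rho_n)(\mathbold{\lambda})\,d\mathbold{\lambda} \leq -\frac{1}{\eta}\,KL(\rho_{n+1},\bar\rho_n).
\end{equation*}
The second ingredient is a one-sided smoothness bound with constant $1$ for $\mathcal{L}$ relative to KL,
\begin{equation*}
\mathcal{L}(\rho_{n+1})-\mathcal{L}(\bar\rho_n) - \int \delta\mathcal{L}(\bar\rho_n)(\mathbold{\lambda})\,(\rho_{n+1}-\bar\rho_n)(\mathbold{\lambda})\,d\mathbold{\lambda} \leq KL(\rho_{n+1},\bar\rho_n).
\end{equation*}
Adding these two inequalities yields $\mathcal{L}(\rho_{n+1})-\mathcal{L}(\bar\rho_n)\leq -(1/\eta-1)KL(\rho_{n+1},\bar\rho_n)$, and summing with the Wasserstein bound closes the proof. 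Note that the hypothesis $\eta<1$ enters precisely to make the coefficient $(1/\eta-1)$ non-negative.

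The main technical step, and what I expect to be the main obstacle, is the smoothness inequality above. My plan is to split the objective as $\mathcal{L}(\rho)=\int \phi(\mathbold{\lambda})\,d\rho(\mathbold{\lambda}) + \int q_\rho(\textbf{z})\log q_\rho(\textbf{z})\,d\textbf{z}$, with $\phi(\mathbold{\lambda})=\mathbb{E}_{\textbf{z}\sim q(\cdot|\mathbold{\lambda})}[f(\textbf{z})]$ and $q_\rho(\textbf{z})=\int q(\textbf{z}|\mathbold{\lambda})\,d\rho(\mathbold{\lambda})$. The first, linear term contributes zero to the Bregman-style gap on the left-hand side. For the negative-entropy term, the first variation from Theorem \ref{thm:firstvariation} equals $\mathbb{E}_{\textbf{z}\sim q(\cdot|\mathbold{\lambda})}[\log q_{\bar\rho_n}(\textbf{z})]+1$; the constant $1$ drops out because $\int(\rho_{n+1}-\bar\rho_n)\,d\mathbold{\lambda}=0$, and a short rearrangement of the $\log q_{\bar\rho_n}$ cross terms collapses the gap to exactly $KL(q_{\rho_{n+1}},q_{\bar\rho_n})$. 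The final piece is the lifting inequality $KL(q_{\rho_{n+1}},q_{\bar\rho_n})\leq KL(\rho_{n+1},\bar\rho_n)$, which is the classical data-processing inequality applied to the mixture kernel $\rho\mapsto q_\rho=\int q(\cdot|\mathbold{\lambda})\,d\rho(\mathbold{\lambda})$ (equivalently, joint convexity of KL). This lifting from a KL on $\textbf{z}$-space to a KL on $\mathbold{\lambda}$-space is the non-routine step; once it is in hand, the three inequalities compose cleanly into the stated bound.
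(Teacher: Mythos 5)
Your proposal is correct and follows essentially the same route as the paper: the same split into the Wasserstein step (handled by Proposition \ref{lemma:descentlemma}) and the mirror-descent step, the same use of the MD optimality condition to produce the $-\frac{1}{\eta}\mathrm{KL}(\rho_{n+1},\bar{\rho}_{n})$ term, and the same relative-smoothness inequality $\mathcal{L}(\rho')-\mathcal{L}(\rho)-\int\delta\mathcal{L}(\rho)(\rho'-\rho)\leq \mathrm{KL}(\rho',\rho)$, proved exactly as you sketch by reducing the Bregman gap to $\mathrm{KL}(q_{\rho'},q_{\rho})$ and applying the log-sum (data-processing) inequality to lift it to parameter space. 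The only cosmetic difference is that the paper derives the MD step as an equality via the three-point identity and then discards an extra nonpositive $\mathrm{KL}(\bar{\rho}_{n},\rho_{n+1})$ term, whereas you obtain the needed inequality directly by testing the minimizer against $\bar{\rho}_{n}$.
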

The proof of Proposition \ref{lemma:mirrordescentlemma} can be found in Appendix \ref{proof:mirrordescentlemma}. Compared to Proposition \ref{lemma:descentlemma}, Proposition \ref{lemma:mirrordescentlemma} demonstrates a stronger decrease per iteration, attributed to the non-negative KL term, highlighting the advantage of incorporating MD iterates to enhance the convergence of our proposed updates.

\noindent
\textbf{Remark 2}. We emphasize that the proposed updates (\ref{eqn:mirrorWGD}) are closely related to Wasserstein-Fisher-Rao gradient flow of $\mathcal{L}$ \citep{gallouet2017jko}. Specifically, we demonstrate in Appendix \ref{WFRandMD} that the second update of (\ref{eqn:mirrorWGD}) aligns with the Fisher-Rao gradient flow as $\eta$ approaches 0. As a result, the proposed updates (\ref{eqn:mirrorWGD}) can be viewed as the discrete approximation of the \textit{preconditioned} version of Wasserstein-Fisher-Rao gradient flow of $\mathcal{L}$.

\subsection{Particle Approximation of Gradient Flows}
For solving problem \eqref{eqn:reformulatedvi} using the updates \eqref{eqn:mirrorWGD}, we assume that $\rho_{n}$ is described by a set of particles $\{ \mathbold{\lambda}_{k,n}\}_{k=1}^{K}$ and weights $\{ a_{k,n}\}_{k=1}^{K}$.
Then, the variational distribution $q_{n}(\textbf{z})$ at iteration $n$ corresponds to a familiar mixture model with a finite number of components as follows: $q_{n}(\textbf{z})=\mathbb{E}_{\mathbold{\lambda}\sim \rho_{n}} \left[ q(\textbf{z}|\mathbold{\lambda}_{k,n})\right]=\sum_{k=1}^{K} a_{k,n}q(\textbf{z}|\mathbold{\lambda}_{k,n})$. We perform the first update of \eqref{eqn:mirrorWGD} on particle positions, as follows:
\begin{equation}
    \label{eqn:particleupdate}
    \mathbold{\lambda}_{k,n+1}=\mathbold{\lambda}_{k,n}-\eta \textbf{C}(\mathbold{\lambda}_{k,n})\nabla \delta \mathcal{L}(\rho_{n})(\mathbold{\lambda}_{k,n}).
\end{equation}
We perform the second update of \eqref{eqn:mirrorWGD} on the weights of particles as follows:
\begin{equation}
    a_{k,n+1} \propto a_{k,n}\exp(-\eta \delta \mathcal{L}(\bar{\rho}_{n})(\mathbold{\lambda}_{k,n+1})),
\end{equation}
where $\bar{\rho}_{n}(\mathbold{\lambda})=\sum_{k=1}^{K}a_{k,n}\delta_{\mathbold{\lambda}_{k,n+1}}(\mathbold{\lambda})$.
We now demonstrate the simplicity of our update \eqref{eqn:particleupdate} when $q(\textbf{z}|\mathbold{\lambda})$ is a diagonal Gaussian distribution. 

\noindent
\textbf{Gradient flow VI (GFlowVI)}. First, we consider the case $\textbf{C}=\textbf{I}$. Let $\mathbold{\lambda}_{k,n}=(\mathbold{\mu}_{k,n},\textbf{s}_{k,n})$ be the $k$-th variational parameter at the $n$-th iteration, where $\textbf{s}_{k,n}=\mathbold{\sigma}^{-2}_{k,n}$ for $k =1,2,...,K$. Each variational parameter corresponds to a Gaussian distribution, and so we refer it to as a "Gaussian particle". For each iteration $n$, we generate a sample $\textbf{z}$ from $q(\textbf{z}|\mathbold{\lambda}_{k,n})$. Then we update $\mathbold{\mu}_{k,n}$ and $\textbf{s}_{k,n}$ as follows:

\begin{align}
\label{eqn:gflow-vi}
\begin{split}
    &\mathbold{\mu}_{k,n+1}= \mathbold{\mu}_{k,n}-\eta \left[\nabla_{\textbf{z}}f(\textbf{z})+\nabla_{\textbf{z}}\log q_{n}(\textbf{z})\right]\\
    &- \eta\textbf{w}_{k}\nabla_{\mathbold{\mu}_{k}}\log q(\textbf{z}|\mathbold{\lambda}_{k,n}).\\
     &\textbf{s}_{k,n+1}= \textbf{s}_{k,n} -\eta \textbf{w}_{k}\nabla_{\textbf{s}_{k}}\log q(\textbf{z}|\mathbold{\lambda}_{k,n})\\
     &+ \frac{\eta}{2} \oslash (\textbf{s}_{k,n}\odot \textbf{s}_{k,n})\odot \texttt{diag}\left[\nabla^{2}_{\textbf{z}}f(\textbf{z})+\nabla^{2}_{\textbf{z}}\log q_{n}(\textbf{z})\right]
\end{split}
\end{align}

where $\textbf{w}_{k}=q(\textbf{z}|\mathbold{\lambda}_{k,n})/q_{n}(\textbf{z})$, and $\textbf{a}\oslash\textbf{b}$ denotes the element-wise division between vectors $\textbf{a}$ and $\textbf{b}$. We refer to this update as gradient-flow VI (GFlowVI). The update (\ref{eqn:gflow-vi}) is derived using the Bonnet's and Price's theorems \citep{bonnet1964transformations,price1958useful} for the Gaussian distribution  $q(\textbf{z}|\mathbold{\lambda})=\mathcal{N}(\textbf{z}|\mathbold{\mu},\mathbold{\Sigma})$:
\begin{align}
\label{eqn:relation1}
\begin{split}
        \nabla_{\mathbold{\mu}}\mathbb{E}_{\textbf{z}\sim q(\cdot|\mathbold{\lambda})}\left[f(\textbf{z})\right]=&\mathbb{E}_{\textbf{z}\sim q(\cdot|\mathbold{\lambda})}\left[\nabla_{\textbf{z}}f(\textbf{z})\right],\\
        \nabla_{\mathbold{\Sigma}}\mathbb{E}_{\textbf{z}\sim q(\cdot|\mathbold{\lambda})}\left[f(\textbf{z})\right]=&\frac{1}{2}\mathbb{E}_{\textbf{z}\sim q(\cdot|\mathbold{\lambda})}\left[\nabla^{2}_{\textbf{z}}f(\textbf{z})\right].
\end{split}
\end{align}
\noindent
\textbf{Natural-gradient flow VI (NGFlowVI)}. Next, we consider the case that $\textbf{C}=\textbf{F}^{-1}$ is the inverse FIM. As much discussed in previous studies (e.g., \cite{khan2018fast}), the natural-gradient update does not need inverting the \textbf{FIM} for specific types of models and applications, e.g. exponential-family distributions. Thus, in this case, we consider $\mathbold{\lambda}$ to be the natural parameters of the Gaussian $q(\textbf{z}|\mathbold{\lambda})$. Specifically, the natural parameters and expectation parameters can be defined as follows:
\begin{equation*}
\mathbold{\lambda}^{(1)}_{k,n}=\textbf{s}_{k,n}\odot\mathbold{\mu}_{k,n}, \mathbold{\lambda}^{(2)}_{k,n}=-\frac{1}{2}\textbf{s}_{k,n}
\end{equation*}
and 
\begin{equation*}
\textbf{m}^{(1)}_{k,n}=\mathbold{\mu}_{k,n},\textbf{m}^{(2)}_{k,n}=\mathbold{\mu}_{k,n}\odot\mathbold{\mu}_{k,n}+1\oslash\textbf{s}_{k,n}.
\end{equation*}
The computational efficiency of the natural-gradients is a result of the following relation:
\begin{equation}
    \textbf{F}^{-1}(\mathbold{\lambda}_{k,n})\nabla_{\mathbold{\lambda}}\delta \mathcal{L}(\delta \rho_{n})(\mathbold{\lambda}_{k,n})=\nabla_{\textbf{m}}\delta \mathcal{L}(\delta \rho_{n})(\mathbold{\lambda}_{k,n}).
\end{equation}

Using the relation \eqref{eqn:relation1} and relation in \cite{khan2018fast}, the update \eqref{eqn:particleupdate} becomes:
\begin{align}
\label{eqn:ngflow-vi}
\begin{split}
    \mathbold{\mu}_{k,n+1}=& \mathbold{\mu}_{k,n}-\eta \left[\nabla_{\textbf{z}}f(\textbf{z})+\nabla_{\textbf{z}}\log q_{n}(\textbf{z})\right]\oslash\textbf{s}_{k,n+1}\\
    -& \eta \textbf{w}_{k}  \nabla_{\mathbold{\mu}_{k}}\log q(\textbf{z}|\mathbold{\lambda}_{k,n})\oslash\textbf{s}_{k,n+1}\\
     \textbf{s}_{k,n+1}=& \textbf{s}_{k,n}+\eta \texttt{diag}\left[\nabla^{2}_{\textbf{z}}f(\textbf{z})+\nabla^{2}_{\textbf{z}}\log q_{n}(\textbf{z})\right]\\
     -& 2\eta \textbf{w}_{k}(\textbf{s}_{k,n}\odot\textbf{s}_{k,n})\odot\nabla_{\textbf{s}_{k}}\log q(\textbf{z}|\lambda_{k,n})
\end{split}
\end{align}
We refer to this update as natural-gradient flow VI (NGFlowVI). Detailed derivations of the updates \eqref{eqn:gflow-vi} and \eqref{eqn:ngflow-vi} can be found in Appendix \ref{derivations}.

\noindent
\textbf{Remark 3}. Note that the gradient update (\ref{eqn:particleupdate}) can serve as a generalization of BBVI and NGVI. Indeed, when $K=1$, $\textbf{C}=\textbf{I}$ and $\mathbold{\lambda}_{n}=\left(\mathbold{\mu}_{n},\mathbold{\sigma}_{n} \right)$, the update (\ref{eqn:particleupdate}) recovers the update (\ref{eqn:vi}) of BBVI with the reparameterization trick. Also, when $K=1$, $\textbf{C}=\textbf{F}^{-1}$ and $\mathbold{\lambda}_{n}=\left(\mathbold{\sigma}^{-2}_{n}\odot\mathbold{\mu}_{n},-1/2\mathbold{\sigma}^{-2}_{n} \right)$, the update (\ref{eqn:particleupdate}) recovers the update (\ref{eqn:ngvi}) of NGVI.


\subsection{A Simple Fix to Negative Hessian Problem}
\label{subsection:fixhessian}
In the updates \eqref{eqn:gflow-vi} and \eqref{eqn:ngflow-vi}, the vectors $\textbf{s}_{k,n}$ (for $k=1,2,...,K$) are updated based on the Hessian of $f(\textbf{z})+\log q(\textbf{z})$. Since $f(\textbf{z})+\log q(\textbf{z})$ is a non-convex, its Hessian may not be positive-definite, leading to instability. Although the 
generalized Gaussian-Newton approximation from \cite{khan2018fast} is suggested to address it, it does not effectively solve the problem when applied to $f(\textbf{z})+\log q(\textbf{z})$. We introduce a solution to this issue by the approach by \cite{nguyen2023mirror}, which updates  particles within a constrained domain. We observe that in the updates \eqref{eqn:gflow-vi} and \eqref{eqn:ngflow-vi}, the variance vectors appear independently in the second part of the variational parameters, i.e. $\mathbold{\lambda}_{k,n}^{(2)}=\textbf{s}_{k,n}$ for the first case ($\textbf{C}=\textbf{I}$) and $\mathbold{\lambda}_{k,n}^{(2)}=-1/2\textbf{s}_{k,n}$ for the second case ($\textbf{C}=\textbf{F}^{-1}$). Thus, we can reformulate our problem into updating particles within the constrained domain, as addressed by \cite{nguyen2023mirror}. We define a strongly convex function $\varphi$ as follows:

\begin{equation*}
    \varphi(\mathbold{\lambda})=\frac{1}{2}\lVert \mathbold{\lambda}^{(1)} \rVert_{2}^{2}+ \langle \mathbold{\lambda}^{(2)}, \log \mathbold{\lambda}^{(2)}-1\rangle,
\end{equation*}
where the $\log$ is taken elementwise. This function is composed of two terms: the first term keeps $\mathbold{\lambda}^{(1)}$ unchanged while the second term handles the non-negative constraint of $\mathbold{\lambda}^{(2)}$, which corresponds to the variance (see \cite{beck2003mirror} or Appendix \ref{appendix:mdalgorithm} for the background of the mirror descent). Then the mirror map induced by this convex function $\varphi$ is defined as follows:
\begin{equation}
\label{eqn:mirrormap}
    \nabla \varphi(\mathbold{\lambda})=\mathbold{\zeta},
\end{equation}
where $\mathbold{\zeta}\in\mathbb{R}^{d}$ is defined as: $\mathbold{\zeta}^{(1)}=\mathbold{\lambda}^{(1)}$ and $\mathbold{\zeta}^{(2)}=\log \mathbold{\lambda}^{(2)}$. The inverse of the mirror map is defined as follows:
\begin{equation}
\label{eqn:inversemap}
    \nabla \varphi^{*}(\mathbold{\zeta})=\mathbold{\lambda},
\end{equation}
where $\mathbold{\lambda}^{(1)}=\mathbold{\zeta}^{(1)}$, $\mathbold{\lambda}^{(2)}=\exp(\mathbold{\zeta}^{(2)})$ (with $\exp$ taken elementwise). The dual function of $\varphi$ is denoted as $\varphi^{*}$. The basic idea of our solution is to map the parameters $\mathbold{\lambda}_{k,n}$ (for $k=1,2,...,K$) to the dual space using the mirror map defined by \eqref{eqn:mirrormap} before each update. After updating these parameters in the given direction, we map them back to the original space using the inverse map defined by \eqref{eqn:inversemap}. This ensures that the updated parameters always belong to the constrained domain. In summary, we modify the updates 
as follows: for GFlowVI, we have
\begin{align}
\label{eqn:modifedngflow-vi}
\begin{split}
    &\textbf{s}^\prime_{k,n} = \log (\textbf{s}_{k,n}).\\
    &\textbf{s}^\prime_{k,n+1} = \textbf{s}^\prime_{k,n} -\eta \textbf{w}_{k}\nabla_{\textbf{s}_{k}}\log q(\textbf{z}|\mathbold{\lambda}_{k,n}).\\
    &+\frac{\eta}{2} \oslash (\textbf{s}_{k,n}\odot\textbf{s}_{k,n})\odot\texttt{diag}\left[\nabla^{2}_{\textbf{z}}f(\textbf{z})+\nabla^{2}_{\textbf{z}}\log q_{n}(\textbf{z})\right]\\
    &\textbf{s}_{k,n+1} = \exp(\textbf{s}^\prime_{k,n+1}).\\
    &\mathbold{\mu}_{k,n+1}= \mathbold{\mu}_{k,n}-\eta \left[\nabla_{\textbf{z}}f(\textbf{z})+\nabla_{\textbf{z}}\log q_{n}(\textbf{z})\right]\\
    &- \eta\textbf{w}_{k}\nabla_{\mathbold{\mu}_{k}}\log q(\textbf{z}|\mathbold{\lambda}_{k,n}),\\
\end{split}
\end{align}

where the first line is to map the vectors $\textbf{s}_{k,n}$ to the dual space through the mirror map \eqref{eqn:mirrormap}, the second line is to update these vectors in the dual space, and the third line is to map the updated variance vectors back to the constrained domain through the inverse map \eqref{eqn:inversemap}. It can be confirmed that the variance vectors are always positive.

For NGFlowVI, we apply the same procedure to update the variance vectors. The modification of the update (\ref{eqn:ngflow-vi}) can be expressed as follows:
\begin{align}
\label{eqn:modifedngflow-vi}
\begin{split}
    &\textbf{s}^\prime_{k,n} = \log (\textbf{s}_{k,n}).\\
    &\textbf{s}^\prime_{k,n+1} = \textbf{s}^\prime_{k,n} + \eta \texttt{diag}\left[ \nabla^{2}_{\textbf{z}}f(\textbf{z})+\nabla^{2}_{\textbf{z}}\log q_{n}(\textbf{z})\right]\\
    & -  2\eta \textbf{w}_{k}(\textbf{s}_{k,n}\odot \textbf{s}_{k,n})\odot\nabla_{\textbf{S}_{k}}\log q(\textbf{z}|\lambda_{k,n}).\\
    &\textbf{s}_{k,n+1} = \exp(\textbf{s}^\prime_{k,n+1}).\\
    &\mathbold{\mu}_{k,n+1}= \mathbold{\mu}_{k,n}-\eta \left[\nabla_{\textbf{z}}f(\textbf{z})+\nabla_{\textbf{z}}\log q_{n}(\textbf{z})\right]\oslash\textbf{s}_{k,n+1}\\
    &- \eta \textbf{w}_{k}\nabla_{\mathbold{\mu}_{k}}\log q(\textbf{z}|\mathbold{\lambda}_{k,n})\oslash\textbf{s}_{k,n+1}.\\
\end{split}
\end{align}
\noindent
\textbf{Remark 4}. The approach outlined above can be generalized to handle the constraints imposed on $\mathbold{\lambda}^{(2)}$ such as $a\leq\mathbold{\lambda}^{(2)}\leq b$. We can modify the convex function $\varphi$ as follows:
\begin{align*}
\begin{split}
    \varphi(\mathbold{\lambda})=\frac{1}{2}\lVert \mathbold{\lambda}^{(1)} \rVert_{2}^{2}+& \langle \mathbold{\lambda}^{(2)}-a,\log( \mathbold{\lambda}^{(2)}-a)\rangle\\
    +& \langle b-\mathbold{\lambda}^{(2)},\log(b- \mathbold{\lambda}^{(2)})\rangle.
\end{split}
\end{align*}
The mirror map in (\ref{eqn:mirrormap}) for $\mathbold{\lambda}^{(2)}$ is modified as follows:  $\mathbold{\zeta}^{(2)}=\log( \mathbold{\lambda}^{(2)}-a)-\log( b-\mathbold{\lambda}^{(2)})$. Also the inverse of the mirror map (\ref{eqn:inversemap}) for $\mathbold{\lambda}^{(2)}$  is modified as: $\mathbold{\lambda}^{(2)}=(b\exp(\mathbold{\zeta}^{(2)})+a)/(\exp(\mathbold{\zeta}^{(2)})+1)$, thus, $\mathbold{\lambda}^{(2)}$ always satisfies the constraints during the optimization process, i.e., $a\leq\mathbold{\lambda}^{(2)}\leq b$.

\begin{figure*}
    \centering
    \includegraphics[width=1.0\textwidth]{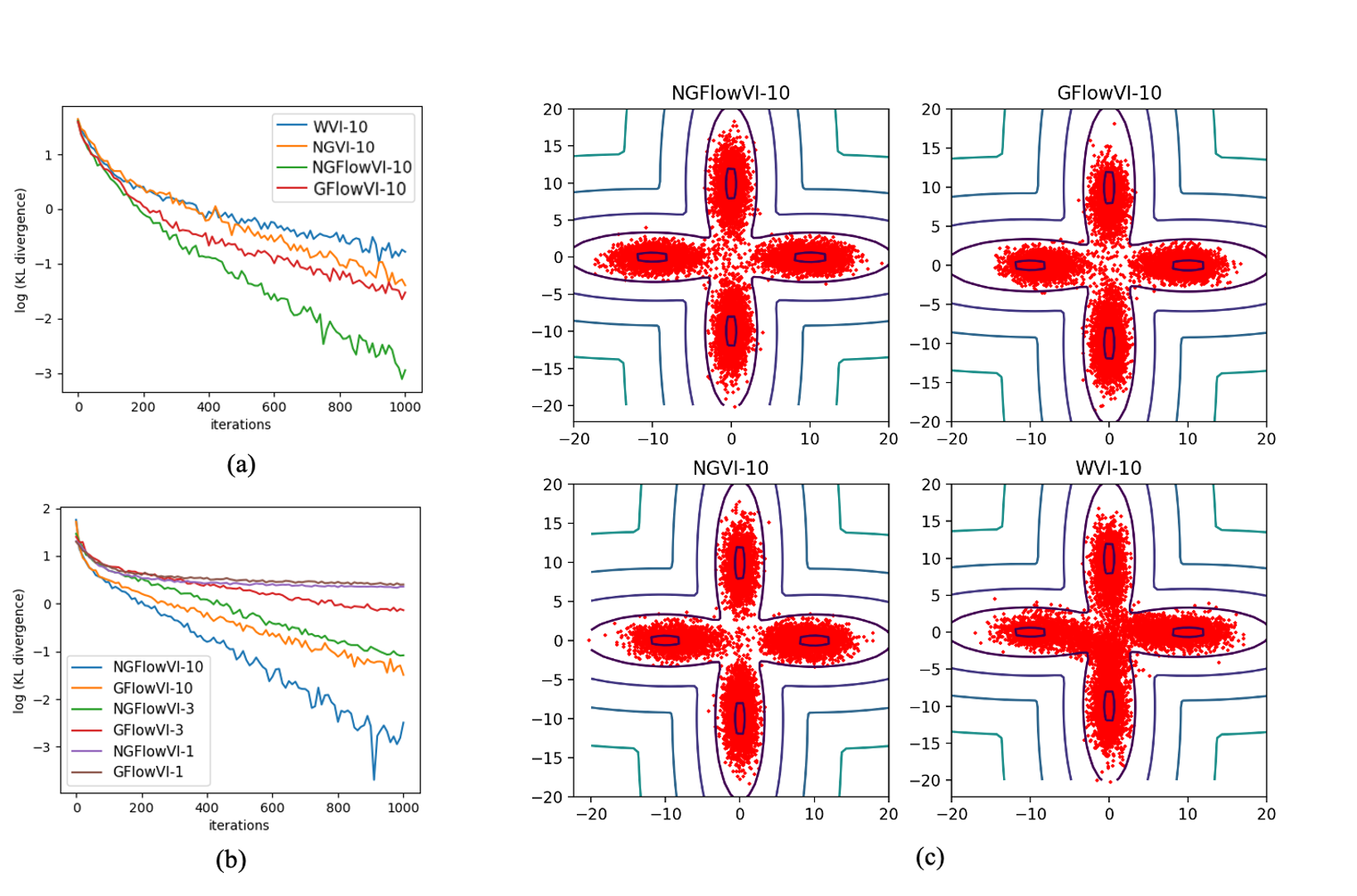}
    \caption{Experimental results on the synthetic dataset: (a) the estimated KL divergence in log scale between the target $\pi$ and approximate density $q$ over 1,000 iterations of four updates with $K=10$; (b) performance of NGFlowVI and GFlowVI with varying values of $K$: 1, 3 and 5; (c) visualizations of 1,000 samples from $q$ given by the four updates.}
    \label{fig:simulation-10}
\end{figure*}

\section{Numerical Experiments}\label{sec:exps}
To validate and enhance our theoretical analyses of the proposed updates, we conduct a series of numerical experiments, including simulated experiments and applications to Bayesian neural networks. We compare GFlowVI and NGFlowVI with  Wasserstein variational inference (WVI) \citep{lambert2022variational} and natural gradient variational inference for mixture models (NGVI) \citep{lin2019fast}. We omit BBVI from our experiments, as previous work demonstrated that NGVI is superior to BBVI. 
We use numbers following the methods' names to denote the number of components $K$ in the mixtures for each method. Experiments are done on a PC with Intel Core i9 and 64 GB memory.

\begin{figure*}
\centering
\begin{subfigure}{0.32\textwidth}
    \includegraphics[width=\textwidth]{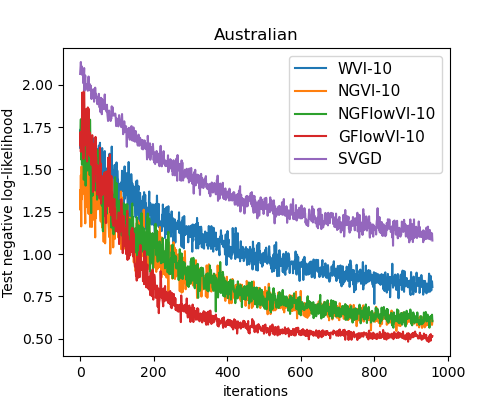}
    \caption{}
    \label{fig:first}
\end{subfigure}
\begin{subfigure}{0.32\textwidth}
    \includegraphics[width=\textwidth]{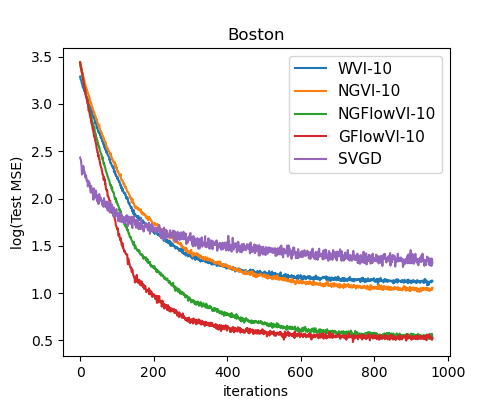}
    \caption{}
    \label{fig:second}
\end{subfigure}
\begin{subfigure}{0.32\textwidth}
    \includegraphics[width=\textwidth]{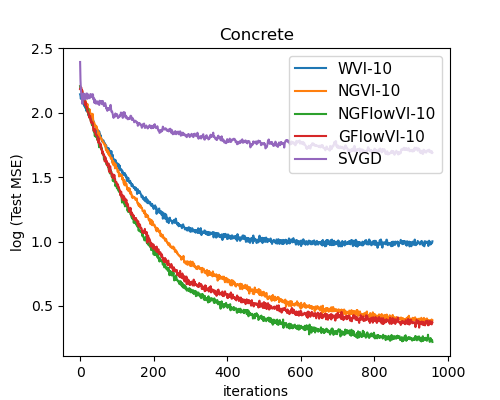}
    \caption{}
    \label{fig:third}
\end{subfigure}
\caption{Average test negative log-likelihood of Bayesian neural networks (BNNs) on (a) 'Australia scale' and averaged test mean square error of BNNs on (b) 'Boston' and (c) 'Concrete' over 1000 iterations. For SVGD, 100 particles are used, while other methods approximate BNN weight posteriors with a Gaussian mixture ($K=10$). Parameters are updated using WVI-10, NGVI-10, GFlowVI-10 and NGFlowVI-10. Results are averaged over 20 runs of 20 data splits.}
\label{fig:testerror}
\end{figure*}

\noindent
\textbf{Results on simulated datasets}. We first consider to sample from a 4-cluster Gaussian mixture distribution $\pi$, defined on a two-dimensional space, with equal cluster weights. The objective is to minimize the KL divergence between $q$ and $\pi$. For NGVI, GFlowVI and NGFlowVI, $q$ is a mixture of diagonal Gaussians, while for WVI, $q$ is a mixture of Gaussians with full covariance. Initially, the means of Gaussians are randomly sampled from a two-dimensional normal distribution and variances are set to the identity matrix. Means and covariance matrices are updated over 1,000 iterations with a fixed learning rate $\eta= 0.001$. The expectation $\mathbb{E}_{\textbf{z}\sim q(\cdot|\mathbold{\lambda})}[\cdot]$ is estimated using a single sample.

Figure \ref{fig:simulation-10}(a) shows the KL divergence between $q$ and $\pi$ over 1,000 iterations for the four updates when $K=10$. The KL divergence is estimated by $10^{4}$ MC samples from $q$. We see that NGFlowVI-10 converges faster than the others. GFlowVI-10's convergence is comparable to NGVI-10, and much faster than WVI-10. We also evaluate NGFlowVI and GFlowVI for $K=1,3,10$, as shown in Figure \ref{fig:simulation-10}(b). With $K=1$, both NGFlowVI and GFlowVI perform poorly, as a single Gaussian might not capture the multi-modal target distribution. However, with $K=10$, both methods effectively approximate the target. Figure \ref{fig:simulation-10}(c) shows 1,000 samples from the approximate density $q$ for WVI-10, NGVI-10, GFlowVI-10 and NGFlowVI-10. These evidences confirm the effectiveness of NGFlowVI and GFlowVI on this synthetic dataset. In addition, we apply these methods to approximate two other synthetic distributions defined on a two-dimensional space: a banana-shaped distribution and an X-shaped mixture of Gaussians. The densities of these distributions, the approximate KL divergence between $q$ and the targets after 1000 particle updates, and the visualizations of 1000 samples are given in Table \ref{tab:targets}, Table \ref{tab:synthetic} and Figure \ref{fig:synthetic}, respectively, in Appendix \ref{appendix:syntheticdatasets}. 

\noindent
\textbf{Results on real-world datasets}. We also validate our methods on Bayesian neural networks using real-world datasets and include SVGD \citep{liu2016stein}) for comparison. 
We use the following three datasets:  1) 'Australian': $N =790$ examples , dimensionality$=14$, with 345 for training; 2) 'Boston': $N=506$, dimensionality$=8$, with 455 for training; 3) 'Concrete': $N=1030$, dimensionality$=13$, with 927 for training. We perform classification on the first dataset and regression on the others, using 20 data splits provided by \cite{gal2016dropout}. Results are averaged over 20 runs of these splits.
We employ the same deep neural network architecture for all datasets with one hidden layer, 50 hidden units and ReLU activation. The regularization parameter and learning rate are set to 0.1 and 0.0001, respectively. We use minibatches of size 32 to approximate gradients and Hessians of $f$ in \eqref{eqn:negatedelbo}. The posterior over the network weights is approximated by a Gaussian mixture with $K=10$, and parameters are updated through 1,000 iterations. For predictions, we draw 100 samples of weights for the networks and calculate the average prediction for the given input. We use 100 particles for SVGD. We use 10 samples to estimate the expectation $\mathbb{E}_{\textbf{z}\sim q(\cdot|\mathbold{\lambda})}[\cdot]$. Figure \ref{fig:testerror} shows the averaged negative log-likelihood over 1,000 iterations. GFlowVI-10 achieves the best convergence on the first two datasets, while NGFlowVI-10 and NGVI-10 leads on the third. See more details on the experiments in Appendix \ref{appendix:results}. 

\section{Conclusions}
\label{sec:conclusion}
We introduced a novel WGF-based approach for VI that operates on variational parameter domains, unlike previous methods, which focus on latent variable domains. This approach makes significant contributions to related fields. Our developed algorithms were empirically validated on the synthetic and real-world datasets, demonstrating their effectiveness. However, our current work is limited to diagonal Gaussian distributions for the algorithmic development. There are two main reasons. First, a single Gaussian may not adequately capture the complexity of the posterior (e.g. rotated Gaussian), but a mixture of diagonal Gaussian can provide a better approximation. Second, the proposed updates (\ref{eqn:mirrorWGD}) might violate the parameter constraints, such as ensuring that the covariance matrix must be positive definite. To address this issue, we opted for the simpler case of diagonal covariance and employed the mirror descent for constrained optimization. We plan to address  full covariance Gaussians in future work.




\subsubsection*{Acknowledgements}
This work was supported in part by MEXT KAKENHI [grant number: 23K16939] (to D.H.N.) and MEXT KAKENHI [grant numbers: 22H03645 and 23K24901] (to H.M.).

\bibliographystyle{plainnat}
\bibliography{aistats}

\section*{Checklist}

 \begin{enumerate}

 \item For all models and algorithms presented, check if you include:
 \begin{enumerate}
   \item A clear description of the mathematical setting, assumptions, algorithm, and/or model. [Yes]
   \item An analysis of the properties and complexity (time, space, sample size) of any algorithm. [Yes]
   \item (Optional) Anonymized source code, with specification of all dependencies, including external libraries. [Not Applicable]
 \end{enumerate}

 \item For any theoretical claim, check if you include:
 \begin{enumerate}
   \item Statements of the full set of assumptions of all theoretical results. [Yes]
   \item Complete proofs of all theoretical results. [Yes]
   \item Clear explanations of any assumptions. [Yes]     
 \end{enumerate}

 \item For all figures and tables that present empirical results, check if you include:
 \begin{enumerate}
   \item The code, data, and instructions needed to reproduce the main experimental results (either in the supplemental material or as a URL). [Yes]
   \item All the training details (e.g., data splits, hyperparameters, how they were chosen). [Yes]
         \item A clear definition of the specific measure or statistics and error bars (e.g., with respect to the random seed after running experiments multiple times). [Yes]
         \item A description of the computing infrastructure used. (e.g., type of GPUs, internal cluster, or cloud provider). [Yes]
 \end{enumerate}

 \item If you are using existing assets (e.g., code, data, models) or curating/releasing new assets, check if you include:
 \begin{enumerate}
   \item Citations of the creator If your work uses existing assets. [[Not Applicable]]
   \item The license information of the assets, if applicable. [Not Applicable]
   \item New assets either in the supplemental material or as a URL, if applicable. [Not Applicable]
   \item Information about consent from data providers/curators. [Not Applicable]
   \item Discussion of sensible content if applicable, e.g., personally identifiable information or offensive content. [Not Applicable]
 \end{enumerate}

 \item If you used crowdsourcing or conducted research with human subjects, check if you include:
 \begin{enumerate}
   \item The full text of instructions given to participants and screenshots. [Not Applicable]
   \item Descriptions of potential participant risks, with links to Institutional Review Board (IRB) approvals if applicable. [Not Applicable]
   \item The estimated hourly wage paid to participants and the total amount spent on participant compensation. [Not Applicable]
 \end{enumerate}

 \end{enumerate}

\onecolumn
\appendix
\date{}
\title{\textbf{Appendix for Wasserstein Gradient Flow over Variational Parameter Space for Variational
Inference}}
\maketitle

\section{Proof of Theorem \ref{thm:firstvariation}}
\label{theorem1}
\begin{proof}
    To compute the first variation of $\mathcal{L}(p)$, suppose that $\varepsilon>0$ and an arbitrary distribution $\chi\in \mathcal{P}(\Omega)$. We compute $(\mathcal{L}(\rho+\varepsilon \chi)-\mathcal{L}(\rho))/\varepsilon$ as follows:
    \begin{align*}
        &\frac{1}{\varepsilon}\left[\mathcal{L}(\rho+\varepsilon \chi) - \mathcal{L}(\rho)\right]=\\
        &\frac{1}{\varepsilon}\int\left( \rho(\mathbold{\lambda})+\varepsilon \chi(\mathbold{\lambda})\right)\int q(\textbf{z}|\mathbold{\lambda})\left[f(\textbf{z})+\log\left( \int\left( \rho(\mathbold{\lambda})+\varepsilon \chi(\mathbold{\lambda})\right)q(\textbf{z}|\mathbold{\lambda})d\mathbold{\mathbold{\lambda}}\right)\right]d\textbf{z}d\mathbold{\lambda}\\
        -& \frac{1}{\varepsilon}\int\rho(\mathbold{\lambda})\int q(\textbf{z}|\mathbold{\lambda})\left[f(\textbf{z})+\log\left( \int\rho(\mathbold{\lambda})q(\textbf{z}|\mathbold{\lambda})d\mathbold{\lambda}\right)\right]d\textbf{z}d\mathbold{\lambda}\\
        =& \int\chi(\mathbold{\lambda})\int q(\textbf{z}|\mathbold{\lambda})f(\textbf{z})d\textbf{z}d\mathbold{\lambda}
        + \frac{1}{\varepsilon}\int\left( \rho(\mathbold{\lambda})+\varepsilon \chi(\mathbold{\lambda})\right)\int q(z|\mathbold{\lambda})\log\left( \int\left( \rho(\mathbold{\lambda})+\varepsilon \chi(\mathbold{\lambda})\right)q(\textbf{z}|\mathbold{\lambda})d\mathbold{\mathbold{\lambda}}\right)d\textbf{z}d\mathbold{\lambda}\\
        -&\frac{1}{\varepsilon}\int\rho(\mathbold{\lambda})\int q(\textbf{z}|\mathbold{\lambda})\log\left( \int\rho(\mathbold{\lambda})q(\textbf{z}|\mathbold{\lambda})d\mathbold{\lambda}\right)d\textbf{z}d\mathbold{\lambda}\\
        =& \int\chi(\mathbold{\lambda})\int q(\textbf{z}|\mathbold{\lambda})f(\textbf{z})d\textbf{z}d\mathbold{\lambda}\\
        +& \frac{1}{\varepsilon}\int\left( \rho(\mathbold{\lambda})+\varepsilon \chi(\mathbold{\lambda})\right)\int q(\textbf{z}|\mathbold{\lambda})\log\left( \int\left( \rho(\mathbold{\lambda})+\varepsilon \chi(\mathbold{\lambda})\right)q(z|\mathbold{\lambda})d\mathbold{\mathbold{\lambda}}\right)d\textbf{z}d\mathbold{\lambda} - \\
        &\frac{1}{\varepsilon}\int\rho(\mathbold{\lambda})\int q(\textbf{z}|\mathbold{\lambda})\log \left(\int\left( \rho(\mathbold{\lambda})+\varepsilon \chi(\mathbold{\lambda})\right)q(\textbf{z}|\mathbold{\lambda})d\mathbold{\lambda} \right)d\textbf{z}d\mathbold{\lambda}\\ 
        +& \frac{1}{\varepsilon}\left[ \int\rho(\mathbold{\lambda})\int q(\textbf{z}|\mathbold{\lambda})\log \left(\int\left( \rho(\mathbold{\lambda})+\varepsilon \chi(\mathbold{\lambda})\right)q(\textbf{z}|\mathbold{\lambda})d\mathbold{\lambda} \right)d\textbf{z}d\mathbold{\lambda}-\int\rho(\mathbold{\lambda})\int q(\textbf{z}|\mathbold{\lambda})\log\left( \int\rho(\mathbold{\lambda})q(z|\mathbold{\lambda})d\mathbold{\lambda}\right)d\textbf{z}d\mathbold{\lambda}\right]\\
        =& \int\chi(\mathbold{\lambda})\int q(\textbf{z}|\mathbold{\lambda})f(\textbf{z})d\textbf{z}d\mathbold{\lambda}\\
        +& \underbrace{\int\chi(\mathbold{\lambda})\int q(\textbf{z}|\mathbold{\lambda})\log\left(\int \left(\rho(\mathbold{\lambda}) + \varepsilon\chi(\mathbold{\lambda})\right)q(\textbf{z}|\mathbold{\lambda})d\mathbold{\lambda}\right)d\textbf{z}d\mathbold{\lambda}}_{\text{(a)}}\\
        +& \underbrace{\frac{1}{\varepsilon}\left[ \int\rho(\mathbold{\lambda})\int q(\textbf{z}|\mathbold{\lambda})
        \log \left(1 + \frac{\varepsilon \int\chi(\mathbold{\lambda})q(\textbf{z}|\mathbold{\lambda})d\mathbold{\lambda}}{\int \rho(\mathbold{\lambda})q(\textbf{z}|\mathbold{\lambda})d\mathbold{\lambda}} \right)d\textbf{z}d\mathbold{\lambda}
        \right]}_{\text{(b)}}.
    \end{align*}
    We process parts (a) and (b), when $\varepsilon\rightarrow 0$, as follows:
    \begin{align*}
        \lim_{\varepsilon\rightarrow 0}\text{(a)} =& \int\chi(\mathbold{\lambda})\int q(\textbf{z}|\mathbold{\lambda})\log\left(\int \rho(\mathbold{\lambda})q(\textbf{z}|\mathbold{\lambda})d\mathbold{\lambda}\right)d\textbf{z}d\mathbold{\lambda},\\
        \lim_{\varepsilon\rightarrow 0}\text{(b)} =& \int \rho(\mathbold{\lambda})\int q(\textbf{z}|\mathbold{\lambda})
       \frac{\int\chi(\mathbold{\lambda})q(\textbf{z}|\mathbold{\lambda})d\mathbold{\lambda}}{\int\rho(\mathbold{\lambda})q(\textbf{z}|\mathbold{\lambda})d\mathbold{\lambda}}d\textbf{z}d\mathbold{\lambda}       =\int\int \rho(\mathbold{\lambda})q(\textbf{z}|\mathbold{\lambda})d\mathbold{\lambda}\frac{\int\chi(\mathbold{\lambda})q(\textbf{z}|\mathbold{\lambda})d\mathbold{\lambda}}{\int \rho(\mathbold{\lambda})q(\textbf{z}|\mathbold{\lambda})d\mathbold{\lambda}}d\textbf{z}\\
       =& \int\chi(\mathbold{\lambda})\int  q(\textbf{z}|\mathbold{\lambda})d\textbf{z}d\mathbold{\lambda}= \int\chi(\mathbold{\lambda})d\mathbold{\lambda},
    \end{align*}
    where we have used the following equality for (b): $\lim_{\varepsilon\rightarrow0}\frac{\log \left(1 + \varepsilon x \right)}{\varepsilon}=x$ for all $x\in \mathbb{R}$.
    
    So, we have:
    \begin{align*}
        \lim_{\varepsilon\rightarrow 0}\frac{1}{\varepsilon}\left[\mathcal{L}(\rho+\varepsilon \chi) - \mathcal{L}(\rho)\right]=\int\chi(\mathbold{\lambda})\left(\mathbb{E}_{\textbf{z}\sim q(\cdot|\mathbold{\lambda})}\left[ f(\textbf{z}) + \log q(\textbf{z}) \right] + 1\right)d\mathbold{\lambda}.
    \end{align*}
    By definition of the first variation of $\mathcal{L}$, this completes the proof of Theorem \ref{thm:firstvariation}.
\end{proof}

\section{Proof of Proposition \ref{lemma:continuoustimedynamics}}
\label{proof:continuoustimedynamics}
\begin{proof}
    Using the differential calculus in the Wasserstein space and the chain rule, we have:
\begin{align*}
    \frac{d \mathcal{L}(\rho_{t})}{d t}=& -\int \delta \mathcal{L}(\rho_{t})(\mathbold{\lambda})\texttt{div}(\rho_{t}\textbf{C}v_{t})d\mathbold{\lambda} 
    = \int\langle \textbf{C}(\mathbold{\lambda})v_{t}(\mathbold{\lambda}), \nabla_{\mathbold{\lambda}} \delta \mathcal{L}(\rho_{t}) (\mathbold{\lambda})\rangle d \rho_{t}(\mathbold{\lambda})\\
    =& -\int\langle v_{t}(\mathbold{\lambda}),\textbf{C}(\mathbold{\lambda})v_{t}(\mathbold{\lambda}) \rangle d \rho_{t}(\mathbold{\lambda}).
\end{align*}
\end{proof}
\section{Proof of Proposition \ref{lemma:descentlemma}}
\label{appendix:descentlemma}
\begin{proof}
    Denote $v_{n}(\mathbold{\lambda})=-\nabla_{\mathbold{\lambda}}\mathbb{E}_{\textbf{z}\sim q(\cdot|\mathbold{\lambda})}\left[ f(\textbf{z}) + \log q_{n}(\textbf{z}) \right]$ where $q_{n}(\textbf{z})=\mathbb{E}_{\mathbold{\lambda}\sim \rho_{n}}\left[q(\textbf{z}|\mathbold{\lambda})\right]$, $\Phi_{t}(\mathbold{\lambda})=\mathbold{\lambda} + t \textbf{C}(\mathbold{\lambda})v_{n}(\mathbold{\lambda})$ for $t\in [0,\eta]$, and $\nu_{t}=(\Phi_{t})_{\#}\rho_{n}$. Then it is evident that $\nu_{0}=\rho_{n}$ and $\nu_{\eta}=\rho_{n+1}$.

    We define $\phi(t)=\mathcal{L}(\nu_{t})$. Clearly, $\phi(0)=\mathcal{L}(\rho_{n})$ and $\phi(\eta)=\mathcal{L}(\rho_{n+1})$. Using a Taylor expansion, we have:
    \begin{equation}
    \label{eqn:taylor}
        \phi(\eta)=\phi(0) + \eta \phi^\prime(0) + \int_{0}^{\eta}(\eta - t)\phi^{\prime\prime}(t)dt.
    \end{equation}
    Using the chain rule, we can estimate $\phi^\prime(t)$ as follows:
    \begin{align*}
        \phi^\prime(t)=& \frac{d}{dt}\int\rho_{t}(\mathbold{\lambda})\int q(\textbf{z}|\mathbold{\lambda})\left[ f(\textbf{z})+\log \left( \int_{\mathbold{\lambda}}\rho_{t}(\mathbold{\lambda})q(\textbf{z}|\mathbold{\lambda})d\mathbold{\lambda}\right)\right]d\textbf{z}d\mathbold{\lambda}\\
        =& \frac{d}{dt}\int\rho_{n}(\mathbold{\lambda})\int q(\textbf{z}|\Phi_{t}(\mathbold{\lambda}))\left[ f(\textbf{z})+\log \left( \int\rho_{n}(\mathbold{\lambda})q(\textbf{z}|\Phi_{t}(\mathbold{\lambda}))d\mathbold{\lambda}\right)\right]d\textbf{z}d\mathbold{\lambda}\\
        =& \int\rho_{n}(\mathbold{\lambda})\langle \frac{d\Phi_{t}(\mathbold{\lambda})}{dt}, \nabla_{\mathbold{\lambda}}\int q(\textbf{z}|\Phi_{t}(\mathbold{\lambda}))\left[f(\textbf{z})+\log \left(\int\rho_{n}(\mathbold{\lambda})q(\textbf{z}|\Phi_{t}(\mathbold{\lambda}))d\mathbold{\lambda} \right) \right]d\textbf{z} \rangle d\mathbold{\lambda}\\
        =& \int\rho_{n}(\mathbold{\lambda})\langle \textbf{C}(\mathbold{\lambda})v_{n}(\mathbold{\lambda}), \nabla_{\mathbold{\lambda}}\mathbb{E}_{\textbf{z}\sim q(\cdot|\Phi_{t}(\mathbold{\lambda}))}\left[ f(\textbf{z}) + \log q_{t}(\textbf{z}) \right] \rangle d\mathbold{\lambda},
    \end{align*}
    where $q_{t}(\textbf{z})=\int\rho_{n}(\mathbold{\lambda})q(\textbf{z}|\Phi_{t}(\mathbold{\lambda}))d\mathbold{\lambda}$. The second equality is obtained by applying the change of variable formula, and the last equality is obtained by the definition of $v_{n}$ and $\Phi_{t}$.

    So, at $t=0$, we have:
    \begin{equation}
     \label{eqn:grad}
        \phi^\prime(0)= -\int\langle  v_{n}(\mathbold{\lambda}), \textbf{C}(\mathbold{\lambda})v_{n}(\mathbold{\lambda})\rangle \rho_{n}(\mathbold{\lambda})d\mathbold{\lambda}=-\langle v_{n}, \textbf{C}v_{n} \rangle_{L^{2}(p_{n})}.
    \end{equation}

    Next we estimate $\phi^{\prime\prime}(\mathbold{\lambda})$ as follows:
    \begin{align*}
        \phi^{\prime\prime}(\mathbold{\lambda})=& \frac{d}{dt} \phi^{\prime}(t)=\int \rho_{n}(\mathbold{\lambda})\langle \textbf{C}(\mathbold{\lambda})v_{n}(\mathbold{\lambda}), \frac{d}{dt}\nabla_{\mathbold{\lambda}}\mathbb{E}_{\textbf{z}\sim q(\cdot|\Phi_{t}(\mathbold{\lambda}))}\left[ f(\textbf{z}) + \log q_{t}(\textbf{z}) \right]\rangle d\mathbold{\lambda}\\
        =& \int \rho_{n}(\mathbold{\lambda})\langle \textbf{C}(\mathbold{\lambda})v_{n}(\mathbold{\lambda}), \textbf{H}_{t}(\mathbold{\lambda})v_{n}(\mathbold{\lambda})\rangle d\mathbold{\lambda} =\langle \textbf{C}v_{n}, \textbf{H}_{t}v_{n} \rangle_{L^{2}(p_{n})}, 
    \end{align*}
    where $\textbf{H}_{t}(\mathbold{\lambda})=\nabla^{2}_{\mathbold{\lambda}}\mathbb{E}_{\textbf{z}\sim q(\cdot|\Phi_{t}(\mathbold{\lambda}))}\left[  f(\textbf{z}) + \log q_{t}(\textbf{z})\right]$. Now we need to upper-bound the operator norm of $\textbf{H}_{t}$. Denote $\mathbold{\lambda}_{t}=\Phi_{t}(\mathbold{\lambda})$, we can rewrite 
    $\textbf{H}_{t}$ as follows:
    \begin{align*}
        \textbf{H}_{t}(\mathbold{\lambda}) =& \nabla_{\mathbold{\lambda}}\left[\int\nabla_{\mathbold{\lambda}}q(\textbf{z}|\mathbold{\lambda}_{t})\left[ f(\textbf{z})+\log q_{t}(\textbf{z})\right]d\textbf{z} + \int  q(\textbf{z}|\mathbold{\lambda}_{t})\nabla_{\mathbold{\lambda}}\log q_{t}(\textbf{z})d\textbf{z}\right]\\
        =& \int\nabla_{\mathbold{\lambda}}^{2}q(\textbf{z}|\mathbold{\lambda}_{t})\left[ f(\textbf{z})+\log q_{t}(\textbf{z})\right]d\textbf{z},
    \end{align*}
    where the second equality is obtained using the fact: $\nabla_{\mathbold{\lambda}}\log q_{t}(\textbf{z})=0$. Therefore:
    \begin{align*}
        &\textbf{H}_{t}(\mathbold{\lambda}) = \int\nabla_{\mathbold{\lambda}}\left[ q(\textbf{z}|\mathbold{\lambda}_{t})\nabla_{\mathbold{\lambda}}\log q(\textbf{z}|\mathbold{\lambda}_{t}) \right]\left[ f(\textbf{z}) + \log q_{t}(\textbf{z}) \right]d\textbf{z}\\
        =& \int\left[ \nabla_{\mathbold{\lambda}} q(\textbf{z}|\mathbold{\lambda}_{t})\right] \left[ \nabla_{\mathbold{\lambda}}\log q(\textbf{z}|\mathbold{\lambda}_{t})\right]^{\intercal}\left[ f(\textbf{z}) + \log q_{t}(\textbf{z}) \right]d\textbf{z}
        + \int q(\textbf{z}|\mathbold{\lambda}_{t})\nabla_{\lambda}^{2}\log q(\textbf{z}|\mathbold{\lambda}_{t})\left[f(\textbf{z}) + \log q_{t}(\textbf{z}) \right]d\textbf{z}\\
        =& \int q(\textbf{z}|\mathbold{\lambda}_{t})\left[ \nabla_{\mathbold{\lambda}} \log q(\textbf{z}|\mathbold{\lambda}_{t})\right] \left[ \nabla_{\mathbold{\lambda}}\log q(\textbf{z}|\mathbold{\lambda}_{t})\right]^{\intercal}\left[ f(\textbf{z}) + \log q_{t}(\textbf{z}) \right]d\textbf{z}\\
        +& \int q(\textbf{z}|\mathbold{\lambda}_{t})\nabla_{\lambda}^{2}\log q(\textbf{z}|\mathbold{\lambda}_{t})\left[f(\textbf{z}) + \log q_{t}(\textbf{z}) \right]d\textbf{z}.
    \end{align*}
    
    Then, the operator norm of $\textbf{H}_{t}(\mathbold{\lambda})$ can be upper-bounded as follows:
    \begin{align}
    \label{eqn:parta}
    \begin{split}
    &\lVert \textbf{H}_{t}(\mathbold{\lambda}) \rVert_{\text{op}} \leq  \mathbb{E}_{\textbf{z}\sim q(\cdot|\mathbold{\lambda}_{t})}\lVert \nabla_{\mathbold{\lambda}}\log q(\textbf{z}|\mathbold{\lambda}_{t})\rVert_{2}^{2}|f(\textbf{z}) + \log q_{t}(\textbf{z})| + \mathbb{E}_{\textbf{z}\sim q(\cdot|\mathbold{\lambda}_{t})}\lVert \nabla_{\mathbold{\lambda}}^{2}\log q(\textbf{z}|\mathbold{\lambda}_{t}) \rVert_{\text{op}}|f(\textbf{z})+\log q_{t}(\textbf{z})|\\
    \leq & \left[\mathbb{E}_{\textbf{z}\sim q(\cdot|\mathbold{\lambda}_{t})}\lVert \nabla_{\mathbold{\lambda}}\log q(\textbf{z}|\mathbold{\lambda}_{t})\rVert_{2}^{2}\right] \left[\mathbb{E}_{\textbf{z}\sim q(\cdot|\mathbold{\lambda}_{t})}|f(\textbf{z})| + \mathbb{E}_{\textbf{z}\sim q(\cdot|\mathbold{\lambda}_{t})}|\log q_{t}(\textbf{z})| \right]\\
    +&  \left[ \mathbb{E}_{\textbf{z}\sim q(\cdot|\mathbold{\lambda}_{t})}\lVert \nabla_{\mathbold{\lambda}}^{2}\log q(\textbf{z}|\mathbold{\lambda}_{t}) \rVert_{\text{op}}\right] \left[ \mathbb{E}_{\textbf{z}\sim q(\cdot|\mathbold{\lambda}_{t})} |f(\textbf{z})|+\mathbb{E}_{\textbf{z}\sim q(\cdot|\mathbold{\lambda}_{t})}|\log q_{t}(\textbf{z})|\right]\\
    \leq & (\alpha+\beta)(M_{1}+M_{2}),
    \end{split}
    \end{align}
    where the first inequality is obtained using the equality $\lVert \textbf{a}\textbf{b}^{\intercal}\rVert_{\text{op}}= \lVert \textbf{a} \rVert_{2}\lVert \textbf{b} \rVert_{2}$ for two vectors $\textbf{a}$ and $\textbf{b}$; the second inequality is obtained by using the inequality $\mathbb{E}_{\textbf{z}\sim q(\cdot|\mathbold{\lambda})}|h_{1}(\textbf{z})||h_{2}(\textbf{z})|\leq \mathbb{E}_{\textbf{z}\sim q(\cdot|\mathbold{\lambda})}|h_{1}(\textbf{z})|\mathbb{E}_{\textbf{z}\sim q(\cdot|\mathbold{\lambda})}|h_{2}(\textbf{z})|$ for two scalar functions $h_{1}$ and $h_{2}$; the last inequality is obtained by using assumptions \textbf{(A1)}, \textbf{(A2)} and \textbf{(A3)}.

    Thus, plugging the results \eqref{eqn:grad} and \eqref{eqn:parta} into \eqref{eqn:taylor}, we can derive the following inequality:
    \begin{align*}
        \mathcal{L}(\rho_{n+1}) \leq & \mathcal{L}(\rho_{n}) -\eta \langle v_{n},\textbf{C}v_{n}\rangle_{L^{2}(\rho_{n})} 
        + \int_{0}^{\eta}(\eta - t)\kappa \langle v_{n},\textbf{C}v_{n}\rangle_{L^{2}(\rho_{n})}dt\\
        =& \mathcal{L}(\rho_{n}) -\eta \langle v_{n},\textbf{C}v_{n}\rangle_{L^{2}(\rho_{n})} 
        + \kappa \frac{\eta^{2}}{2}\langle v_{n},\textbf{C}v_{n}\rangle_{L^{2}(\rho_{n})},
    \end{align*}
    which concludes the proof of Proposition \ref{lemma:descentlemma}.
\end{proof}

\section{Proof of Proposition \ref{lemma:mirrordescentlemma}}
\label{proof:mirrordescentlemma}
In this section, we provide the proof of Proposition \ref{lemma:mirrordescentlemma}. First we introduce the following useful lemmas.
\begin{lem}
    \label{lemma:int_log}
    Given two functions $g:\Omega\rightarrow \mathbb{R}$ and $h:\Omega\rightarrow \mathbb{R}$. We have:
    \begin{equation*}
        \int g(\mathbold{\lambda})\log \frac{g(\mathbold{\lambda})}{h(\mathbold{\lambda})}d\mathbold{\lambda}\geq \left(\int g(\mathbold{\lambda}) d\mathbold{\lambda} \right)\log \frac{\int g(\mathbold{\lambda})d\mathbold{\lambda}}{\int h(\mathbold{\lambda})d\mathbold{\lambda}}.
    \end{equation*}
\end{lem}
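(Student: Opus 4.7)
The plan is to prove this as a continuous analogue of the classical log-sum inequality, by reducing it to Jensen's inequality applied to the convex function $\phi(x) = x\log x$. I would first remark that the statement implicitly requires $g, h \geq 0$ (with $\int h\, d\mathbold{\lambda} > 0$), since otherwise the logarithms are ill-defined; under that standing assumption, the rest is a normalization trick.

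First I would set $A = \int g(\mathbold{\lambda})\,d\mathbold{\lambda}$ and $B = \int h(\mathbold{\lambda})\,d\mathbold{\lambda}$, and rewrite the left-hand side by multiplying and dividing by $h(\mathbold{\lambda})$ and by $B$:
\begin{equation*}
\int g(\mathbold{\lambda})\log\frac{g(\mathbold{\lambda})}{h(\mathbold{\lambda})}\,d\mathbold{\lambda}
= B\int \phi\!\left(\frac{g(\mathbold{\lambda})}{h(\mathbold{\lambda})}\right)\,d\mu(\mathbold{\lambda}),
\end{equation*}
where $d\mu(\mathbold{\lambda}) = h(\mathbold{\lambda})/B\, d\mathbold{\lambda}$ is a probability measure on $\Omega$ and $\phi(x) = x\log x$. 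The point of this rewriting is that $\phi$ is convex on $(0,\infty)$, so Jensen's inequality applies cleanly under $\mu$.

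Next I would apply Jensen's inequality to get
\begin{equation*}
B\int \phi\!\left(\frac{g(\mathbold{\lambda})}{h(\mathbold{\lambda})}\right)d\mu(\mathbold{\lambda})
\;\geq\; B\,\phi\!\left(\int \frac{g(\mathbold{\lambda})}{h(\mathbold{\lambda})}\,d\mu(\mathbold{\lambda})\right)
= B\,\phi\!\left(\frac{A}{B}\right),
\end{equation*}
and then simplify the right-hand side: $B\phi(A/B) = B\cdot(A/B)\log(A/B) = A\log(A/B)$, which is precisely $\left(\int g\,d\mathbold{\lambda}\right)\log\frac{\int g\,d\mathbold{\lambda}}{\int h\,d\mathbold{\lambda}}$. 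Combining these two displays yields the claim.

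The argument has essentially no obstacle: the only care-point is handling pointwise degeneracies, i.e., sets where $h(\mathbold{\lambda}) = 0$ but $g(\mathbold{\lambda}) > 0$ (in which case the integrand on the left is $+\infty$ by convention and the inequality trivially holds), and where $g(\mathbold{\lambda}) = 0$ (where we use $0\log 0 = 0$). An alternative, equally short route I could fall back on uses the elementary inequality $u\log(u/v) \geq u - v$ for $u,v > 0$ applied with $u = g(\mathbold{\lambda})/A$ and $v = h(\mathbold{\lambda})/B$, then integrates over $\mathbold{\lambda}$ to cancel the linear remainder; but the Jensen approach is the cleanest to present.
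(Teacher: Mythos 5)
Your proof is correct and follows essentially the same route as the paper's: both reduce the claim to Jensen's inequality after normalizing one of the two functions into a probability measure. The only (immaterial) difference is that the paper weights by $g/\int g$ and applies Jensen to the convex function $u\mapsto -\log u$ evaluated at $h/g$ (via its more general Lemma with an arbitrary convex $f$), whereas you weight by $h/\int h$ and use $x\mapsto x\log x$ evaluated at $g/h$ --- dual formulations of the same argument --- and your remarks on nonnegativity and the degenerate sets where $g$ or $h$ vanish are sensible housekeeping that the paper omits.
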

We can generalize Lemma \ref{lemma:int_log} above as follows:
\begin{lem}
\label{lemma:int_f}
    Given two functions $g:\Omega\rightarrow \mathbb{R}$, $h:\Omega\rightarrow \mathbb{R}$ and $f:\mathbb{R}\rightarrow \mathbb{R}$ is a convex function. We have:
    \begin{equation*}
        \int g(\mathbold{\lambda})f\left( \frac{h(\mathbold{\lambda})}{g(\mathbold{\lambda})}\right)d\mathbold{\lambda}\geq \left(\int g(\mathbold{\lambda}) d\mathbold{\lambda} \right)f\left( \frac{\int h(\mathbold{\lambda})d\mathbold{\lambda}}{\int g(\mathbold{\lambda})d\mathbold{\lambda}}\right).
    \end{equation*}
\end{lem}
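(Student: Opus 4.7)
The plan is to recognize Lemma \ref{lemma:int_f} as a direct instance of Jensen's inequality, once we view $g$ (assumed nonnegative with $G := \int g(\mathbold{\lambda})d\mathbold{\lambda} \in (0,\infty)$) as an unnormalized probability density. Specifically, I would introduce the probability measure $d\mu(\mathbold{\lambda}) := g(\mathbold{\lambda})/G \, d\mathbold{\lambda}$ on $\Omega$, and consider the random variable $X(\mathbold{\lambda}) := h(\mathbold{\lambda})/g(\mathbold{\lambda})$ (with the standard convention $0/0 = 0$, or restricting to the support of $g$, which is harmless since both sides only pick up contributions where $g > 0$).

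Under this change of viewpoint, the left-hand side rewrites cleanly as
\begin{equation*}
    \int g(\mathbold{\lambda}) f\!\left(\frac{h(\mathbold{\lambda})}{g(\mathbold{\lambda})}\right) d\mathbold{\lambda} = G \int f(X(\mathbold{\lambda})) d\mu(\mathbold{\lambda}) = G \,\mathbb{E}_\mu\bigl[f(X)\bigr],
\end{equation*}
while
\begin{equation*}
    \mathbb{E}_\mu[X] = \int \frac{g(\mathbold{\lambda})}{G}\cdot\frac{h(\mathbold{\lambda})}{g(\mathbold{\lambda})}\,d\mathbold{\lambda} = \frac{1}{G}\int h(\mathbold{\lambda})\,d\mathbold{\lambda}.
\end{equation*}
Applying Jensen's inequality to the convex $f$ then gives $\mathbb{E}_\mu[f(X)] \geq f(\mathbb{E}_\mu[X])$, and multiplying by $G$ yields precisely the claimed bound. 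As a sanity check, specializing $f(x) = -\log x$ (which is convex) recovers Lemma \ref{lemma:int_log}, confirming the claimed generalization.

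The only genuine subtlety is hygiene around the hypotheses: the statement implicitly requires $g \geq 0$, $G > 0$, and that $h/g$ lies in the domain of $f$ $\mu$-almost everywhere (automatic, e.g., if $h \geq 0$ and $f$ is defined on $[0,\infty)$, which is the situation in which Lemma \ref{lemma:int_log} is actually invoked in Appendix \ref{proof:mirrordescentlemma}). I would note these assumptions at the start of the proof and then the three-line Jensen argument above closes the lemma; there is no real obstacle beyond stating the nonnegativity/integrability conditions carefully.
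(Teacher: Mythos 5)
Your proof is correct and follows essentially the same route as the paper's: both normalize $g$ into a probability density, set $u = h/g$, and apply the continuous Jensen inequality to the convex $f$. Your additional remarks on the implicit hypotheses ($g \geq 0$, $\int g > 0$, and $h/g$ lying in the domain of $f$) are a welcome tightening that the paper's own proof leaves unstated.
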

Note that Lemma \ref{lemma:int_log} can be trivially obtained by setting $f(u)=-\log u$ in Lemma \ref{lemma:int_f}.
\begin{proof}
    Let $w(\mathbold{\lambda})=g(\mathbold{\lambda})/\int_{\Omega}g(\mathbold{\lambda})d\mathbold{\lambda}$ and $u(\mathbold{\lambda})=h(\mathbold{\lambda})/g(\mathbold{\lambda})$. Then by applying the continuous Jensen inequality \citep{horvath2012refinement} for the convex function $f$, we have:
    \begin{equation*}
    \int w(\mathbold{\lambda})f(u(\mathbold{\lambda}))d\mathbold{\lambda}\geq f\left(\int w(\mathbold{\lambda})u(\mathbold{\lambda})d\mathbold{\lambda} \right),
    \end{equation*}
    which is equivalent to:
    \begin{equation*}
    \int g(\mathbold{\lambda})f\left(\frac{h(\mathbold{\lambda})}{g(\mathbold{\lambda})}\right)d\mathbold{\lambda}\geq \left( \int g(\mathbold{\lambda})d\mathbold{\lambda}\right)f\left(\int \frac{\cancel{g(\mathbold{\lambda})}}{\int g(\mathbold{\lambda})d\mathbold{\lambda}}\frac{h(\mathbold{\lambda})}{\cancel{g(\mathbold{\lambda})}} d\mathbold{\lambda} \right),
    \end{equation*}
    which concludes the proof of Lemma \ref{lemma:int_f}.
\end{proof}

\begin{lem}
\label{lemma:smoothness}
    Let $\mathcal{L}$ be defined in (\ref{eqn:negatedelbo}). For two distributions $\rho^\prime$, $\rho\in\mathcal{P}(\Omega)$, we have:
    \begin{equation}
    \label{eqn:smoothness}
        \mathcal{L}(\rho^{\prime})- \mathcal{L}(\rho)-\int \delta \mathcal{L}(\rho)(\mathbold{\lambda})\left(\rho^{\prime}(\mathbold{\lambda})-\rho(\mathbold{\lambda}) \right)d\mathbold{\lambda} \leq \text{KL}(\rho^{\prime}, \rho).
    \end{equation}
\end{lem}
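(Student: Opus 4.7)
The plan is to split the functional into a linear part plus a nonlinear entropy-like part, show that the residual in (\ref{eqn:smoothness}) collapses to a KL between the \emph{latent marginals} $q_\rho$, and then bound that KL by $\text{KL}(\rho^\prime,\rho)$ via a pointwise application of Lemma \ref{lemma:int_log}.

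Write $\mathcal{L}(\rho) = F_1(\rho) + F_2(\rho)$ with $F_1(\rho) = \int\rho(\mathbold{\lambda})\mathbb{E}_{\textbf{z}\sim q(\cdot|\mathbold{\lambda})}[f(\textbf{z})]\,d\mathbold{\lambda}$ (linear in $\rho$) and $F_2(\rho) = \int q_\rho(\textbf{z})\log q_\rho(\textbf{z})\,d\textbf{z}$, where $q_\rho(\textbf{z}) = \int q(\textbf{z}|\mathbold{\lambda})\rho(\mathbold{\lambda})\,d\mathbold{\lambda}$. Because $F_1$ is linear, its contribution to the Bregman-type residual on the left of (\ref{eqn:smoothness}) vanishes identically, so the entire task reduces to controlling the $F_2$ residual.

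For $F_2$, a direct calculation (a stripped-down version of the one in the proof of Theorem \ref{thm:firstvariation}) gives $\delta F_2(\rho)(\mathbold{\lambda}) = \int q(\textbf{z}|\mathbold{\lambda})\log q_\rho(\textbf{z})\,d\textbf{z} + 1$. Substituting, and using both $\int q(\textbf{z}|\mathbold{\lambda})(\rho^\prime(\mathbold{\lambda})-\rho(\mathbold{\lambda}))\,d\mathbold{\lambda} = q_{\rho^\prime}(\textbf{z})-q_\rho(\textbf{z})$ and $\int(\rho^\prime-\rho)\,d\mathbold{\lambda}=0$ (so the $+1$ drops out), a short cancellation reduces the $F_2$ residual to exactly
\begin{equation*}
\int q_{\rho^\prime}(\textbf{z})\log\frac{q_{\rho^\prime}(\textbf{z})}{q_\rho(\textbf{z})}\,d\textbf{z} \;=\; \text{KL}(q_{\rho^\prime},q_\rho).
\end{equation*}

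It then remains to establish $\text{KL}(q_{\rho^\prime},q_\rho) \le \text{KL}(\rho^\prime,\rho)$, the data-processing inequality under marginalization. This follows in one step from Lemma \ref{lemma:int_log}: for each fixed $\textbf{z}$, apply the lemma with $g(\mathbold{\lambda})=\rho^\prime(\mathbold{\lambda})q(\textbf{z}|\mathbold{\lambda})$ and $h(\mathbold{\lambda})=\rho(\mathbold{\lambda})q(\textbf{z}|\mathbold{\lambda})$. The $q(\textbf{z}|\mathbold{\lambda})$ factors cancel inside the logarithm on the LHS, so integrating the resulting pointwise bound over $\textbf{z}$ and using Fubini produces $\int\rho^\prime(\mathbold{\lambda})\log(\rho^\prime(\mathbold{\lambda})/\rho(\mathbold{\lambda}))\,d\mathbold{\lambda} = \text{KL}(\rho^\prime,\rho)$ on the right.

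The main obstacle is genuinely mild: it is the bookkeeping in the middle step, where one must verify the $+1$ in $\delta F_2$ cancels against $\int(\rho^\prime-\rho)=0$ and that the three remaining cross-terms telescope into $\text{KL}(q_{\rho^\prime},q_\rho)$. Once that identification is made, the inequality is an immediate corollary of the log-sum bound already packaged in Lemma \ref{lemma:int_log}.
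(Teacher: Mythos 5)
Your proposal is correct and follows essentially the same route as the paper: both reduce the Bregman-type residual on the left of (\ref{eqn:smoothness}) to $\int q_{\rho^\prime}(\textbf{z})\log\bigl(q_{\rho^\prime}(\textbf{z})/q_{\rho}(\textbf{z})\bigr)d\textbf{z}$ and then invoke Lemma \ref{lemma:int_log} pointwise in $\textbf{z}$ with $g=\rho^\prime q(\cdot|\mathbold{\lambda})$, $h=\rho\, q(\cdot|\mathbold{\lambda})$ to dominate it by $\text{KL}(\rho^\prime,\rho)$. Your explicit linear-plus-entropy decomposition is just a cleaner way of organizing the cancellation the paper carries out directly.
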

\begin{proof}
    We can write $\mathcal{L}(\rho^{\prime})-\mathcal{L}(\rho)$ as follows:
    \begin{equation*}
        \mathcal{L}(\rho^{\prime})-\mathcal{L}(\rho)= \int\int \rho^{\prime}(\mathbold{\lambda})q(\textbf{z}|\mathbold{\lambda})\left[f(\textbf{z})+\log q^{\prime}(\textbf{z}) \right]d\textbf{z}d\mathbold{\lambda} - \int\int \rho(\mathbold{\lambda})q(\textbf{z}|\mathbold{\lambda})\left[f(\textbf{z})+\log q(\textbf{z}) \right]d\textbf{z}d\mathbold{\lambda},
    \end{equation*}
    where $q^{\prime}(\textbf{z})=\int\rho^{\prime}(\mathbold{\lambda})q(\textbf{z}|\mathbold{\lambda})d\mathbold{\lambda}$ and $q(\textbf{z})=\int\rho(\mathbold{\lambda})q(\textbf{z}|\mathbold{\lambda})d\mathbold{\lambda}$.

    Thus, the left-hand side (LHS) of (\ref{eqn:smoothness}) can be rewritten as follows:
    \begin{equation}
        \label{eqn:simplified}
        \text{LHS of (\ref{eqn:smoothness})}=\int\int \rho^{\prime}(\mathbold{\lambda})q(\textbf{z}|\mathbold{\lambda})\log \frac{q^{\prime}(\textbf{z})}{q(\textbf{z})}) d\textbf{z}d\mathbold{\lambda}=\int q^{\prime}(\textbf{z})\log \frac{q^{\prime}(\textbf{z})}{q(\textbf{z})}) d\textbf{z}.
    \end{equation}
    Applying Lemma \ref{lemma:int_log} by setting $g(\mathbold{\lambda})=\rho^{\prime}(\mathbold{\lambda})q(\textbf{z}|\mathbold{\lambda})$ and $h(\mathbold{\lambda})=\rho(\mathbold{\lambda})q(\textbf{z}|\mathbold{\lambda})$, we have:
    \begin{equation}
         q^{\prime}(\textbf{z})\log \frac{q^{\prime}(\textbf{z})}{q(\textbf{z})} \leq \int \rho^{\prime}(\mathbold{\lambda})q(\textbf{z}|\mathbold{\lambda})\log \frac{\rho^{\prime}(\mathbold{\lambda})q(\textbf{z}|\mathbold{\lambda})}{\rho(\mathbold{\lambda})q(\textbf{z}|\mathbold{\lambda})}d\mathbold{\lambda}=\int \rho^{\prime}(\mathbold{\lambda})q(\textbf{z}|\mathbold{\lambda})\log \frac{\rho^{\prime}(\mathbold{\lambda})}{\rho(\mathbold{\lambda})}d\mathbold{\lambda}.
    \end{equation}
    Thus, using (\ref{eqn:simplified}), we have:
    \begin{align*}
        \text{LHS of (\ref{eqn:smoothness})}=\int\int \rho^{\prime}(\mathbold{\lambda})q(\textbf{z}|\mathbold{\lambda})\log \frac{\rho^{\prime}(\mathbold{\lambda})}{\rho(\mathbold{\lambda})}d\mathbold{\lambda}d\textbf{z}=&\int \rho^{\prime}(\mathbold{\lambda})\log \frac{\rho^{\prime}(\mathbold{\lambda})}{\rho(\mathbold{\lambda})}\left(\int q(\textbf{z}|\mathbold{\lambda}) d\textbf{z}\right)d\mathbold{\lambda}\\
        =&\int \rho^{\prime}(\mathbold{\lambda})\log \frac{\rho^{\prime}(\mathbold{\lambda})}{\rho(\mathbold{\lambda})}d\mathbold{\lambda},
    \end{align*}
    which concludes the proof of Lemma \ref{lemma:smoothness}.
\end{proof}

Now we are ready for giving the proof of Proposition \ref{lemma:mirrordescentlemma}.

\begin{proof}
    For the first update of (\ref{eqn:mirrorWGD}), we obtain the following inequality by Proposition \ref{lemma:descentlemma}:
    \begin{equation}
    \label{eqn:firstpartineq}
    \mathcal{L}(\bar{\rho}_{n})-\mathcal{L}(\rho_{n})\leq -\eta \left(1-\kappa\frac{\eta}{2} \right)\langle v_{n},\textbf{C}v_{n} \rangle_{L^{2}(\rho_{n})}.
\end{equation}
We next consider the second update of (\ref{eqn:mirrorWGD}). As $\rho_{n+1}=\text{MD}_{\eta}(\bar{\rho}_{n}, \delta \mathcal{L}(\bar{\rho}_{n}))$, the first-order optimality condition yields:
    \begin{equation}
    \label{eqn:optcondition}
        \eta \delta \mathcal{L}(\bar{\rho}_{n}) =- \log \rho_{n+1} + \log \bar{\rho}_{n} +\text{constant}
    \end{equation}
    Thus, we have:
    \begin{align*}
        &\int \delta \mathcal{L}(\bar{\rho}_{n})(\mathbold{\lambda})(\rho(\mathbold{\lambda}) - \bar{\rho}_{n}(\mathbold{\lambda}))d\mathbold{\lambda}\\
        &=\int \delta \mathcal{L}(\bar{\rho}_{n})(\mathbold{\lambda})(\rho_{n+1}(\mathbold{\lambda}) - \bar{\rho}_{n}(\mathbold{\lambda}))d\mathbold{\lambda} + \int \delta \mathcal{L}(\bar{\rho}_{n})(\mathbold{\lambda})(\rho(\mathbold{\lambda}) - \rho_{n+1}(\mathbold{\lambda}))d\mathbold{\lambda}\\
        &=\int \delta \mathcal{L}(\bar{\rho}_{n})(\mathbold{\lambda})(\rho_{n+1}(\mathbold{\lambda}) - \bar{\rho}_{n}(\mathbold{\lambda}))d\mathbold{\lambda} + \frac{1}{\eta}\int (\log \bar{\rho}_{n}(\mathbold{\lambda})-\log \rho_{n+1}(\mathbold{\lambda}))(\rho(\mathbold{\lambda}) - \rho_{n+1}(\mathbold{\lambda}))d\mathbold{\lambda}\\
        &=\int \delta \mathcal{L}(\bar{\rho}_{n})(\mathbold{\lambda})(\rho_{n+1}(\mathbold{\lambda}) - \bar{\rho}_{n}(\mathbold{\lambda}))d\mathbold{\lambda}+\frac{1}{\eta}\left[ KL(\rho_{n+1}, \bar{\rho}_{n}) + \text{KL}(\rho, \rho_{n+1}) - \text{KL}(\rho, \bar{\rho}_{n})\right],
    \end{align*}
    where the second equality is obtained by applying (\ref{eqn:optcondition}) and the third equality is obtained by three-point identity for the KL divergence.
    Taking $\rho = \bar{\rho}_{n}$, we have:
    \begin{equation}
    \label{eqn:klinequality}
        \int \delta \mathcal{L}(\bar{\rho}_{n})(\mathbold{\lambda})(\rho_{n+1}(\mathbold{\lambda}) - \bar{\rho}_{n}(\mathbold{\lambda}))d\mathbold{\lambda}=-\frac{1}{\eta}\text{KL}(\rho_{n+1},\bar{\rho}_{n}) - \frac{1}{\eta}\text{KL}(\bar{\rho}_{n}, \rho_{n+1}).
    \end{equation}
    Using (\ref{eqn:smoothness}) and (\ref{eqn:klinequality}), we have:
    \begin{align*}
        \mathcal{L}(\rho_{n+1}) - \mathcal{L}(\bar{\rho}_{n})\leq & \int\delta \mathcal{L}(\bar{\rho}_{n})(\mathbold{\lambda})\left(\rho_{n+1}(\mathbold{\lambda})-\bar{\rho}_{n}(\mathbold{\lambda}) \right)d\mathbold{\lambda} +\text{KL}(\rho_{n+1}, \bar{\rho}_{n})\\
        =& -\left(\frac{1}{\eta}-1\right)KL(\rho^{n+1},\rho^{n}) - KL(\rho^{n},\rho^{n+1}).
    \end{align*}
    Combining with (\ref{eqn:firstpartineq}), we conclude the proof of Proposition \ref{lemma:mirrordescentlemma}.
    
\end{proof}

\section{The updates (\ref{eqn:mirrorWGD}) viewed as the \textit{preconditioned} Wasserstein-Fisher-Rao gradient flow of $\mathcal{L}$}
\label{WFRandMD}

Below, we demonstrate that the updates (\ref{eqn:mirrorWGD}) are indeed related to the Wasserstein-Fisher-Rao gradient flow of $\mathcal{L}$ in the limit as $\eta\rightarrow 0$. First, it is known in \cite[Eq 2.6]{gallouet2017jko} that the gradient flow of $\mathcal{L}(\rho_{t})$ with respect to the Fisher-Rao distance is given by:

\begin{align}
    \frac{\partial \rho_{t}(\mathbold{\lambda})}{\partial t} = - \delta \mathcal{L}(\rho_{t})(\mathbold{\lambda})\rho_{t}(\mathbold{\lambda}).
\end{align}

Second, we show that the second update of (\ref{eqn:mirrorWGD}) to update the weights of particles  can be viewed as the Fisher-Rao gradient flow of $\mathcal{L}$ as $\eta\rightarrow 0$. We consider the mirror descent update formula (see (\ref{theorem:entropymd}) and (\ref{eqn:mirrorWGD})). By taking the first variation and setting it to zero, we obtain:
\begin{align}
    \delta \mathcal{L}(\rho_{n})(\mathbold{\lambda}) + \frac{1}{\eta} \log \frac{\rho_{n+1}(\mathbold{\lambda})}{\rho_{n}(\mathbold{\lambda})}=C,
\end{align}
for some constant $C>0$.Therefore,

\begin{align}
    \rho_{n+1}(\mathbold{\lambda})=\frac{\rho_{n}(\mathbold{\lambda})\exp\left(-\eta \delta \mathcal{L}(\rho_{n})(\mathbold{\lambda}) \right)}{\int \rho_{n}(\mathbold{\lambda})\exp\left(-\eta \delta \mathcal{L}(\rho_{n})(\mathbold{\lambda}) \right) d\mathbold{\lambda}}.
\end{align}
As $\eta\rightarrow 0$, we can approximate $\exp(-\eta x)\approx 1 - \eta x + O(\eta^{2})$. Thus,

\begin{align}
    \rho_{n+1}(\mathbold{\lambda})=\frac{\rho_{n}(\mathbold{\lambda})\left[1 - \eta \delta\mathcal{L}(\rho_{n})(\mathbold{\lambda}) + O(\eta^{2})\right]}{\int \rho_{n}(\mathbold{\lambda})\left[1 - \eta \delta\mathcal{L}(\rho_{n})(\mathbold{\lambda}) + O(\eta^{2})\right] d\mathbold{\lambda}}=\frac{\rho_{n}(\mathbold{\lambda})\left[1 - \eta \delta\mathcal{L}(\rho_{n})(\mathbold{\lambda}) + O(\eta^{2})\right]}{1 - \eta \left[1 + \mathcal{L}(\rho_{n}) \right]+O(\eta^{2})},
\end{align}
where we have used $\int \rho_{n}(\mathbold{\lambda})d\mathbold{\lambda}=1$ and $\int \rho_{n}(\mathbold{\lambda})\delta \mathcal{L}(\rho_{n})(\mathbold{\lambda})d\mathbold{\lambda}=\mathcal{L}(\rho_{n})+1$. Therefore,

\begin{align}
    \rho_{n+1}(\mathbold{\lambda})=\rho_{n}(\mathbold{\lambda}) \left[1 - \eta \delta\mathcal{L}(\rho_{n})(\mathbold{\lambda}) + O(\eta^{2})\right],
\end{align}
which leads to:
\begin{align}
    \frac{\rho_{n+1}(\mathbold{\lambda})-\rho_{n}(\mathbold{\lambda})}{\eta} = \delta\mathcal{L}(\rho_{n})(\mathbold{\lambda})\rho_{n}(\mathbold{\lambda}).
\end{align}
In other words, the continuous time limit of the mirror descent update is:

\begin{align*}
    \frac{\partial \rho_{t}(\mathbold{\lambda})}{\partial t} = - \delta \mathcal{L}(\rho_{t})(\mathbold{\lambda})\rho_{t}(\mathbold{\lambda}),
\end{align*}
which is identical to the Fisher-Rao flow of $\mathcal{L}$.

In summary, the first update of (\ref{eqn:mirrorWGD}) corresponds to the preconditioned Wasserstein gradient flow, while the second one aligns with the Fisher-Rao gradient flow as $\eta\rightarrow 0$. Thus, the proposed update (\ref{eqn:mirrorWGD}) can be viewed as the discrete approximation of the preconditioned Wasserstein-Fisher-Rao flow of $\mathcal{L}$.

\section{Derivation of GFlowVI and NGFlowVI for Diagonal Gaussian Variance Inference}
\label{derivations}
In this section we derive updates for GFlowVI and NGFlowVI. For the first case $\textbf{C}=\textbf{I}$, let recall $\mathbold{\lambda}_{k,n}=\left(\mathbold{\lambda}_{k,n}, \textbf{s}_{k,n} \right)$ and mean $\mathbold{\lambda}_{k,n}$ and vector $\textbf{s}_{k,n}$ are updated as follows (see (16)):
\begin{align*}
    \mathbold{\mu}_{k,n+1}=&\mathbold{\mu}_{k,n}-\eta \nabla_{\mathbold{\mu}_{k}}\mathbb{E}_{\textbf{z}\sim q(\cdot|\mathbold{\lambda}_{k,n})}\left[ f(\textbf{z}) + \log q_{n}(\textbf{z})\right].\\
    \textbf{s}_{k,n+1}=&\textbf{s}_{k,n}-\eta \nabla_{\textbf{s}_{k}}\mathbb{E}_{\textbf{z}\sim q(\cdot|\mathbold{\lambda}_{k,n})}\left[ f(\textbf{z}) + \log q_{n}(\textbf{z})\right].
\end{align*}
We denote $G = \mathbb{E}_{\textbf{z}\sim q(\cdot|\mathbold{\lambda}_{k,n})}\left[ f(\textbf{z}) + \log q_{n}(\textbf{z})\right]$. We can estimate the gradients of $G$ with respect to $\mathbold{\mu}_{k}$ and $\textbf{s}_{k}$ as follows:
\begin{align}
\label{eqn:gradmuk}
\begin{split}
    \nabla_{\mathbold{\mu}_{k}}G =& \int\nabla_{\mathbold{\mu}_{k}}q(\textbf{z}|\mathbold{\lambda}_{k,n})\left[f(\textbf{z}) + \log q_{n}(\textbf{z}) \right]d\textbf{z}  + \int q(\textbf{z}|\mathbold{\lambda}_{k,n}) \nabla_{\mathbold{\mu}_{k}} \log q_{n}(\textbf{z})d\textbf{z}\\
    =& -\int \nabla_{\textbf{z}}q(\textbf{z}|\mathbold{\lambda}_{k,n})\left[f(\textbf{z}) + \log q_{n}(\textbf{z}) \right]d\textbf{z}  + \int q(\textbf{z}|\mathbold{\lambda}_{k,n}) \frac{q(\textbf{z}|\mathbold{\lambda}_{k,n})\nabla_{\mathbold{\mu}_{k}} \log q(\textbf{z}|\mathbold{\lambda}_{k,n})}{q_{n}(\textbf{z})}d\textbf{z}\\
    =& \int q(\textbf{z}|\mathbold{\lambda}_{k,n})\left[\nabla_{\textbf{z}}f(\textbf{z}) + \nabla_{\textbf{z}}\log q_{n}(\textbf{z}) \right]d\textbf{z}  + \int q(\textbf{z}|\mathbold{\lambda}_{k,n}) \textbf{w}_{k}(\textbf{z})\nabla_{\mathbold{\mu}_{k}} \log q(\textbf{z}|\mathbold{\lambda}_{k,n})\\
    =& \mathbb{E}_{\textbf{z}\sim q(\cdot|\mathbold{\lambda}_{k,n})}\left[\nabla_{\textbf{z}}f(\textbf{z}) + \nabla_{\textbf{z}}\log q_{n}(\textbf{z}) + \textbf{w}_{k}(\textbf{z})\nabla_{\mathbold{\mu}_{k}}\log q(\textbf{z}|\mathbold{\lambda}_{k,n})\right],
\end{split}
\end{align}
where $\textbf{w}_{k}(\textbf{z})=q(\textbf{z}|\mathbold{\lambda}_{k,n})/q_{n}(\textbf{z})$.
In the second equality, we have used the identity $\nabla_{\mathbold{\mu}}q(\textbf{z}|\mathbold{\lambda})=-\nabla_{\textbf{z}}q(\textbf{z}|\mathbold{\lambda})$ for $q$ being a Gaussian distribution; in the third equality, we have used the integration by parts for the first term and $\nabla_{\mathbold{\mu}}q(\textbf{z}|\mathbold{\lambda})=q(\textbf{z}|\mathbold{\lambda})\nabla_{\mathbold{\mu}}\log q(\textbf{z}|\mathbold{\lambda})$ for the second term.

\begin{align}
\label{eqn:gradsk}
\begin{split}
    &\nabla_{\textbf{s}_{k}}G  = -1\oslash (\textbf{s}_{k,n}\odot\textbf{s}_{k,n})\odot\int\nabla_{\mathbold{\sigma}^{2}_{k}}q(\textbf{z}|\mathbold{\lambda}_{k,n})\left[f(\textbf{z}) + \log q_{n}(\textbf{z}) \right]d\textbf{z} \\
    + & \int q(\textbf{z}|\mathbold{\lambda}_{k,n}) \nabla_{\textbf{s}_{k}} \log q_{n}(\textbf{z})d\textbf{z}\\
    =&-1\oslash (\textbf{s}_{k,n}\odot\textbf{s}_{k,n})\odot\int q(\textbf{z}|\mathbold{\lambda}_{k,n})\frac{1}{2}\texttt{diag}\left[\nabla_{\textbf{z}}^{2}f(\textbf{z}) + \nabla_{\textbf{z}}^{2}\log q_{n}(\textbf{z}) \right]d\textbf{z}\\  + &\int q(\textbf{z}|\mathbold{\lambda}_{k,n}) \frac{q(\textbf{z}|\mathbold{\lambda}_{k,n})\nabla_{\textbf{s}_{k}} \log q(\textbf{z}|\mathbold{\lambda}_{k,n})}{q_{n}(\textbf{z})}d\textbf{z}\\
    =& \mathbb{E}_{\textbf{z}\sim q(\cdot|\mathbold{\lambda}_{k,n})}
    \left[
    -\frac{1}{2}\oslash (\textbf{s}_{k,n}\odot\textbf{s}_{k,n})\odot\texttt{diag}\left[\nabla_{\textbf{z}}^{2}f(\textbf{z}) + \nabla_{\textbf{z}}^{2}\log q_{n}(\textbf{z})\right] + \textbf{w}_{k}(\textbf{z})\nabla_{\textbf{s}_{k}}\log q(\textbf{z}|\mathbold{\lambda}_{k,n})
    \right],
\end{split}
\end{align}
where we have used the change of variable formula in the first equation, the relation (\ref{eqn:relation1}) for the first term of the second equation. By drawing a sample $\textbf{z}$ from $q(\textbf{z}|\mathbold{\lambda}_{k,n})$, we have the update of GFlowVI \eqref{eqn:gflow-vi}.

Next we derive the update of NGFlowVI ($\textbf{C}=\textbf{F}^{-1}$). We consider $\mathbold{\lambda}_{k}$ to be the natural parameter of the diagonal Gaussian $q(\textbf{z}|\mathbold{\lambda}_{k})$. Specifically, the natural parameters and expectation parameters of the $k$-th Gaussian at the $n$-th iteration can be defined as follows:
\begin{align*}
&\mathbold{\lambda}^{(1)}_{k,n}=\textbf{s}_{k,n}\odot\mathbold{\mu}_{k,n}, \mathbold{\lambda}^{(2)}_{k,n}=-\frac{1}{2}\textbf{s}_{k,n},\\
&\textbf{m}^{(1)}_{k,n}=\mathbold{\mu}_{k,n},\textbf{m}^{(2)}_{k,n}=\mathbold{\mu}_{k,n}\odot\mathbold{\mu}_{k,n}+1\oslash\textbf{s}_{k,n}.
\end{align*}
Then, the natural parameters are updated as follows (by \eqref{eqn:meanparamupdate} in the main text):
\begin{align*}
    \mathbold{\lambda}_{k,n+1}=\mathbold{\lambda}_{k,n}- \eta \nabla_{\textbf{m}_{k}}G.
\end{align*}
Using the chain rule (see Appendix B.1 in \cite{khan2018fast}), we can express the gradients of $G$ with respect to expectation parameter $\textbf{m}_{k}$ in terms of the gradients with respect to $\mathbold{\mu}_{k}$ and $\mathbold{\sigma}_{k}^{2}$ as follows:
\begin{align*}
    \nabla_{\textbf{m}_{k}^{(1)}}G = \nabla_{\mathbold{\mu}_{k}}G - 2\left[\nabla_{\mathbold{\sigma}^{2}_{k}}G \right]\mathbold{\mu}_{k},
    \nabla_{\textbf{m}_{k}^{(2)}}G = \nabla_{\mathbold{\sigma}^{2}_{k}}G.
\end{align*}
By following the derivation in \cite{khan2018fast}, the natural-gradient update is simplified as follows:
\begin{align}
\label{eqn:skupdate}
    \textbf{s}_{k,n+1} =& \textbf{s}_{k,n} + 2 \eta \left[\nabla_{\mathbold{\sigma}_{k}^{2}}G \right].
\end{align}
\begin{align}
\label{eqn:mukupdate}
    \mathbold{\mu}_{k,n+1} =& \mathbold{\mu}_{k,n} - \eta \left[ \nabla_{\mathbold{\mu}_{k}}G\right]\oslash \textbf{s}_{k,n+1}.
\end{align}

Using \eqref{eqn:gradsk}, we can derive the full update for $\textbf{s}_{k,n}$ in \eqref{eqn:skupdate} as follows:
\begin{align*}
    \textbf{s}_{k,n+1} =& \textbf{s}_{k,n} - 2 \eta(\textbf{s}_{k,n}\odot \textbf{s}_{k,n})  \left[\nabla_{\textbf{s}_{k}}G \right]\\
    =& \textbf{s}_{k,n} + \mathbb{E}_{\textbf{z}\sim q(\cdot|\mathbold{\lambda}_{k,n})}
    \left[
    \eta\texttt{diag}\left[\nabla_{\textbf{z}}^{2}f(\textbf{z}) + \nabla_{\textbf{z}}^{2}\log q_{n}(\textbf{z})\right]
    -2\eta\textbf{w}_{k}(\textbf{z})(\textbf{s}_{k,n}\odot\textbf{s}_{k,n})\odot\nabla_{\textbf{s}_{k}}\log q(\textbf{z}|\mathbold{\lambda}_{k,n})
    \right]
\end{align*}
Lastly, using \eqref{eqn:gradmuk}, we can derive the full update for $\mathbold{\mu}_{k,n}$ in \eqref{eqn:mukupdate} as follows:
\begin{align*}
    \mathbold{\mu}_{k,n+1} =& \mathbold{\mu}_{k,n} - \eta \left[ \nabla_{\mathbold{\mu}_{k}}G\right]\oslash \textbf{s}_{k,n+1}\\
    =& \mathbold{\mu}_{k,n} - \eta \mathbb{E}_{\textbf{z}\sim q(\cdot|\mathbold{\lambda}_{k,n})}\left[\nabla_{\textbf{z}}f(\textbf{z}) + \nabla_{\textbf{z}}\log q_{n}(\textbf{z}) + \textbf{w}_{n}(\textbf{z})\nabla_{\mathbold{\mu}_{k}}\log q(\textbf{z}|\mathbold{\lambda}_{k,n})\right]\oslash \textbf{s}_{k,n+1}.
\end{align*}
By drawing a sample $\textbf{z}$ from $q(\textbf{z}|\mathbold{\lambda}_{k,n})$, we derive the update of NGFlowVI \eqref{eqn:ngflow-vi}.

\begin{table}[]
\centering
\caption{Illustration on effect of the MD iterates on the average prediction losses of BNNs on three datasets: 'Australian scale' (negative log-likelihood), 'Boston' and 'Concrete' (mean square error). The results compare the the average prediction losses after 1000 iterations of GFlowVI and NGFlowVI (with MD iterates) against their counterparts without MD iterates (w/o-MD), demonstrating that incorporating MD iterates enhances prediction accuracy.}

\begin{tabular}{l c c r}
\hline
\textbf{Methods}&	Australian &	Boston &	Concrete\\
\hline
GFlowVI &	0.51$\pm$0.02 &	1.73$\pm$0.28	& 1.49$\pm$0.08\\
GFlowVI-w/o-MD &	0.52$\pm$0.05	& 1.81$\pm$0.15 &	1.74$\pm$0.03\\
NGFlowVI &	0.6$\pm$0.03	& 1.71$\pm$0.09 &	1.25$\pm$0.06\\
NGFlowVI-w/o-MD &	0.72$\pm$0.05	& 1.87$\pm$0.15 &	1.34$\pm$0.11\\
\hline
\end{tabular}
\label{tab:effectofMD}
\end{table}

\section{Background on the Mirror Descent Algorithm}
\label{appendix:mdalgorithm}
We provide a brief background on the mirror descent (\textbf{MD}) algorithm for optimization. Suppose we wish to minimize a function over a domain $\mathcal{X}$, say $\min_{\textbf{x}\in\mathcal{X}}f(\textbf{x})$. When $\mathcal{X}$ is unconstrained, gradient descent is the standard algorithm to optimize $f$ by solving the following optimization problem for each step $t$:
\begin{align}
\label{eqn:gd}
     \textbf{x}_{t+1}=\argmin_{\textbf{x}\in\mathcal{X}}{\langle\nabla f(\textbf{x}_{t}),\textbf{x}\rangle + \frac{1}{2\eta_{t}}\lVert \textbf{x}-\textbf{x}_{t}\rVert^{2}_{2}}.
\end{align}
To deal with the constrained optimization problems, the Mirror Descent (\textbf{MD}) algorithm replaces $\lVert \cdot\rVert_{2}$ in (\ref{eqn:gd}) with a function $\varphi$ that reflects the geometry of the problem \citep{beck2003mirror}. The \textbf{MD} algorithm chooses $\Phi$ to be the Bregman divergence induced by a strongly convex function $\varphi:\mathcal{X}\rightarrow \mathbb{R}$ as follows: $\Phi(\textbf{x}^\prime, \textbf{x})=\varphi(\textbf{x}^\prime) - \varphi(\textbf{x}) - \langle \nabla \varphi(\textbf{x}),\textbf{x}^\prime-\textbf{x}\rangle$ for $\textbf{x}^\prime, \textbf{x}\in \mathcal{X}$. Then, the solution of (\ref{eqn:gd}) for each step becomes:
\begin{align}
\label{eqn:mdupdate}
     \textbf{x}_{t+1}=\nabla \varphi^{*} \left( \nabla \varphi(\textbf{x}_{t})-\eta_{t} \nabla f(\textbf{x}_{t}) \right),
\end{align}
where $\varphi^{*}(\textbf{y})=\sup_{\textbf{x} \in \mathcal{X}}{\langle \textbf{x},\textbf{y}\rangle -\varphi(\textbf{x})}$ is the convex conjugate of function $\varphi$ and $\nabla \varphi^{*}(\textbf{y})=(\nabla \varphi)^{-1}(\textbf{y})$ is the inverse map. Intuitively, the \textbf{MD} update (\ref{eqn:mdupdate}) is composed of three steps: 1) mapping $\textbf{x}_{t}$ to $\textbf{y}_{t}$ by $\nabla \varphi$, 2) applying the update: $\textbf{y}_{k+1}=\textbf{y}_{t} - \eta_{t}\nabla f(\textbf{x}_{t})$, and 3) mapping back through $\textbf{x}_{t+1} = \nabla \varphi^{*}(\textbf{y}_{t+1})$.

\begin{table}[]
    \centering
    \caption{Banana-shaped distribution and X-shaped mixture of Gaussians.}
        \begin{tabular}{l l r}
        \hline
        \textbf{Name} & $\pi(\textbf{z})$ & \textbf{Parameters}\\
        \hline
        Banana-shaped & $\textbf{z}=\left( v_{1}, v_{1}^{2}+v_{2}+1\right), \textbf{v}\sim \mathcal{N}(0, \Sigma)$ & $\Sigma=\left[[1,0.9],[0.9, 1]\right]/0.19$\\
        X-shaped & $0.5 \mathcal{N}(\textbf{z}|0, \Sigma_{1})+0.5\mathcal{N}(\textbf{z}|0, \Sigma_{2})$ & $\Sigma_{1}=\left[[2,1.8],[1.8, 2]\right]/0.76$,$\Sigma_{2}=\left[[2, 1.8],[1.8, 2]\right]/0.76$\\
        \hline
    \end{tabular}
    \label{tab:targets}
\end{table}

\begin{table}[]
    \centering
    \caption{The approximate KL divergence between the targets and $q$ using 1000 updates of particles, averaged over five runs.}
        \begin{tabular}{l c c c r }
        \hline
        \textbf{Targets} & \textbf{WVI} & \textbf{NGVI} & \textbf{GFlowVI} & \textbf{NGFlowVI}\\
        \hline
        Banana-shaped & 0.15$\pm$0.02 & 0.32$\pm$0.01 & 0.21$\pm$0.02 & 0.12$\pm$0.02\\
        X-shaped & 0.03$\pm$0.02 & 0.05$\pm$0.03 & 0.04$\pm$0.05 & 0.02$\pm$0.02\\
        \hline
    \end{tabular}
    \label{tab:synthetic}
\end{table}

\begin{figure*}
    \centering
    \includegraphics[width=1.0\textwidth]{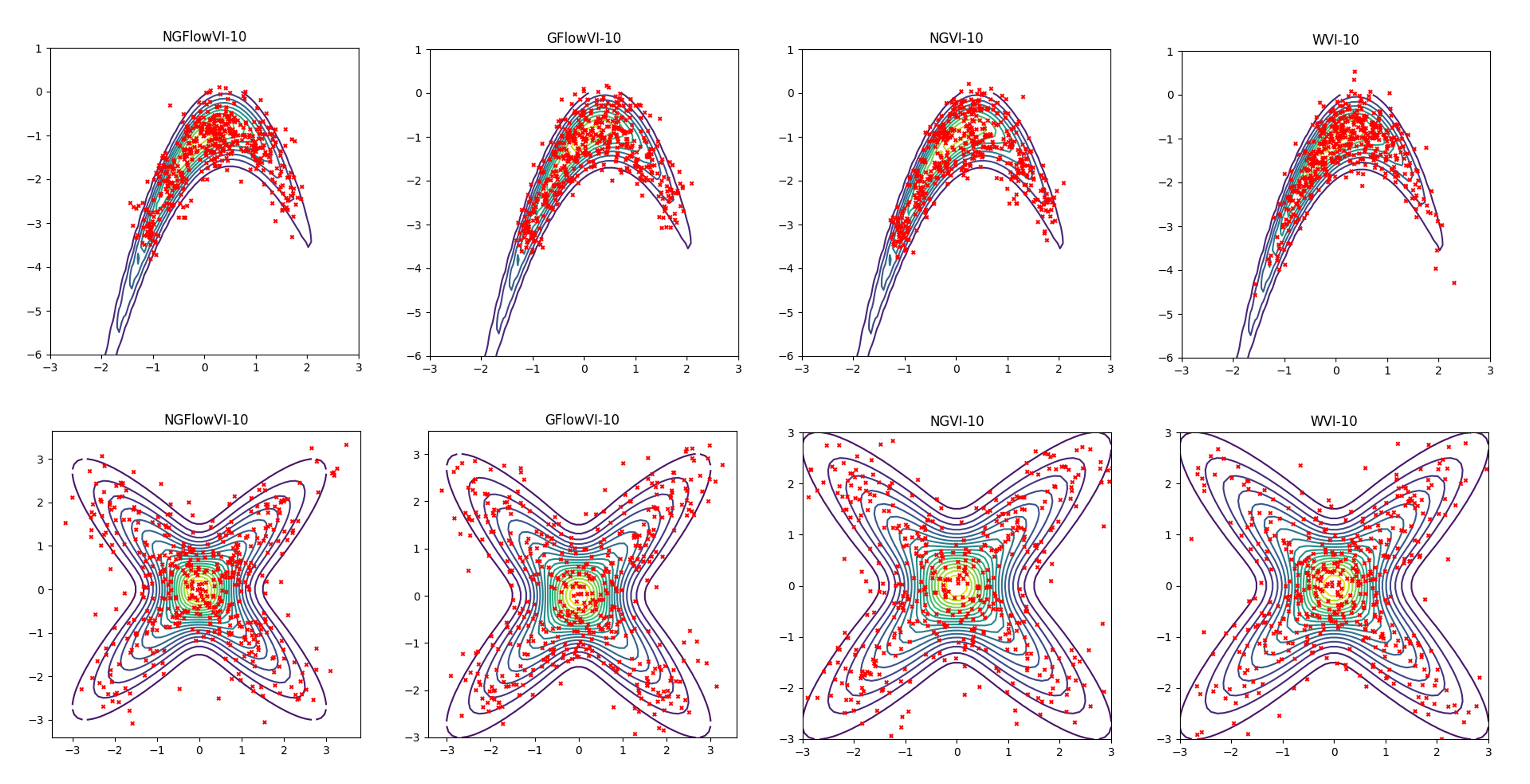}
    \caption{Experimental results on two synthetic datasets with visualization of 1000 samples from the variational distribution $q$ produced by four methods: NGFlowVI, GFlowVI, NGVI and WVI, using $K=10$ particles. These samples are used to approximate two target distributions: Banana-shaped distribution (the first row) and X-shaped distribution (the second row).}
    \label{fig:synthetic}
\end{figure*}

\section{Additional Experimental Results on Synthetic Datasets}
\label{appendix:syntheticdatasets}
In this section, we consider to sample from two other synthetic distributions defined on a two-dimensional space: a banana-shaped distribution and an X-shaped mixture of Gaussian. The densities of these distributions are given in Table \ref{tab:targets}.  

We compare WVI, NGVI, GFlowVI, and NGFlowVI, using $K=10$ particles. For these target distributions, the objective of compared methods is to produce a variational distribution $q$ that closely approximates the target distribution. Table \ref{tab:synthetic} presents the approximate KL divergence between $q$ and the target distributions, using 1000 particle updates, averaged over five runs. Figure \ref{fig:synthetic} illustrates 1000 samples from variational distributions fitted by compared methods align well with the shapes of target distributions.

\begin{table}[]
    \centering
    \caption{Average prediction losses of BNNs on three datasets: 'Australian scale' (negative log-likelihood), 'Boston' and 'Concrete' (mean square error). The best values are indicated in bold.}
        \begin{tabular}{l c c r}
        \hline
        \textbf{Methods} & Australian & Boston & Concrete \\
        \hline
        SVGD & $1.13\pm 0.04$ & $3.78\pm 0.22$ & $6.44\pm 0.92$\\
        D-Blob-CA & $1.14\pm 0.03$ & $3.24\pm 0.37$ & $4.71\pm 1.25$\\
        D-GFSD-CA & $0.98\pm 0.04$ & $3.47\pm 0.43$ & $5.22 \pm 1.02$\\
        WVI-10  & $0.85\pm 0.05$ & $3.06\pm0.17$ & $2.71\pm 0.14$ \\
        NGVI-10 & $0.62\pm 0.06$ & $2.72\pm 0.16$ & $1.49\pm 0.05$ \\
        GFlowVI-10  & $\textbf{0.51}\pm 0.02$  & $1.73\pm 0.28$ & $1.49\pm 0.08$ \\
        NGFlowVI-10 & $0.61\pm 0.03$ & $\textbf{1.71}\pm 0.09$ & $\textbf{1.25}\pm 0.06$\\
        \hline
    \end{tabular}
    \label{tab:negLoglikelihood}
\end{table}

\begin{table}[]
\centering
\caption{Average running times (in seconds) required for one epoch of methods: SVGD, WVI, NGVI, GFlowVI and NGFlowVI on three datasets: 'Australian scale', 'Boston' and 'Concrete'.}
\begin{tabular}{l c c r}
\hline
\textbf{Methods} &	Australian &	Boston	& Concrete\\
\hline
SVGD(m=50) &	3.77$\pm$0.07 &	5.23$\pm$0.02	& 10.37$\pm$0.04\\
SVGD(m=100)	& 15.19$\pm$0.14 &	20.09$\pm$0.11 &	39.93$\pm$0.33\\
SVGD(m=500) &	362.23$\pm$9.5 &	321.01$\pm$10.01	& 557.12$\pm$18.55\\
WVI	& 79.49$\pm$6.04 &	75.61$\pm$1.43 &	61.07$\pm$0.28\\
NGVI &	6.55$\pm$0.24 &	9.46$\pm$0.19 &	19.69$\pm$0.32\\
GFlowVI &	6.27$\pm$0.13 &	8.19$\pm$0.32 &	15.73$\pm$0.11\\
NGFlowVI &	6.22$\pm$0.05 &	7.98$\pm$0.17	& 16.79$\pm$0.38\\
\hline
\end{tabular}
\label{tab:runningtime}
\end{table}

\section{Additional Experimental Results on Real-world datasets}
\label{appendix:results}
\noindent
\textbf{Effects of infinite-dimensional MD}. We examined two scenarios for GFlow and NGFlow (both with K=10): one that updates both $\mathbold{\lambda}_{k}$ and $a_{k}$, for $k=1,...,K$ (denoted as GFlowVI and NGFlowVI, respectively), and another that updates $\mathbold{\lambda}_{k}$, for $k=1,...,K$, while keeping the weights fixed at 1/K (denoted by suffix w/o-MD). The results, shown in Table \ref{tab:effectofMD}, demonstrate that the weight update scheme using MD iterates improves performance.


\begin{table}[]
    \centering
    \caption{Average prediction loss on the 'Australia' dataset.}
        \begin{tabular}{l c c c c r}
        \hline
        \textbf{Methods} & $K=1$ & $K=3$ & $K=5$ & $K=10$ & $K=15$\\
        \hline
        NGFlowVI & 0.82 & 0.65 & \textbf{0.62} & \textbf{0.58} & \textbf{0.61}\\
        GFlowVI & 0.77 & 0.67 & \textbf{0.55} & \textbf{0.51} & \textbf{0.58}\\
        \hline
    \end{tabular}
    \label{tab:Australia_K}
\end{table}

\begin{table}[]
    \centering
    \caption{Average prediction loss on the 'Boston' dataset.}
        \begin{tabular}{l c c c c r}
        \hline
        \textbf{Methods} & $K=1$ & $K=3$ & $K=5$ & $K=10$ & $K=15$\\
        \hline
        NGFlowVI & 1.93 & 1.82 & \textbf{1.73} & \textbf{1.71} & \textbf{1.72}\\
        GFlowVI & 1.92 & 1.99 & \textbf{1.77} & \textbf{1.73} & \textbf{1.75}\\
        \hline
    \end{tabular}
    \label{tab:Boston_K}
\end{table}

\begin{table}[]
    \centering
    \caption{Average prediction loss on the 'Concrete' dataset.}
        \begin{tabular}{l c c c c r}
        \hline
        \textbf{Methods} & $K=1$ & $K=3$ & $K=5$ & $K=10$ & $K=15$\\
        \hline
        NGFlowVI & 1.43 & \textbf{1.26} & \textbf{1.27} & \textbf{1.25} & \textbf{1.26}\\
        GFlowVI & 1.67 & \textbf{1.53} & \textbf{1.51} & \textbf{1.49} & \textbf{1.52}\\
        \hline
    \end{tabular}
    \label{tab:Concrete_K}
\end{table}

\noindent
\textbf{Average prediction loss}. We compare our methods GFlowVI and NGFlowVI to WVI, NGVI and SVGD in terms of average prediction loss. Further, for weighting particles, we consider two DPVI algorithms \citep{zhang2021dpvi}: D-Blob-CA and DP-GFSD-CA. Like SVGD, these methods approximate the gradient of log of empirical distribution using kernels, but they dynamically adjust particle weights. We use RBF kernel and the median method \citep{liu2016stein} for SVGD, D-Blob-CA and DP-GFSD-CA, and represent $q$ with 100 samples (in latent space). We report the results after 1000 particle updates in Table \ref{tab:negLoglikelihood}. We observe that the DPVI methods perform better than SVGD due to their ability to adjust the particle weights, but still perform worse than the others. Possible reasons include the inefficiency of kernels in high-dim problems and suboptimal bandwidth selection for RBF. In addition, NGFlowVI achieves the lowest prediction errors on two datasets 'Boston' and 'Concrete', while GFlowVI achieves the lowest error on 'Australian'. Notably, they outperform the other methods in terms of the prediction loss, indicating the effectiveness of our methods.

\noindent
\textbf{Analysis on running time of methods}. We compare our methods GFlowVI and NGFlowVI, to SVGD, WVI and NGVI in terms of the computational cost. For GFlowVI, NGFlowVI, WVI and NGVI, we fix the number of components K at 10. For SVGD, we consider the number of particles of 50, 100, 500. We report the average running time (in seconds) required for one epoch of the compared methods in Table \ref{tab:runningtime}. For SVGD, the main computational cost arises from the kernel matrix, which requires  $O(m^{2})$ memory and computation for $m$ particles. In contrast, our methods and WVI, NGVI aim to update parameters of mixture components, with a memory and computational cost of $O(K)$, where K is the number of components. Thus, our methods are particle-efficient compared to SVGD. Furthermore, designing the kernel for SVGD is highly non-trivial, especially for high-dimensional problems.

\noindent
\textbf{Analysis on the mixture sizes}. To assess the impact of the mixture size (number of components K) on the performance of NGFlowVI and GFlowVI, we conducted an ablation study. We varied the mixture sizes K=1, 3, 5, 10, 15, and evaluated the average prediction losses on three datasets, as shown in Tables \ref{tab:Australia_K}, \ref{tab:Boston_K}, and \ref{tab:Concrete_K}. Each method was run for 1000 iterations per mixture size. We see that both NGFlowVI and GFlowVI achieve the highest average losses with K=1, suggesting that a single component may be insufficient to capture the posterior distribution of weights. As the mixture size increases, the losses decrease, demonstrating improved performance. Both methods remain robust with mixture sizes of 10 or more. It suggests to use cross-validation to select the number of components.

\end{document}


%

%

\onecolumn
\aistatstitle{Wasserstein Gradient Flow over Variational Parameter Space for Variational Inference: 
Supplementary Material}

\section{Proof of Theorem 1}
\begin{proof}
    To compute the first variation of $\mathcal{L}(p)$, suppose that $\varepsilon>0$ and an arbitrary distribution $\chi\in \mathcal{P}(\Omega)$. We compute $(\mathcal{L}(p+\varepsilon \chi)-\mathcal{L}(p))/\varepsilon$ as follows:
    \begin{align*}
        &\frac{1}{\varepsilon}\left[\mathcal{L}(p+\varepsilon \chi) - \mathcal{L}(p)\right]=\\
        &\frac{1}{\varepsilon}\int_{\mathbold{\lambda}}\left( p(\mathbold{\lambda})+\varepsilon \chi(\mathbold{\lambda})\right)\int_{\textbf{z}}q(\textbf{z}|\mathbold{\lambda})\left[f(\textbf{z})+\log\left( \int_{\mathbold{\lambda}}\left( p(\mathbold{\lambda})+\varepsilon \chi(\mathbold{\lambda})\right)q(\textbf{z}|\mathbold{\lambda})d\mathbold{\mathbold{\lambda}}\right)\right]d\textbf{z}d\mathbold{\lambda}\\
        -& \frac{1}{\varepsilon}\int_{\mathbold{\lambda}}p(\mathbold{\lambda})\int_{\textbf{z}}q(\textbf{z}|\mathbold{\lambda})\left[f(\textbf{z})+\log\left( \int_{\mathbold{\lambda}}p(\mathbold{\lambda})q(\textbf{z}|\mathbold{\lambda})d\mathbold{\lambda}\right)\right]d\textbf{z}d\mathbold{\lambda}\\
        =& \int_{\mathbold{\lambda}}\chi(\mathbold{\lambda})\int_{\textbf{z}}q(\textbf{z}|\mathbold{\lambda})f(\textbf{z})d\textbf{z}d\mathbold{\lambda}
        + \frac{1}{\varepsilon}\int_{\mathbold{\lambda}}\left( p(\mathbold{\lambda})+\varepsilon \chi(\mathbold{\lambda})\right)\int_{\textbf{z}}q(z|\mathbold{\lambda})\log\left( \int_{\mathbold{\lambda}}\left( p(\mathbold{\lambda})+\varepsilon \chi(\mathbold{\lambda})\right)q(\textbf{z}|\mathbold{\lambda})d\mathbold{\mathbold{\lambda}}\right)d\textbf{z}d\mathbold{\lambda}\\
        -&\frac{1}{\varepsilon}\int_{\mathbold{\lambda}}p(\mathbold{\lambda})\int_{\textbf{z}}q(\textbf{z}|\mathbold{\lambda})\log\left( \int_{\mathbold{\lambda}}p(\mathbold{\lambda})q(\textbf{z}|\mathbold{\lambda})d\mathbold{\lambda}\right)d\textbf{z}d\mathbold{\lambda}\\
        =& \int_{\mathbold{\lambda}}\chi(\mathbold{\lambda})\int_{\textbf{z}}q(\textbf{z}|\mathbold{\lambda})f(\textbf{z})d\textbf{z}d\mathbold{\lambda}\\
        +& \frac{1}{\varepsilon}\int_{\mathbold{\lambda}}\left( p(\mathbold{\lambda})+\varepsilon \chi(\mathbold{\lambda})\right)\int_{\textbf{z}}q(\textbf{z}|\mathbold{\lambda})\log\left( \int_{\mathbold{\lambda}}\left( p(\mathbold{\lambda})+\varepsilon \chi(\mathbold{\lambda})\right)q(z|\mathbold{\lambda})d\mathbold{\mathbold{\lambda}}\right)d\textbf{z}d\mathbold{\lambda} - \\
        &\frac{1}{\varepsilon}\int_{\mathbold{\lambda}}p(\mathbold{\lambda})\int_{\textbf{z}}q(\textbf{z}|\mathbold{\lambda})\log \left(\int_{\mathbold{\lambda}}\left( p(\mathbold{\lambda})+\varepsilon \chi(\mathbold{\lambda})\right)q(\textbf{z}|\lambda)d\mathbold{\lambda} \right)d\textbf{z}d\mathbold{\lambda}\\ 
        +& \frac{1}{\varepsilon}\left[ \int_{\mathbold{\lambda}}p(\mathbold{\lambda})\int_{\textbf{z}}q(\textbf{z}|\mathbold{\lambda})\log \left(\int_{\mathbold{\lambda}}\left( p(\mathbold{\lambda})+\varepsilon \chi(\mathbold{\lambda})\right)q(\textbf{z}|\mathbold{\lambda})d\mathbold{\lambda} \right)d\textbf{z}d\mathbold{\lambda}-\int_{\mathbold{\lambda}}p(\mathbold{\lambda})\int_{\textbf{z}}q(\textbf{z}|\mathbold{\lambda})\log\left( \int_{\mathbold{\lambda}}p(\mathbold{\lambda})q(z|\mathbold{\lambda})d\mathbold{\lambda}\right)d\textbf{z}d\mathbold{\lambda}\right]\\
        =& \int_{\mathbold{\lambda}}\chi(\mathbold{\lambda})\int_{\textbf{z}}q(\textbf{z}|\mathbold{\lambda})f(\textbf{z})d\textbf{z}d\mathbold{\lambda}\\
        +& \underbrace{\int_{\mathbold{\lambda}}\chi(\mathbold{\lambda})\int_{\textbf{z}}q(\textbf{z}|\mathbold{\lambda})\log\left(\int_{\mathbold{\lambda}} \left(p(\mathbold{\lambda}) + \varepsilon\chi(\mathbold{\lambda})\right)q(\textbf{z}|\mathbold{\lambda})d\mathbold{\lambda}\right)d\textbf{z}d\mathbold{\lambda}}_{\text{(a)}}\\
        +& \underbrace{\frac{1}{\varepsilon}\left[ \int_{\lambda}p(\mathbold{\lambda})\int_{\textbf{z}}q(\textbf{z}|\mathbold{\lambda})
        \log \left(1 + \frac{\varepsilon \int_{\mathbold{\lambda}}\chi(\mathbold{\lambda})q(\textbf{z}|\mathbold{\lambda})d\mathbold{\lambda}}{\int_{\mathbold{\lambda}}p(\mathbold{\lambda})q(\textbf{z}|\mathbold{\lambda})d\mathbold{\lambda}} \right)d\textbf{z}d\mathbold{\lambda}
        \right]}_{\text{(b)}}
    \end{align*}
    We process parts (a) and (b), when $\varepsilon\rightarrow 0$, as follows:
    \begin{align*}
        \lim_{\varepsilon\rightarrow 0}\text{(a)} =& \int_{\mathbold{\lambda}}\chi(\mathbold{\lambda})\int_{\textbf{z}}q(\textbf{z}|\mathbold{\lambda})\log\left(\int_{\mathbold{\lambda}} p(\mathbold{\lambda})q(\textbf{z}|\mathbold{\lambda})d\mathbold{\lambda}\right)d\textbf{z}d\mathbold{\lambda}\\
        \lim_{\varepsilon\rightarrow 0}\text{(b)} =& \int_{\lambda}p(\mathbold{\lambda})\int_{\textbf{z}}q(\textbf{z}|\mathbold{\lambda})
       \frac{\int_{\mathbold{\lambda}}\chi(\mathbold{\lambda})q(\textbf{z}|\mathbold{\lambda})d\mathbold{\lambda}}{\int_{\mathbold{\lambda}}p(\mathbold{\lambda})q(\textbf{z}|\mathbold{\lambda})d\mathbold{\lambda}}d\textbf{z}d\mathbold{\lambda}       =\int_{\textbf{z}}\int_{\mathbold{\lambda}}p(\lambda)q(\textbf{z}|\lambda)d\mathbold{\lambda}\frac{\int_{\mathbold{\lambda}}\chi(\mathbold{\lambda})q(\textbf{z}|\mathbold{\lambda})d\mathbold{\lambda}}{\int_{\mathbold{\lambda}}p(\mathbold{\lambda})q(\textbf{z}|\mathbold{\lambda})d\mathbold{\lambda}}d\textbf{z}\\
       =& \int_{\mathbold{\lambda}}\chi(\mathbold{\lambda})\int_{\textbf{z}}q(\textbf{z}|\mathbold{\lambda})d\textbf{z}d\mathbold{\lambda}= \int_{\mathbold{\lambda}}\chi(\mathbold{\lambda})d\mathbold{\lambda}
    \end{align*}
    where we have used the following equality for (b): $\lim_{\varepsilon\rightarrow0}\frac{\log \left(1 + \varepsilon x \right)}{\varepsilon}=x$ for all $x\in \mathbb{R}$.
    
    So, we have:
    \begin{align*}
        \lim_{\varepsilon\rightarrow 0}\frac{1}{\varepsilon}\left[\mathcal{L}(p+\varepsilon \chi) - \mathcal{L}(p)\right]=\int_{\mathbold{\lambda}}\chi(\mathbold{\lambda})\left(\mathbb{E}_{\textbf{z}\sim q(\cdot|\mathbold{\lambda})}\left[ f(\textbf{z} + \log q(\textbf{z})) \right] + 1\right)d\mathbold{\lambda}
    \end{align*}
    By definition of the first variation of $\mathcal{L}$, this completes the proof.
\end{proof}

\section{Proof of Proposition 3}
\begin{proof}
    Denote $\textbf{v}_{n}(\mathbold{\lambda})=-\nabla_{\mathbold{\lambda}}\mathbb{E}_{\textbf{z}\sim q(\cdot|\mathbold{\lambda})}\left[ f(\textbf{z}) + \log q_{n}(\textbf{z}) \right]$ where $q_{n}(\textbf{z})=\mathbb{E}_{\mathbold{\lambda}\sim p_{n}}\left[q(\textbf{z}|\mathbold{\lambda})\right]$, $\Phi_{t}(\mathbold{\lambda})=\mathbold{\lambda} + t \textbf{C}(\mathbold{\lambda})\textbf{v}_{n}(\mathbold{\lambda})$ for $t\in [0,\eta]$, and $\rho_{t}=(\Phi_{t})_{\#}p_{n}$. Then it is evident that $\rho_{0}=p_{n}$ and $\rho_{\eta}=p_{n+1}$.

    We define $\phi(t)=\mathcal{L}(\rho_{t})$. Clearly, $\phi(0)=\mathcal{L}(p_{n})$ and $\phi(\eta)=\mathcal{L}(p_{n+1})$. Using a Taylor expansion, we have:
    \begin{equation}
    \label{eqn:taylor}
        \phi(\eta)=\phi(0) + \eta \phi^\prime(0) + \int_{0}^{\eta}(\eta - t)\phi^{\prime\prime}(t)dt
    \end{equation}
    Using the chain rule, we can estimate $\phi^\prime(t)$ as follows:
    \begin{align*}
        \phi^\prime(t)=& \frac{d}{dt}\int_{\mathbold{\lambda}}\rho_{t}(\mathbold{\lambda})\int_{\textbf{z}}q(\textbf{z}|\mathbold{\lambda})\left[ f(\textbf{z})+\log \left( \int_{\mathbold{\lambda}}\rho_{t}(\mathbold{\lambda})q(\textbf{z}|\mathbold{\lambda})\right)\right]d\textbf{z}d\mathbold{\lambda}\\
        =& \frac{d}{dt}\int_{\mathbold{\lambda}}p_{n}(\mathbold{\lambda})\int_{\textbf{z}}q(\textbf{z}|\Phi_{t}(\mathbold{\lambda}))\left[ f(\textbf{z})+\log \left( \int_{\mathbold{\lambda}}p_{n}(\mathbold{\lambda})q(\textbf{z}|\Phi_{t}(\mathbold{\lambda}))\right)\right]d\textbf{z}d\mathbold{\lambda}\\
        =& \int_{\mathbold{\lambda}}p_{n}(\mathbold{\lambda})\langle \frac{d\Phi_{t}(\mathbold{\lambda})}{dt}, \nabla_{\mathbold{\lambda}}\int_{\textbf{z}}q(\textbf{z}|\Phi_{t}(\lambda))\left[f(\textbf{z})+\log \left(\int_{\lambda}p_{n}(\mathbold{\lambda})q(\textbf{z}|\Phi_{t}(\mathbold{\lambda})) \right) \right]d\textbf{z} \rangle d\mathbold{\lambda}\\
        =& \int_{\mathbold{\lambda}}p_{n}(\mathbold{\lambda})\langle \textbf{C}(\mathbold{\lambda})\textbf{v}_{n}(\mathbold{\lambda}), \nabla_{\mathbold{\lambda}}\mathbb{E}_{\textbf{z}\sim q(\cdot|\Phi_{t}(\mathbold{\lambda}))}\left[ f(\textbf{z}) + \log q_{t}(\textbf{z}) \right] \rangle d\mathbold{\lambda},
    \end{align*}
    where $q_{t}(\textbf{z})=\int_{\mathbold{\lambda}}p_{n}(\mathbold{\lambda})q(\textbf{z}|\Phi_{t}(\mathbold{\lambda}))d\mathbold{\lambda}$. The second equality is obtained by applying the change of variable formula, and the last equality is obtained by the definition of $\textbf{v}_{n}$ and $\Phi_{t}$.

    So, at $t=0$, we have:
    \begin{equation}
     \label{eqn:grad}
        \phi^\prime(0)= -\int_{\mathbold{\lambda}}\langle \textbf{v}_{n}(\mathbold{\lambda}), \textbf{C}(\mathbold{\lambda})\textbf{v}_{n}(\mathbold{\lambda})\rangle p_{n}(\mathbold{\lambda})d\mathbold{\lambda}=-\langle \textbf{v}_{n}, \textbf{C}\textbf{v}_{n} \rangle_{L^{2}(p_{n})}
    \end{equation}

    Next we estimate $\phi^{\prime\prime}(\mathbold{\lambda})$ as follows:
    \begin{align*}
        \phi^{\prime\prime}(\mathbold{\lambda})=& \frac{d}{dt} \phi^{\prime}(t)=\int_{\mathbold{\lambda}}p_{n}(\mathbold{\lambda})\langle \textbf{C}(\mathbold{\lambda})\textbf{v}_{n}(\mathbold{\lambda}), \frac{d}{dt}\nabla_{\mathbold{\lambda}}\mathbb{E}_{\textbf{z}\sim q(\cdot|\Phi_{t}(\mathbold{\lambda}))}\left[ f(\textbf{z}) + \log q_{t}(\textbf{z}) \right]\rangle d\mathbold{\lambda}\\
        =& \int_{\mathbold{\lambda}}p_{n}(\mathbold{\lambda})\langle \textbf{C}(\mathbold{\lambda})\textbf{v}_{n}(\mathbold{\lambda}), \textbf{H}_{t}(\mathbold{\lambda})\textbf{v}_{n}(\mathbold{\lambda})\rangle d\mathbold{\lambda} =\langle \textbf{C}\textbf{v}_{n}, \textbf{H}_{t}\textbf{v}_{n} \rangle_{L^{2}(p_{n})}, 
    \end{align*}
    where $\textbf{H}_{t}(\mathbold{\lambda})=\nabla^{2}_{\mathbold{\lambda}}\mathbb{E}_{\textbf{z}\sim q(\cdot|\Phi_{t}(\mathbold{\lambda}))}\left[  f(\textbf{z}) + \log q_{t}(\textbf{z})\right]$. Now we need to upper-bound the operator norm of $\textbf{H}_{t}$.

    Denote $\mathbold{\lambda}_{t}=\Phi_{t}(\mathbold{\lambda})$, we can rewrite 
    $\textbf{H}_{t}$ as follows:
    \begin{align*}
        &\textbf{H}_{t}(\mathbold{\lambda}) = \nabla_{\mathbold{\lambda}}\left[\int_{\textbf{z}}\nabla_{\mathbold{\lambda}}q(\textbf{z}|\mathbold{\lambda}_{t})\left[ f(\textbf{z})+\log q_{t}(\textbf{z})\right]d\textbf{z} + \int_{\textbf{z}} q(\textbf{z}|\mathbold{\lambda}_{t})\nabla_{\mathbold{\lambda}}\log q_{t}(\textbf{z})d\textbf{z}\right]\\
        =& \int_{\mathbold{\textbf{z}}}\nabla_{\mathbold{\lambda}}^{2}q(\textbf{z}|\mathbold{\lambda}_{t})\left[ f(\textbf{z})+\log q_{t}(\textbf{z})\right]d\textbf{z}
        + 2 \int_{\textbf{z}}\left[\nabla_{\mathbold{\lambda}}q(\textbf{z}|\mathbold{\lambda}_{t})\right]\left[\nabla_{\mathbold{\lambda}}\log q_{t}(\textbf{z})\right]^{\intercal} d\textbf{z} + \int_{\textbf{z}}q(\textbf{z}|\mathbold{\lambda}_{t})\nabla_{\mathbold{\lambda}}^{2}\log q_{t}(\textbf{z})d\textbf{z}\\
         =& \underbrace{\int_{\mathbold{\textbf{z}}}\nabla_{\mathbold{\lambda}}^{2}q(\textbf{z}|\mathbold{\lambda}_{t})\left[ f(\textbf{z})+\log q_{t}(\textbf{z})\right]d\textbf{z}}_{(a)}
        + 2 \underbrace{\int_{\textbf{z}}q(\textbf{z}|\mathbold{\lambda}_{t})\left[\nabla_{\mathbold{\lambda}}\log q(\textbf{z}|\mathbold{\lambda}_{t})\right]\left[\nabla_{\mathbold{\lambda}}\log q_{t}(\textbf{z})\right]^{\intercal} d\textbf{z}}_{(b)} + \underbrace{\int_{\textbf{z}}q(\textbf{z}|\mathbold{\lambda}_{t})\nabla_{\mathbold{\lambda}}^{2}\log q_{t}(\textbf{z})d\textbf{z}}_{(c)},
    \end{align*}
    where the third equality is obtained using the following identity: $\nabla_{\mathbold{\lambda}}q(\textbf{z}|\mathbold{\lambda})=q(\textbf{z}|\mathbold{\lambda})\nabla_{\mathbold{\lambda}}\log q(\textbf{z}|\mathbold{\lambda})$. We process part (a) as follows:
    \begin{align*}
        &\text{(a)} = \int_{\mathbold{\lambda}}\nabla_{\mathbold{\lambda}}\left[ q(\textbf{z}|\mathbold{\lambda}_{t})\nabla_{\lambda}\log q(\textbf{z}|\mathbold{\lambda}_{t}) \right]\left[ f(\textbf{z}) + \log q_{t}(\textbf{z}) \right]d\textbf{z}\\
        =& \int_{\textbf{z}}\nabla_{\mathbold{\lambda}} q(\textbf{z}|\mathbold{\lambda}_{t})\nabla_{\mathbold{\lambda}}\log q(\textbf{z}|\mathbold{\lambda}_{t})\left[ f(\textbf{z}) + \log q_{t}(\textbf{z}) \right]d\textbf{z}
        + \int_{\textbf{z}}q(\textbf{z}|\mathbold{\lambda}_{t})\nabla_{\lambda}^{2}\log q(\textbf{z}|\mathbold{\lambda}_{t})\left[f(\textbf{z}) + \log q_{t}(\textbf{z}) \right]d\textbf{z}\\
        =& \int_{\textbf{z}}\left[ \nabla_{\mathbold{\lambda}} q(\textbf{z}|\mathbold{\lambda}_{t})\right] \left[ \nabla_{\mathbold{\lambda}}\log q(\textbf{z}|\mathbold{\lambda}_{t})\right]^{\intercal}\left[ f(\textbf{z}) + \log q_{t}(\textbf{z}) \right]d\textbf{z}
        + \int_{\textbf{z}}q(\textbf{z}|\mathbold{\lambda}_{t})\nabla_{\lambda}^{2}\log q(\textbf{z}|\mathbold{\lambda}_{t})\left[f(\textbf{z}) + \log q_{t}(\textbf{z}) \right]d\textbf{z}\\
        =& \int_{\textbf{z}}q(\textbf{z}|\mathbold{\lambda}_{t})\left[ \nabla_{\mathbold{\lambda}} \log q(\textbf{z}|\mathbold{\lambda}_{t})\right] \left[ \nabla_{\mathbold{\lambda}}\log q(\textbf{z}|\mathbold{\lambda}_{t})\right]^{\intercal}\left[ f(\textbf{z}) + \log q_{t}(\textbf{z}) \right]d\textbf{z}
        + \int_{\textbf{z}}q(\textbf{z}|\mathbold{\lambda}_{t})\nabla_{\lambda}^{2}\log q(\textbf{z}|\mathbold{\lambda}_{t})\left[f(\textbf{z}) + \log q_{t}(\textbf{z}) \right]d\textbf{z}
    \end{align*}
    The operator norm of part (a) can be upper-bounded as follows:
    \begin{align}
    \label{eqn:parta}
    \begin{split}
    &\lVert \text{(a)}\rVert_{\text{op}} \leq  \mathbb{E}_{\textbf{z}\sim q(\cdot|\mathbold{\lambda}_{t})}\lVert \nabla_{\mathbold{\lambda}}\log q(\textbf{z}|\mathbold{\lambda}_{t})\rVert_{2}^{2}|f(\textbf{z}) + \log q_{t}(\textbf{z})| + \mathbb{E}_{\textbf{z}\sim q(\cdot|\mathbold{\lambda}_{t})}\lVert \nabla_{\mathbold{\lambda}}^{2}\log q(\textbf{z}|\mathbold{\lambda}_{t}) \rVert_{\text{op}}|f(\textbf{z})+\log q_{t}(\textbf{z})|\\
    \leq & \left[\mathbb{E}_{\textbf{z}\sim q(\cdot|\mathbold{\lambda}_{t})}\lVert \nabla_{\mathbold{\lambda}}\log q(\textbf{z}|\mathbold{\lambda}_{t})\rVert_{2}^{2}\right] \left[\mathbb{E}_{\textbf{z}\sim q(\cdot|\mathbold{\lambda}_{t})}|f(\textbf{z})| + \mathbb{E}_{\textbf{z}\sim q(\cdot|\mathbold{\lambda}_{t})}|\log q_{t}(\textbf{z})| \right]\\
    +&  \left[ \mathbb{E}_{\textbf{z}\sim q(\cdot|\mathbold{\lambda}_{t})}\lVert \nabla_{\mathbold{\lambda}}^{2}\log q(\textbf{z}|\mathbold{\lambda}_{t}) \rVert_{\text{op}}\right] \left[ \mathbb{E}_{\textbf{z}\sim q(\cdot|\mathbold{\lambda}_{t})} |f(\textbf{z})|+\mathbb{E}_{\textbf{z}\sim q(\cdot|\mathbold{\lambda}_{t})}|\log q_{t}(\textbf{z})|\right]\\
    \leq & (\alpha_{1}+\beta_{2})(M_{1}+M_{2}),
    \end{split}
    \end{align}
    where the first inequality is obtained using the equality $\lVert \textbf{a}\textbf{b}^{\intercal}\rVert_{\text{op}}= \lVert \textbf{a} \rVert_{2}\lVert \textbf{b} \rVert_{2}$ for two vectors $\textbf{a}$ and $\textbf{b}$; the second inequality is obtained by using the inequality $\mathbb{E}_{\textbf{z}\sim q(\cdot|\mathbold{\lambda})}|h_{1}(\textbf{z})||h_{2}(\textbf{z})|\leq \mathbb{E}_{\textbf{z}\sim q(\cdot|\mathbold{\lambda})}|h_{1}(\textbf{z})|\mathbb{E}_{\textbf{z}\sim q(\cdot|\mathbold{\lambda})}|h_{2}(\textbf{z})|$ for two scalar functions $h_{1}$ and $h_{2}$; the last inequality is obtained by using assumptions \textbf{(A1)}, \textbf{(A2)} and \textbf{(A3)}.

    Next, the operator norm of part (b) can be upper-bounded as follows:
    \begin{align}
    \label{eqn:partb}
    \begin{split}
        \lVert \text{(b)}\rVert_{\text{op}} = 2 \mathbb{E}_{\textbf{z}\sim q(\cdot|\mathbold{\lambda}_{t})}\lVert \nabla_{\mathbold{\lambda}}\log q(\textbf{z}|\mathbold{\lambda}_{t}) \rVert_{2}  \lVert \nabla_{\mathbold{\lambda}}\log q_{t}(\textbf{z}) \rVert_{2} \leq &\mathbb{E}_{\textbf{z}\sim q(\cdot|\mathbold{\lambda}_{t})}\lVert \nabla_{\mathbold{\lambda}}\log q(\textbf{z}|\mathbold{\lambda}_{t}) \rVert_{2}^{2} + \mathbb{E}_{\textbf{z}\sim q(\cdot|\mathbold{\lambda}_{t})}\lVert \nabla_{\mathbold{\lambda}}\log q(\textbf{z}|\mathbold{\lambda}_{t}) \rVert_{2}^{2}\\
        \leq& \alpha_{1} + \alpha_{2},
    \end{split}
    \end{align}
    where the last inequality is obtained by using assumption \textbf{(A1)}.

    Putting it all together \eqref{eqn:parta}, \eqref{eqn:partb} and assumption \textbf{A(2)} (for upper-bounding part (c)), the operator norm of $\textbf{H}_{t}$ can be upper-bounded as follows:
    \begin{align}
    \label{eqn:hessian}
        \lVert \textbf{H}_{t}(\mathbold{\lambda}) \rVert_{\text{op}} \leq (\alpha_{1} + \beta_{1})(M_{1}+M_{2}) + \alpha_{1} + \alpha_{2} +\beta_{2}=\kappa
    \end{align}
    Thus, plugging the results \eqref{eqn:grad} and \eqref{eqn:hessian} into \eqref{eqn:taylor}, we can derive the following inequality:
    \begin{align*}
        \mathcal{L}(p_{n+1}) \leq & \mathcal{L}(p_{n}) -\eta \langle\textbf{v}_{n},\textbf{C}\textbf{v}_{n}\rangle_{L^{2}(p_{n})} 
        + \int_{0}^{\eta}(\eta - t)\kappa \langle\textbf{v}_{n},\textbf{C}\textbf{v}_{n}\rangle_{L^{2}(p_{n})}dt\\
        =& \mathcal{L}(p_{n}) -\eta \langle\textbf{v}_{n},\textbf{C}\textbf{v}_{n}\rangle_{L^{2}(p_{n})} 
        + \kappa \frac{\eta^{2}}{2}\langle\textbf{v}_{n},\textbf{C}\textbf{v}_{n}\rangle_{L^{2}(p_{n})}dt 
    \end{align*}
    This completes the proof.
\end{proof}

\section{Derivation of GFlow-VI and NGFlow-VI for Gaussian mean-field Variance Inference}
In this section we derive updates for GFlow-VI and NGFlow-VI. For the fist case $\textbf{C}=\textbf{I}$, let recall $\mathbold{\lambda}_{k,n}=\left(\mathbold{\lambda}_{k,n}, \textbf{s}_{k,n} \right)$ and mean $\mathbold{\lambda}_{k,n}$ and vector $\textbf{s}_{k,n}$ are updated as follows (see (16)):
\begin{align*}
    \mathbold{\mu}_{k,n+1}=&\mathbold{\mu}_{k,n}-\eta \nabla_{\mathbold{\mu}_{k}}\mathbb{E}_{\textbf{z}\sim q(\cdot|\mathbold{\lambda}_{k,n})}\left[ f(\textbf{z}) + \log q_{n}(\textbf{z})\right]\\
    \textbf{s}_{k,n+1}=&\textbf{s}_{k,n}-\eta \nabla_{\textbf{s}_{k}}\mathbb{E}_{\textbf{z}\sim q(\cdot|\mathbold{\lambda}_{k,n})}\left[ f(\textbf{z}) + \log q_{n}(\textbf{z})\right]
\end{align*}
We denote $G = \mathbb{E}_{\textbf{z}\sim q(\cdot|\mathbold{\lambda}_{k,n})}\left[ f(\textbf{z}) + \log q_{n}(\textbf{z})\right]$. We can estimate the gradients of $G$ with respect to $\mathbold{\mu}_{k}$ and $\textbf{s}_{k}$ as follows:
\begin{align}
\label{eqn:gradmuk}
\begin{split}
    \nabla_{\mathbold{\mu}_{k}}G =& \int_{\textbf{z}}\nabla_{\mathbold{\mu}_{k}}q(\textbf{z}|\mathbold{\lambda}_{k,n})\left[f(\textbf{z}) + \log q_{n}(\textbf{z}) \right]d\textbf{z}  + \int_{\textbf{z}}q(\textbf{z}|\mathbold{\lambda}_{k,n}) \nabla_{\mathbold{\mu}_{k}} \log q_{n}(\textbf{z})d\textbf{z}\\
    =& -\int_{\textbf{z}}\nabla_{\textbf{z}}q(\textbf{z}|\mathbold{\lambda}_{k,n})\left[f(\textbf{z}) + \log q_{n}(\textbf{z}) \right]d\textbf{z}  + \int_{\textbf{z}}q(\textbf{z}|\mathbold{\lambda}_{k,n}) \frac{q(\textbf{z}|\mathbold{\lambda}_{k,n})\nabla_{\mathbold{\mu}_{k}} \log q(\textbf{z}|\mathbold{\lambda}_{k,n})}{q_{n}(\textbf{z})}d\textbf{z}\\
    =& \int_{\textbf{z}}q(\textbf{z}|\mathbold{\lambda}_{k,n})\left[\nabla_{\textbf{z}}f(\textbf{z}) + \nabla_{\textbf{z}}\log q_{n}(\textbf{z}) \right]d\textbf{z}  + \int_{\textbf{z}}q(\textbf{z}|\mathbold{\lambda}_{k,n}) \textbf{w}_{k}(\textbf{z})\nabla_{\mathbold{\mu}_{k}} \log q(\textbf{z}|\mathbold{\lambda}_{k,n})\\
    =& \mathbb{E}_{\textbf{z}\sim q(\cdot|\mathbold{\lambda}_{k,n})}\left[\nabla_{\textbf{z}}f(\textbf{z}) + \nabla_{\textbf{z}}\log q_{n}(\textbf{z}) + \textbf{w}_{k}(\textbf{z})\nabla_{\mathbold{\mu}_{k}}\log q(\textbf{z}|\mathbold{\lambda}_{k,n})\right],
\end{split}
\end{align}
where $\textbf{w}_{k}(\textbf{z})=q(\textbf{z}|\mathbold{\lambda}_{k,n})/q_{n}(\textbf{z})$.
In the second equality, we have used the identity $\nabla_{\mathbold{\mu}}q(\textbf{z}|\mathbold{\lambda})=-\nabla_{\textbf{z}}q(\textbf{z}|\mathbold{\lambda})$ for $q$ being a Gaussian distribution; in the third equality, we have used the integration by parts for the first term and $\nabla_{\mathbold{\mu}}q(\textbf{z}|\mathbold{\lambda})=q(\textbf{z}|\mathbold{\lambda})\nabla_{\mathbold{\mu}}\log q(\textbf{z}|\mathbold{\lambda})$ for the second term.

\begin{align}
\label{eqn:gradsk}
\begin{split}
    \nabla_{\textbf{s}_{k}}G  =& -1\oslash (\textbf{s}_{k,n}\odot\textbf{s}_{k,n})\odot\int_{\textbf{z}}\nabla_{\mathbold{\sigma}^{2}_{k}}q(\textbf{z}|\mathbold{\lambda}_{k,n})\left[f(\textbf{z}) + \log q_{n}(\textbf{z}) \right]d\textbf{z}  + \int_{\textbf{z}}q(\textbf{z}|\mathbold{\lambda}_{k,n}) \nabla_{\textbf{s}_{k}} \log q_{n}(\textbf{z})d\textbf{z}\\
    =&-1\oslash (\textbf{s}_{k,n}\odot\textbf{s}_{k,n})\odot\int_{\textbf{z}}q(\textbf{z}|\mathbold{\lambda}_{k,n})\frac{1}{2}\texttt{diag}\left[\nabla_{\textbf{z}}^{2}f(\textbf{z}) + \nabla_{\textbf{z}}^{2}\log q_{n}(\textbf{z}) \right]d\textbf{z}  + \int_{\textbf{z}}q(\textbf{z}|\mathbold{\lambda}_{k,n}) \frac{q(\textbf{z}|\mathbold{\lambda}_{k,n})\nabla_{\textbf{s}_{k}} \log q(\textbf{z}|\mathbold{\lambda}_{k,n})}{q_{n}(\textbf{z})}d\textbf{z}\\
    =& \mathbb{E}_{\textbf{z}\sim q(\cdot|\mathbold{\lambda}_{k,n})}
    \left[
    -\frac{1}{2}\oslash (\textbf{s}_{k,n}\odot\textbf{s}_{k,n})\odot\texttt{diag}\left[\nabla_{\textbf{z}}^{2}f(\textbf{z}) + \nabla_{\textbf{z}}^{2}\log q_{n}(\textbf{z})\right] + \textbf{w}_{k}(\textbf{z})\nabla_{\textbf{s}_{k}}\log q(\textbf{z}|\mathbold{\lambda}_{k,n}),
    \right]
\end{split}
\end{align}
where we have used the change of variable formula in the first equation, the relation (18) for the first term of the second equation. By drawing a sample $\textbf{z}$ from $q(\textbf{z}|\mathbold{\lambda}_{k,n})$, we have the update of GFlow-VI ((17) in the main text).

Next we derive the update of NGFlow-VI ($\textbf{C}=\textbf{F}^{-1}$). We consider $\mathbold{\lambda}_{k}$ to be the natural parameter of the mean-field Gaussian $q(\textbf{z}|\mathbold{\lambda}_{k})$. Specifically, the natural parameters and expectation parameters of the $k$-th Gaussian at the $n$-th iteration can be defined as follows:
\begin{align*}
&\mathbold{\lambda}^{(1)}_{k,n}=\textbf{s}_{k,n}\odot\mathbold{\mu}_{k,n}, \mathbold{\lambda}^{(2)}_{k,n}=-\frac{1}{2}\textbf{s}_{k,n}\\
&\textbf{m}^{(1)}_{k,n}=\mathbold{\mu}_{k,n},\textbf{m}^{(2)}_{k,n}=\mathbold{\mu}_{k,n}\odot\mathbold{\mu}_{k,n}+1\oslash\textbf{s}_{k,n}.
\end{align*}
Then, the natural parameters are updated as follows (by (19) in the main text):
\begin{align*}
    \mathbold{\lambda}_{k,n+1}=\mathbold{\lambda}_{k,n}- \eta \nabla_{\textbf{m}_{k}}G
\end{align*}
Using the chain rule (see Appendix B.1 in [9]), we can express the gradients of $G$ with respect to expectation parameter $\textbf{m}_{k}$ in terms of the gradients with respect to $\mathbold{\mu}_{k}$ and $\mathbold{\sigma}_{k}^{2}$ as follows:
\begin{align*}
    \nabla_{\textbf{m}_{k}^{(1)}}G = \nabla_{\mathbold{\mu}_{k}}G - 2\left[\nabla_{\mathbold{\sigma}^{2}_{k}}G \right]\mathbold{\mu}_{k},
    \nabla_{\textbf{m}_{k}^{(2)}}G = \nabla_{\mathbold{\sigma}^{2}_{k}}G
\end{align*}
By following the derivation in [9], the natural-gradient update is simplified as follows:
\begin{align}
\label{eqn:skupdate}
    \textbf{s}_{k,n+1} =& \textbf{s}_{k,n} + 2 \eta \left[\nabla_{\mathbold{\sigma}_{k}^{2}}G \right].
\end{align}
\begin{align}
\label{eqn:mukupdate}
    \mathbold{\mu}_{k,n+1} =& \mathbold{\mu}_{k,n} - \eta \left[ \nabla_{\mathbold{\mu}_{k}}G\right]\oslash \textbf{s}_{k,n+1}.
\end{align}

Using \eqref{eqn:gradsk}, we can derive the full update for $\textbf{s}_{k,n}$ in \eqref{eqn:skupdate} as follows:
\begin{align*}
    \textbf{s}_{k,n+1} =& \textbf{s}_{k,n} - 2 \eta(\textbf{s}_{k,n}\odot \textbf{s}_{k,n})  \left[\nabla_{\textbf{s}_{k}}G \right]\\
    =& \textbf{s}_{k,n} + \mathbb{E}_{\textbf{z}\sim q(\cdot|\mathbold{\lambda}_{k,n})}
    \left[
    \eta\texttt{diag}\left[\nabla_{\textbf{z}}^{2}f(\textbf{z}) + \nabla_{\textbf{z}}^{2}\log q_{n}(\textbf{z})\right] -2 \eta\textbf{w}_{k}(\textbf{z})(\textbf{s}_{k,n}\odot\textbf{s}_{k,n})\odot\nabla_{\textbf{s}_{k}}\log q(\textbf{z}|\mathbold{\lambda}_{k,n})
    \right]
\end{align*}
Lastly, using \eqref{eqn:gradmuk}, we can derive the full update for $\mathbold{\mu}_{k,n}$ in \eqref{eqn:mukupdate} as follows:
\begin{align*}
    \mathbold{\mu}_{k,n+1} =& \mathbold{\mu}_{k,n} - \eta \left[ \nabla_{\mathbold{\mu}_{k}}G\right]\oslash \textbf{s}_{k,n+1}\\
    =& \mathbold{\mu}_{k,n} - \eta \mathbb{E}_{\textbf{z}\sim q(\cdot|\mathbold{\lambda}_{k,n})}\left[\nabla_{\textbf{z}}f(\textbf{z}) + \nabla_{\textbf{z}}\log q_{n}(\textbf{z}) + \textbf{w}_{n}(\textbf{z})\nabla_{\mathbold{\mu}_{k}}\log q(\textbf{z}|\mathbold{\lambda}_{k,n})\right]\oslash \textbf{s}_{k,n+1}.
\end{align*}
By drawing a sample $\textbf{z}$ from $q(\textbf{z}|\mathbold{\lambda}_{k,n})$, we have the update of NGFlow-VI ((20) in the main text).

\vfill